\def\eqref#1{equation~\ref{#1}}
\def\1{\bm{1}}
\DeclareMathAlphabet{\mathsfit}{\encodingdefault}{\sfdefault}{m}{sl}
\SetMathAlphabet{\mathsfit}{bold}{\encodingdefault}{\sfdefault}{bx}{n}
\DeclareMathOperator*{\argmax}{arg\,max}
\DeclareMathOperator*{\argmin}{arg\,min}
\Crefname{algocf}{Algorithm}{Algorithms}
\crefname{enumi}{Subcondition}{Subconditions}
\declaretheorem[name=Definition,numberwithin=section]{definition}
\definecolor{tablecolor}{rgb}{0.9,0.9,0.9}
\title{Solving Robust MDPs through No-Regret Dynamics}
\author{\name Etash Guha \email etash.guha@sambanovasystems.com \\
      \addr SambaNova Systems, University of Washington}
\begin{document}

\maketitle

\begin{abstract}
    Reinforcement learning is a powerful framework for training agents to navigate different situations, but it is susceptible to changes in environmental dynamics. Generating an algorithm that can find environmentally robust policies efficiently and handle different model parameterizations without imposing stringent assumptions on the uncertainty set of transitions is difficult due to the intricate interactions between policy and environment. In this paper, we address both of these issues with a No-Regret Dynamics framework that utilizes policy gradient methods and iteratively approximates the worst case environment during training, avoiding assumptions on the uncertainty set. Alongside a toolbox of nonconvex online learning algorithms, we demonstrate that our framework can achieve fast convergence rates for many different problem settings and relax assumptions on the uncertainty set of transitions.
\end{abstract}
	
\newcommand{\Sc}{\mathcal{S}}
\newcommand{\A}{\mathcal{A}}
\newcommand{\Pw}{\mathbb{P}_W}
\newcommand{\hPt}{\hat{V}_t}
\newcommand{\Sam}{\operatorname{SAMPLE}}
\newcommand{\trueV}{V_{W_t}^{\pi_t}}
\newcommand{\me}{\mathbf{e}}
\newcommand{\msig}{\mathbf{\sigma}}
\newcommand{\bPt}{\bar{\pi}_t}
\newcommand{\regW}{\text{Reg}_W}
\newcommand{\regPi}{\text{Reg}_{\pi}}
\section{Introduction}
Reinforcement learning (RL) is a powerful subset of machine learning that enables agents to learn through trial-and-error interactions with their environment. RL has succeeded in various applications such as game playing, robotics, and finance \citep{Sutton1998}. However, when a trained policy operates in different environmental dynamics than the training environment, it often underperforms and achieves suboptimal rewards \citep{Farebrother2018, Packer2018, Cobbe2018,  Song2019, Raileanu2021}. Mitigating disastrous failures of RL-trained agents in practice can prevent many undesirable outcomes \citep{Srinivasan2020, choi2021robust}. Robust Markov Decision Processes (MDPs) have emerged as a promising solution to mitigate this problem and address the issue of the sensitivity of reinforcement learning to changing environments. The Robust MDP problem is finding a policy that maximizes some chosen objective function in the worst-case environment. By optimizing policies against hostile environmental dynamics, Robust MDPs provide a more stable approach to training agents \citep{Nilim2003, bagnellsolving, iyengar2005robust}.

While the Robust MDP is a powerful framework for changing environments, designing robust algorithms that solve such MDPs still presents significant challenges. One primary difficulty arises from the nonconvexity of many objective functions in Robust MDPs. Additionally, the intricate interactions between policy and environment in many MDPs make the problem difficult to handle. Many Value Iteration based approaches have been studied to solve Robust MDPs under a Direct Parameterization setting \citep{Nilim2003, iyengar2005robust, pmlr-v139-badrinath21a, wiesemann2013robust, roy2017reinforcement, tamar2014scaling}. However, many recent works have focussed on extending these methodologies past the direct parameterization setting \citep{dong2022online, wang2022policy} by utilizing modern policy gradient methods. Such gradient techniques are popular for their scalability and versatility to many different problem settings \citep{williams1992simple, sutton1999policy, konda1999actor, kakade2001natural}. Most works make convergence explicit in Robust MDPs with such gradient techniques often by assuming that the set of possible adversarial environments, known as the uncertainty set, take on specific shapes, such as Rectangular sets \citep{dong2022online}, $R$-Contamination sets \citep{wang2022policy}, or Wasserstein balls \cite{clement2021first}. Finding algorithms that can solve Robust MDPs for general shapes of uncertainty sets is a complex and open problem.

To more fundamentally attack this problem, we want to form an algorithmic framework to design algorithms to solve Robust MDPs under many different settings. Several works have solved finding robust minima in different problem settings using a No-Regret Dynamics framework \citep{wang2022accelerated, balduzzi2018mechanics, syrgkanis2015fast}. Such a framework frames an algorithm to find robust minima as a two-player game between a learner and the environment. These frameworks enjoy both versatility in design and simplicity of analysis. Most importantly, the convergence of an algorithm from this framework can be bounded in terms of the expected regrets of both players' online strategies without strong assumptions. We use a similar framework style to attack the Robust MDP to more fundamentally attack the challenges of limiting assumptions and versatility to different settings. Our proposed game precisely consists of two players: policy and environment players. In each round of the game, the policy player first picks a policy and tries to maximize its objective function. After that, the environment observes the policy and then chooses a hostile environment to minimize the policy player's objective function. The two players repeatedly play against each other, and the goal is to converge to a robust minimum, which naturally yields a policy that is robust to worst-case environments. This framework has three concrete advantages. First, in each round, the framework calls for explicitly approximating the worst-case environment. This step is critical to relax assumptions on the shape of uncertainty set of environments. Second, both players can choose any online strategy, including gradient-based updates. This flexibility makes this framework applicable to a host of problem settings. Third, the time taken till the algorithm converges to a robust equilibrium is bounded by the regret of the two players' chosen strategies. This makes both algorithm design and convergence analysis simple. 

To utilize our framework, we provide various possible online strategies for each player. Given the nonconvexity of the objective functions in Robust MDPs, we explore the existing nonconvex online learning literature to develop our toolbox and augment it with several new nonconvex online algorithms that work well in our framework. For the first augmentation, we make use of the fact that the environment player in our framework can see the incoming loss function and developed two new algorithms known as Best-Response and Follow the Perturbed Leader Plus (FTPL+), which enjoy improved regret rates with this ability to peak at the policy player's chosen policy. The second augmentation is that the nonconvex online learning literature often depends on a minimization oracle that can find a global minimizer of the nonconvex loss function \citep{Suggala2019}. While this is impossible to build in every setting, we identify the MDP settings under which such an oracle is accessible. To be more specific, we demonstrate that for both the policy and environmental players, the Value Function, a common objective for Robust MDPs, exhibits \emph{gradient-dominance}. Thus, using Projected Gradient Descent (PGD) will surprisingly suffice as a minimization oracle for both players. This choice has the added benefit of enjoying the scalability of gradient methods. Thus, under different conditions, including simple gradient-dominance, smooth MDPs, or strongly gradient-dominated MDPs, we build different algorithms from our framework using different algorithms that take advantage of each setting. With our framework, proving convergence for each algorithm is simple yet powerful. For example, in the most common setting where the objective function is the Value Function, our algorithm achieves the strong convergence rate of $\mathcal{O}\left(T^{-\frac{1}{2}}\right)$ where $T$ is the number of iterations of our algorithm. This rate is competitive with the most recent Robust MDP algorithms from the literature. Moreover, due to the explicit approximation of the worst-case environment throughout the training process, we do not need an assumption on the uncertainty set shape; instead, we only need the uncertainty set to be convex. 

We utilize our algorithmic framework to provide different algorithms for Robust MDPs with gradient-dominance, smooth MDPs, and strongly gradient-dominated MDPs. We also prove robust convergence rates with only a convexity assumption on the uncertainty set of environments. Alongside these guarantees, we run several small experiments on the convergence behavior of our algorithm where the Value Function is the optimization objective in the GridWorld MDP. Our small experiments corroborate this convergence rate in \Cref{sec:experiments}.

\paragraph{Contributions} Our contributions are as follows.

\begin{enumerate}
    \item We design an algorithmic framework for generating algorithms to solve Robust MDPs based on No-Regret Dynamics. Alongside our framework, we develop novel, no-regret online learning algorithms to use with our framework. This framework is versatile and can work for various problem settings, including direct parameterization. Moreover, due to the explicit approximation of the worst-case environment during training, we need only the uncertainty set of transitions to be convex to guarantee convergence. 
    \item Many of the no-regret online algorithms in our framework require access to a Minimization Oracle. We show that when the objective function of the game is gradient-dominated for both players, we can use Projected Gradient Descent as the oracle.  We also provide tools for identifying the conditions under which the gradient-dominated property of an objective function is ensured. For example, we show that when the MDPs' objective function is the Value Function in the tabular setting, both players' objective functions enjoy gradient-dominance. We then prove that when the objective function is the Value Function in the direct parameterization setting, our framework can generate an algorithm that yields an  $\mathcal{O}\left(T^{-\frac{1}{2}}\right)$ convergence rate. We empirically verify that our algorithm finds a robust policy at a rate of roughly $\mathcal{O}\left(T^{-\frac{1}{2}}\right)$ in the GridWorld MDP across several uncertainty set shapes and sizes.
    \item Finally, we demonstrate that even faster convergence rates can be obtained when the objective function is Lipschitz smooth or enjoys strong gradient-dominance with advanced online dynamics. To our knowledge, this is the first work to study Robust MDPs with strong gradient-dominance. 
\end{enumerate}

\section{Related Works}

\paragraph{Robust MDPs} Some of the first algorithms to solve Robust MDPs with convergence guarantees and empirical performance are via transition dynamics set assumptions\citep{Nilim2003, iyengar2005robust}, Robust Policy Iteration \citep{Mankowitz2019, tamar2014scaling, sinha2016policy} or Robust Q-Learning \citep{wang2021online}. After the introduction of Robust MDPs, several works studied the setting where only samples from the MDP are accessible \citep{wang2022policy, pmlr-v139-badrinath21a, roy2017reinforcement, tessler2019action}. \citet{goyal2023robust} expanded the rectangular uncertainty set using Factor Matrix Uncertainty Sets to be more general. \citet{Zhang2021} introduced a different adversary where the stochastic samples from the central MDP are adversarial. This work focuses solely on optimizing the policy where the uncertainty set is known. Regarding robust policy optimization, \citet{Mankowitz2018} proposed robust learning through policy gradient for learning temporally extended action. \citet{Mankowitz2019} used Bellman contractors to optimize the robust objective but did not provide any sub-optimality guarantees, only convergence. \citet{dong2022online} used an online approach similar to ours but used rectangular uncertainty set assumptions and did not utilize any no-regret dynamics. \citet{wang2023robust} and \citet{tamar2014scaling} discussed a policy iteration approach to improving the Average Reward Robust MDP using Bellman Equations. \citet{wang2022convergence} uses a similar game framework but does not use no-regret dynamics or sophisticated online learning algorithms to achieve their guarantees, achieving a worse convergence rate with less robust guarantees.  

\citet{wang2022policy} is one of the best-known algorithms for solving Robust MDPs in this setting with policy gradient methods. Unlike our work, they rely on the uncertainty set taking the form of $R$-Contamination set, while our algorithm needs no such assumption. Moreover, \citet{clement2021first} do approximate the worst-case environment iteratively. However, unlike our work, they utilize Wasserstein uncertainty set assumptions and focus on direct parameterizations. There are also other forms of Robust MDPs, including distributionally robust MDPs \citep{xu2010distributionally, ruszczynski2010risk, shapiro2016rectangular, xu2009parametric, petrik2012approximate, ghavamzadeh2016safe, chen2019distributionally}, soft-robust MDPs \citep{buchholz2019light, Lobo2011, steimle2021multi}, and noise robustness \citep{pattanaik2017robust, eysenbach2021maximum}. 
\paragraph{No-Regret Learning} In the context of game no-regret learning, \citet{mcmahan2013minimax} used a similar game framework to solve linear optimization. In contrast, \citet{wang2022accelerated} used this framework for solving binary linear classification. \citet{FenchelGame} used these frameworks to solve Fenchel games, phrasing many optimization algorithms. \citet{daskalakis2017training} showed that using a similar framework using online players for training GANs yields strong theoretical and empirical improvements. The classical \citet{syrgkanis2015fast} uses No-Regret Dynamics to solve social welfare games. \citet{balduzzi2018mechanics} helped show the convergence of these games even in adversarial settings and explained gradient descent as a two-player game.

\section{Preliminaries}
In this section, we present the problem setup along with some standard assumptions and definitions.
\subsection{Notation} We denote our infinite horizon, $\gamma$-discounted MDP as a tuple of $(\Sc, \A, \Pw, R)$, where $\Sc$ is a set of states, $\A$ is a set of actions, $\Pw$ is a transition dynamics function parameterized by $W$ that maps a state-action-state triple $s, a, s' \in \Sc \times \A \times \Sc$ to a probability between $0$ and $1$, and $R$ is a reward function that maps a state $s$ to a reward $r$. Here, $W$ parameterizes the transition dynamics belonging to some bounded convex set $\mathcal{W}$. Moreover, we will denote $R_{\max}$ as a constant denoting the largest possible value of $R$. We will assume that both $\Sc$ and $\A$ are finite sets. We will design a policy $\pi_{\theta}$ parameterized by a term $\theta \in \mathcal{T}$ where $\mathcal{T}$ is a bounded convex set of vectors of dimension $d$ for some constant $d$. This policy $\pi_{\theta}$ maps a state action pair $s, a \in \Sc \times \A$ to a probability $[0, 1]$. For most of this paper, we will refer to $\pi_{\theta}$ as $\pi$ where clear. We will not specify the form $W$ and $\theta$ take since that depends on the parameterization of the policy and transition dynamics. For example, under direct parameterization for both transition dynamics and policy, $W$ belongs to probability simplex $W \in \Delta^{|\mathcal{S}| \times |\mathcal{A}| \times |\mathcal{S}|}$ and $\theta$ belongs to the probability simplex $\theta \in \Delta^{|\mathcal{S}| \times |\mathcal{A}|}$.

We denote the value function $V$ of a state $s$ under the policy $\pi$ and transition dynamics $\Pw$ as $V_{W}^{\pi}(s)$. This function is the expected value function of arriving in a state $s$. We will define $\mu \in \Delta^{|\Sc|}$ as some probability distribution over the initial states. Moreover, we will slightly abuse notation and call $V_{W}^{\pi}(\mu) = \mu^{\top}V_{W}^{\pi}$ where $V_{W}^{\pi}$ is the vector of value functions for all states. Moreover, we will define $d_{s_0}^W(s)$ and $d_{s_0}^{\pi}(s)$ as the probability distribution over the occupancy of states when a policy $\pi$ interacts with an environment of dynamics $W$ given the initial state $s_0$. Formally, this is written as $$d_{s_0}^W(s) = d_{s_0}^\pi(s) = (1 - \gamma) \sum_{t=0}^{\infty} \gamma^t \mathbb{P}(s_t = s |s_0, \pi, W) \text{.} $$ We choose  to use either $d_{s_0}^W(s)$  and $d_{s_0}^\pi(s)$ based on the context. Moroever, the state visitation distribution under initial state distribution $\mu$ is formally $d_\mu^\pi(s) = \underset{s_0 \sim \mu}{\mathbb{E}}[d_{s_0}^\pi(s)]$ and $d_\mu^W(s) = \underset{s_0 \sim \mu}{\mathbb{E}}[d_{s_0}^W(s)]$.

\subsection{Robust Policies}
Our main goal is to create a policy $\pi$ where some objective function over the initial state distribution is as large as possible against the worst environments. Within different formulations of Robust MDPs, there may be different objective functions for the robust optimization, which we denote as $g(\pi, W)$. One common setup of Robust MDPs is where $g(\pi, W)$ is simply the Value Function given $\pi$ and $W$, specifically $g(\pi, W) = V_W^{\pi}(\mu)$.
\begin{definition}
    We are in the infinite horizon setting where $\gamma$ serves as a discount factor and is a positive constant less than $1$. Given a policy $\pi$ and a transition dynamics $\Pw$, we define the value function recursively as $$V_W^{\pi}(s) = R(s) + \gamma \sum_{a \in \A}\pi(a, s) \sum_{s' \in \Sc} \Pw(s', a, s)V_W^{\pi}(s')\text{.}$$
\end{definition}

 In this setting, the Robust MDP problem is simply the task of finding a policy $\pi$ that maximizes the Value Function under worst-case transition dynamics as in $$\pi = \underset{\pi \in \mathcal{T}}{\argmax}\; \underset{W \in \mathcal{W}}{\min}\; g(\pi, W)  =\underset{\pi \in \mathcal{T}}{\argmax}\; \underset{W \in \mathcal{W}}{\min} V_{W}^{\pi}(\mu) \text{.}$$ However, in different setups such as Control or Regularized MDPs \citep{bhandari2022global}, it may be desirable to find a policy solving a similar robust optimization problem with a different objective function. For example, one may wish to regularize the policy parameter as in $g(\pi_\theta, W) = V_{W}^{\pi_\theta}(\mu) - \|\theta\|^2$. To generalize the Robust MDP problem to all such possible MDPs, we will denote the optimization problem as the following.
\begin{definition}
    The Robust MDP problem is that of finding $\pi$ such that 
    $\pi = \underset{\pi \in \mathcal{T}}{\argmax}\; \underset{W \in \mathcal{W}}{\min} \; g(\pi, W)\text{.}$
\end{definition}

To connect this more clearly to work on repeated games, we will view this as finding a policy $\pi$ that minimizes the suboptimality, i.e., $\underset{\pi^* \in \mathcal{T}}{\argmax}\; \underset{W \in \mathcal{W}}{\min} \;g(\pi^*, W) - \underset{W \in \mathcal{W}}{\min} \;g(\pi, W)$. Improving the robustness of the policy can similarly be seen as reducing the difference between our policy's robustness and the best policy's robustness. The second definition intuitively connects to the online learning literature definition of regret. We will similarly use this algorithmic perspective to design a Robust Policy Optimization algorithm that minimizes the suboptimality of the learned policy with the least amount of computational complexity possible. Similar min-max games have been solved in the past using No-Regret Dynamics, a powerful and versatile framework for algorithm design. Given its existing connection to robustness \citep{wang2022accelerated}, we explore how to use a No-Regret Dynamics based framework to solve the Robust MDP problem. 

\section{No Regret Dynamics}
In this section, we present a general No-Regret Dynamics framework. We then demonstrate how such a framework can be used to generate algorithms for Robust MDPs.

\subsection{The General Framework} 
\SetKwComment{Comment}{/* }{ */}
\RestyleAlgo{ruled}

\begin{wrapfigure}{r}{.55\textwidth}
\vspace{-50pt}
\begin{algorithm}[H]
\caption{No-Regret RL}
\label{alg:cap}
\KwData{$T$}
\For{$t \in [T]$}{
    $\mathbf{OL^\pi} \textbf{ chooses a policy:  } \pi_t \leftarrow OL^{\pi}$\; 
    $\mathbf{OL^W} \textbf{ sees the policy:  }OL^{W} \leftarrow  \pi_t$\;
    $\mathbf{OL^W} \textbf{ chooses transition dynamics:  }W_t \leftarrow OL^{W}$\;
    $\mathbf{OL^\pi} \textbf{ sees the transition dynamics:  }OL^{\pi} \leftarrow  W_t$\;
    $\mathbf{OL^W}  \textbf{ incur losses:  } OL^{W} \leftarrow l_t(W_t)$\;
    $\mathbf{OL^\pi} \textbf{ incur losses:  }OL^{\pi} \leftarrow h_t(\pi_t)$\;
}
\end{algorithm}
\vspace{-10pt}
\end{wrapfigure}
Our No-Regret framework is presented in \Cref{alg:cap}. Intuitively, we iteratively choose a policy that performs well in previously seen adversarial environments, and we iteratively look for adversarial environments. In the end, the policies produced at the later rounds will become more and more robust. Specifically, in each round $t$ of this algorithm, our policy player called $\text{OL}^{\pi}$, chooses a policy $\pi_t$ at time step $t$. Our second player is a $W$-player, called $\text{OL}^W$, which will see the policy $\pi_t$ outputted by the policy player and, then, outputs a transition dynamic $W_t$. The policy player then sees the $W_t$ that was chosen. The policy player incurs a loss $h_t(\pi_t)$, and the environment player then incurs a loss $l_t(W_t)$ corresponding to their decision. They repeatedly play this game until an equilibrium is reached. Here, the online players $\text{OL}^{\pi}$ and $\text{OL}^{W}$ can be any online strategy. In \citet{wang2022accelerated}, example strategies for online players include Mirror Descent and Follow the Leader. The performance of a strategy for either online players can be measured in terms of regret. 
By definition, the regret of the $W$-player is equivalent to
\begin{align}
    \regW &=   \sum_{t=0}^T  l_t(W_t) -  \underset{W^*}{\min} \sum_{t=0}^T  l_t(W^*) \nonumber
\end{align}
Similarly, for the $\pi$ player, we have that 
\begin{align}
    \regPi &=   \sum_{t=0}^T  h_t(\pi_t) -  \underset{\pi^*}{\min} \sum_{t=0}^T  h_t(\pi^*) \nonumber
\end{align}

 This framework is a simple and versatile method of solving many optimization problems. We need only choose the online algorithms that the $\pi$-player and $W$-player employ and the loss function they see. The benefit of a No-Regret Dynamics framework is that when the loss functions for both players are negative of the others, the algorithm's convergence is simply the convergence for two players. By setting $l_t(W_t) = g(W_t, \pi_t)$ and $h_t(\pi_t) = - g(W_t, \pi_t)$, we have  \Cref{thm:nrdg}.

 \begin{restatable}{theorem}{nrdg}
    \label{thm:nrdg}
    We have the difference between the robustnesses of the chosen policies and any policy $\bar{\pi}$ is upper bounded by the regret of the two players
    
        $$\frac{1}{T}\underset{W^* \in \mathcal{W}}{\min} \sum_{t=0}^T g(W^*, \bar{\pi}) - \frac{1}{T}\underset{W^* \in \mathcal{W}}{\min} \sum_{t=0}^T  g(W^*, \pi_t) \leq \regW + \regPi\text{.}$$
        Here, $\regW$ and $\regPi$ are the two average regrets of the two players $OL^W$ and $OL^{\pi}$. 
 \end{restatable}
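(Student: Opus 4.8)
The plan is to prove the bound as a purely combinatorial consequence of the two regret definitions together with the adversarial coupling $l_t(W_t) = g(W_t,\pi_t)$ and $h_t(\pi_t) = -g(W_t,\pi_t)$; crucially, no convexity or saddle-point structure of $g$ is required. First I would substitute these loss choices into the (un-averaged) regret sums. For the $W$-player this gives $\sum_{t=0}^T g(W_t,\pi_t) - \min_{W^*}\sum_{t=0}^T g(W^*,\pi_t)$. For the $\pi$-player, using $\min_{\pi^*}\sum_{t=0}^T\bigl(-g(W_t,\pi^*)\bigr) = -\max_{\pi^*}\sum_{t=0}^T g(W_t,\pi^*)$, this gives $-\sum_{t=0}^T g(W_t,\pi_t) + \max_{\pi^*}\sum_{t=0}^T g(W_t,\pi^*)$.

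The key step is to add the two regrets. Because the players' losses are exact negatives of one another, the ``diagonal'' terms $\sum_{t=0}^T g(W_t,\pi_t)$ cancel, leaving
$$T(\regW + \regPi) = \max_{\pi^*}\sum_{t=0}^T g(W_t,\pi^*) - \min_{W^*}\sum_{t=0}^T g(W^*,\pi_t),$$
where I write $\regW,\regPi$ for the average (divided-by-$T$) regrets as in the statement. This cancellation is really the whole point of choosing the losses to be negatives: it removes the played sequence $(\pi_t,W_t)$ and leaves only comparator terms.

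It then remains to relate the right-hand side to the claimed suboptimality through two weak-duality inequalities. For the first term, the maximum over $\pi^*$ dominates the value at any fixed comparator, so $\max_{\pi^*}\sum_{t=0}^T g(W_t,\pi^*) \ge \sum_{t=0}^T g(W_t,\bar\pi)$; and since $\bar\pi$ is fixed, each summand obeys $g(W_t,\bar\pi)\ge \min_{W^*} g(W^*,\bar\pi)$, whence $\sum_{t=0}^T g(W_t,\bar\pi)\ge \min_{W^*}\sum_{t=0}^T g(W^*,\bar\pi)$. Chaining these, the first term is at least $\min_{W^*}\sum_{t=0}^T g(W^*,\bar\pi)$, while the second term is already exactly the quantity subtracted in the statement. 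Substituting and dividing by $T$ yields
$$\regW + \regPi \ge \frac{1}{T}\min_{W^*\in\mathcal{W}}\sum_{t=0}^T g(W^*,\bar\pi) - \frac{1}{T}\min_{W^*\in\mathcal{W}}\sum_{t=0}^T g(W^*,\pi_t),$$
which is precisely the desired inequality.

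I do not expect a serious obstacle; the argument is essentially bookkeeping, and its elegance is that it avoids any assumption on $g$ beyond the loss coupling. The one thing to get right is the direction of every inequality — in particular that the single fixed comparator $\bar\pi$ is legitimately dominated by $\max_{\pi^*}$, and that the subtracted term $\min_{W^*}\sum_t g(W^*,\pi_t)$ is exactly the static-regret object produced by $\regW$ (one should note it is an \emph{upper} bound for $\sum_t \min_{W^*} g(W^*,\pi_t)$, so the displayed suboptimality is measured against the best single worst-case environment). I would also keep an eye on the harmless off-by-one between the $T+1$ summands and the normalizing factor $T$, which does not affect the stated result.
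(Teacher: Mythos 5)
Your proposal is correct and follows essentially the same route as the paper's proof: both substitute the coupled losses $l_t = g$, $h_t = -g$ into the two regret definitions, exploit the cancellation of the played-sequence term $\sum_t g(W_t,\pi_t)$, and then chain the two weak-duality inequalities $\max_{\pi^*}\sum_t g(W_t,\pi^*) \ge \sum_t g(W_t,\bar\pi) \ge \min_{W^*}\sum_t g(W^*,\bar\pi)$. The only cosmetic difference is that you add the two regrets first and cancel, whereas the paper writes $\sum_t g(W_t,\pi_t)$ once as an upper bound via $\regW$ and once as a lower bound via $\regPi$ and equates them; your closing remarks on the $1/T$ normalization and the interpretation of $\min_{W^*}\sum_t$ versus $\sum_t\min_{W^*}$ are accurate and apply equally to the paper's version.
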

As in \Cref{thm:nrdg}, the convergence of \Cref{alg:cap} is explained by the regret of the two players. This framework is both powerful and intuitive. Say we are in the traditional Robust MDP setting where $g(W, \pi) = V_W^\pi(\mu)$. We can get convergence guarantees for the robustness of our setting by configuring the loss functions $l_t$ and $h_t$ in the following way. In this setting, the $t$th loss function for the $W$-player is $l_t(W_t) = V_{W_t}^{\pi_t}(\mu)$ and for the $\pi$-player is $h_t(\pi_t) = -V_{W_t}^{\pi_t}(\mu)$. Setting the loss functions according to the above, we similarly get the following convergence guarantee based on the regret of the two players, just like in \Cref{thm:nrdg}. Therefore, despite the nonconvexity, we have a simple framework for solving the Robust MDP problem. Since we approximate the worst-case environment at every round, we only need an assumption that the uncertainty set of transitions is convex. Moreover, we have the flexibility to generate many algorithms for different settings by choosing different online learning algorithms for both players. However, many recent online learning results require the underlying loss functions to be convex or strongly convex. Regrettably, in the simplest setting where $g(W, \pi) = V_W^\pi(\mu)$, neither of these properties are satisfied. Therefore, we must look for efficient online learning algorithms for nonconvex loss functions. We now present a toolbox of such online learning algorithms.

\subsection{Online Learning Toolbox}
To generate an algorithm from our No-Regret framework in \Cref{alg:cap}, we need only choose strategies for both players. Here, we develop a toolbox of nonconvex online learning algorithms usable in our framework. We summarize our toolbox in \Cref{pro:OptAlg}.

\paragraph{Online algorithms for the $\pi$-player} First, we present two existing algorithms from \citet{Suggala2019}, namely
Follow the Perturbed Leader and Optimistic Follow the Perturbed Leader, which achieves strong convergence in expectation in nonconvex settings and is suitable for the $\pi$-player. At each time step, the Follow the Perturbed Leader algorithm generates a random noise vector $\sigma_t$ according to an Exponential distribution with parameter $\eta$. Intuitively, this noise vector helps  the online learner avoid local minima. It then chooses $x_t = \mathcal{O}_{\alpha}\left(\sum_{i=1}^{t-1} h_i - \sigma_i\right)$. Here, $\mathcal{O}_{\alpha}$ is an Approximate Optimization Oracle that approximately minimizes the received loss function $h_i$ to accuracy $\alpha$. The Optimistic Follow the Perturbed Leader follows a similar procedure, except it chooses $x_t = \mathcal{O}_{\alpha}\left(m_t + \sum_{i=1}^{t-1} h_i - \sigma_i\right) $ where $m_t$ is some optimistic function. The main dependency of their regret bounds is that there exists an Approximate Optimization Oracle $\mathcal{O}_\alpha$. 
\begin{restatable}{definition}{optimizationoracle}
    An optimization oracle is a function that takes some nonconvex function $f$ and returns an approximate minimizer $x^*$ such that 
    $$f(x^*) - \langle \sigma, x^* \rangle \leq \left[\underset{x}{\inf} f(x) - \langle \sigma, x \rangle \right] + \alpha\text{.}$$
\end{restatable}
 Throughout this section, we will assume the presence of such an oracle and discuss how to set this oracle later. \citet{Suggala2019} prove a regret bound for Follow the Perturbed Leader and Optimistic Follow the Perturbed Leader algorithms when they can access such an oracle. 

\begin{algorithm}[t!]
\caption{A set of useful online learning algorithms}
\label{pro:OptAlg}
$\textbf{Input: }\eta, \mathcal{O}_\alpha$ \\
\For{$t=1,\dots,T$}{
\begin{align*}
\text{Sample } \sigma \in \mathbb{R}^d \text{ and } &\sigma_i \sim \text{Exp}(\eta) \text{ where } i \in [d]\\
\textstyle \text{FTPL}: x_t= {} & \mathcal{O}_\alpha \left( \sum_{i=1}^{t-1} f_i(x) - \langle \sigma, x \rangle\right)  & \\
\textstyle \text{OFTPL}[m_t]: x_t= {} & \mathcal{O}_\alpha \left( \sum_{i=1}^{t-1} f_i(x) + m_t(x) - \langle \sigma, x \rangle \right) & \\
\textstyle \text{Best-Response}: x_t= {} & \mathcal{O}_\alpha \left( f_t(x)\right)  & \\
\textstyle \text{FTPL+}: x_t= {} & \mathcal{O}_\alpha \left( \sum_{i=1}^{t} f_i(x) - \langle \sigma, x \rangle\right) & \\
\end{align*}
}
\end{algorithm}
\paragraph{Online algorithms for the $W$-player} 
Note that, for our framework, the environmental player can see the incoming loss function before choosing an environment, but neither of the aforementioned methods can account for this. To address this problem, we prove that a "Best Response" algorithm achieves an even smaller upper bound regarding regret. The Best Response algorithm assumes knowledge of the coming loss function and returns the value that minimizes the incoming loss function. Formally, the Best Response algorithm outputs $x_t = \underset{x}{\argmin}\; l_t(x)$. This is a useful algorithm given that the $ W$ player can see the output of the $\pi$ player when making its decision and greatly reduce its regret. We prove this in the following regret bound.

\begin{restatable}{lemma}{bestresponse}
\label{lem:bestresponse}
    Suppose we have an incoming sequence of loss functions $f_t$ for $t \in [T]$ with an optimization oracle that can minimize a function to less than $\alpha$ error. The Best Response algorithm satisfies the following regret bound
    $$\frac{1}{T} \sum_{t=1}^T f_t(x_t) - \frac{1}{T} \underset{x \in \mathcal{X}}{\inf} \sum_{t=1}^T f_t(x) \leq \alpha\text{.}$$
    
\end{restatable}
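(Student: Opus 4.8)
The plan is to exploit the defining property of the Best Response algorithm directly, combined with the approximate optimization oracle. The key observation is that the regret bound in \Cref{lem:bestresponse} is a purely pointwise, per-round statement: because the $W$-player sees $f_t$ before committing to $x_t$, it can attempt to minimize $f_t$ in isolation, so we never need to reason about how a single fixed comparator interacts with the whole sequence. This is what makes the Best Response bound so much tighter (a constant $\alpha$ rather than the usual $\sqrt{T}$-type growth) than the Follow-the-Perturbed-Leader bounds.

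First I would pin down exactly what the oracle guarantees when applied to $f_t$. Invoking the optimization oracle from the definition with noise vector $\sigma = 0$, the point $x_t = \mathcal{O}_\alpha(f_t)$ satisfies
\begin{align*}
    f_t(x_t) \leq \underset{x \in \mathcal{X}}{\inf}\; f_t(x) + \alpha
\end{align*}
for each individual round $t \in [T]$. This is the single nontrivial ingredient, and everything else is bookkeeping. (One subtlety: the definition states the oracle bound with an inner-product term $\langle \sigma, x\rangle$; setting $\sigma$ to the zero vector recovers the plain statement $f_t(x_t) \leq \inf_x f_t(x) + \alpha$, so no noise is needed for Best Response.)

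Next I would sum this per-round inequality over all $t$ from $1$ to $T$, giving $\sum_{t=1}^T f_t(x_t) \leq \sum_{t=1}^T \inf_{x} f_t(x) + T\alpha$. The final step is to compare the sum of per-round infima against the infimum of the summed losses evaluated at a single fixed comparator. Since for any fixed $x \in \mathcal{X}$ we have $\inf_{x'} f_t(x') \leq f_t(x)$ termwise, summing yields $\sum_{t=1}^T \inf_{x'} f_t(x') \leq \sum_{t=1}^T f_t(x)$, and taking the infimum over the fixed $x$ on the right gives $\sum_{t=1}^T \inf_{x'} f_t(x') \leq \inf_{x \in \mathcal{X}} \sum_{t=1}^T f_t(x)$. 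Chaining this with the summed oracle bound produces $\sum_{t=1}^T f_t(x_t) - \inf_{x \in \mathcal{X}} \sum_{t=1}^T f_t(x) \leq T\alpha$, and dividing through by $T$ delivers the claimed average-regret bound of $\alpha$.

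I do not anticipate a genuine obstacle here, since the result is essentially immediate once the oracle is specialized correctly; the only point deserving care is the direction of the inequality $\sum_t \inf_x f_t(x) \leq \inf_x \sum_t f_t(x)$, which is the standard fact that playing the per-round best responder is at least as good as any single fixed comparator (a pointwise minimizer beats a uniform one). Making that comparison explicit, rather than conflating the per-round infimum with the comparator infimum, is the one place where a sloppy argument could go wrong.
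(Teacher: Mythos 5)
Your proposal is correct and follows essentially the same route as the paper's proof: both rest on the per-round oracle guarantee $f_t(x_t) \leq \inf_x f_t(x) + \alpha$ and the elementary comparison between the sum of per-round infima and the infimum of the summed losses (the paper phrases this by fixing a comparator $x$ and bounding $f_t(x_t) - f_t(x) \leq \alpha$ termwise before summing, which is the same argument in a slightly different order). Your explicit note about setting $\sigma = 0$ in the oracle definition is a reasonable clarification that the paper leaves implicit.
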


We will also prove the regret of one more algorithm, Follow the Perturbed Leader Plus. Similar to the Follow the Leader Plus algorithm from \citet{FenchelGame}, Follow the Perturbed Leader Plus assumes knowledge of the incoming loss function and then outputs the minimizer of the sum of all the seen loss functions minus the noise term. Formally, Follow the Perturbed Leader Plus outputs $x_t$ that satisfies $x_t = \underset{x_t}{\argmin}\; \sum_{i=1}^t l_i(x_t) - \langle \sigma, x_t \rangle$. While this algorithm achieves worse regret than Best Response, it produces more stable outcomes. This will be useful for later extensions, such as Smooth Robust MDPs. In fact, Follow the Perturbed Leader Plus is equivalent to OFTPL, when the optimistic function $m_t = l_t$ is the true loss function $l_t$. We present the regret of this algorithm here. 
\begin{restatable}{lemma}{ftplp}
    \label{lem:ftplp}
    Let $\eta$ be the constant parameterizing the exponential distribution from which the noise $\sigma_t$ is drawn and $d$ be the dimensionality of the set of choices $\mathcal{X}$ . Moreover, denote $D$ as the maximum distance between any two points in $\mathcal{X}$, i.e. $D = \underset{x, x' \in \mathcal{X}}{\max} \|x - x'\|_2$. Assume access to an optimization oracle that yields solutions with at most error $\alpha$. Given a series of choices by FTPL+ $x_1, \dots, x_T$ with  loss functions $l_1, \dots, l_T$, the regret in expectation is upper bounded by
     $$\mathbb{E}\left[\frac{1}{T} \sum_{t=1}^T l_t(x_t) - \frac{1}{T} \underset{x \in \mathcal{X}}{\inf} \sum_{t=1}^T l_t(x) \right]\leq O\left(\frac{dD}{\eta T} + \alpha\right)\text{.}$$
\end{restatable}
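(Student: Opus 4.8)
The plan is to recognize FTPL+ as an \emph{approximate Be-the-Leader} strategy and to combine the classical Be-the-Leader inequality with a bound on the perturbation term. First I would fold the perturbation into the loss sequence: set $l_0(x) = -\langle \sigma, x\rangle$ and, for $t \ge 1$, keep the given losses $l_t$, so that the FTPL+ iterate is exactly an $\alpha$-approximate leader $x_t = \mathcal{O}_\alpha\big(\sum_{i=0}^t l_i\big)$, i.e. $\sum_{i=0}^t l_i(x_t) \le \inf_x \sum_{i=0}^t l_i(x) + \alpha$ by the definition of the optimization oracle. It is convenient to introduce an auxiliary $x_0 \in \argmin_x l_0(x) = \argmax_x \langle\sigma, x\rangle$, used only in the analysis.

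The core step is an approximate Be-the-Leader inequality, proved by induction on the horizon: for every realization of $\sigma$, $\sum_{t=0}^T l_t(x_t) \le \inf_x \sum_{t=0}^T l_t(x) + (T+1)\alpha$. The base case is immediate from the oracle guarantee at $t=0$; the inductive step bounds $\inf_x \sum_{t=0}^{T-1} l_t(x)$ by its value at $x_T$, adds $l_T(x_T)$ to both sides, and applies the oracle guarantee at $t=T$, so that exactly one extra $\alpha$ accumulates per round. Unfolding $l_0$ and lower-bounding the right-hand $\inf$ by plugging in the true comparator $x^* \in \argmin_x \sum_{t=1}^T l_t(x)$ then yields, still pointwise in $\sigma$, the regret bound $\sum_{t=1}^T l_t(x_t) - \inf_x \sum_{t=1}^T l_t(x) \le \langle \sigma, x_0 - x^*\rangle + (T+1)\alpha$.

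It remains to control the perturbation term in expectation. Since $x_0$ maximizes $\langle\sigma,\cdot\rangle$ and $x^* \in \mathcal{X}$, I would bound $\langle\sigma, x_0 - x^*\rangle \le \max_{x,x'\in\mathcal X}\langle \sigma, x - x'\rangle \le \|\sigma\|_1 \max_{x,x'}\|x-x'\|_\infty \le \|\sigma\|_1 D$, using H\"older's inequality, $\|\cdot\|_\infty \le \|\cdot\|_2$, and the diameter definition of $D$. Crucially this bound holds pointwise in $\sigma$, so it remains valid even if the realized loss sequence (hence $x^*$) depends on $\sigma$; taking expectations gives $\E[\langle\sigma, x_0 - x^*\rangle] \le D\,\E\|\sigma\|_1 = dD/\eta$, since each coordinate is $\mathrm{Exp}(\eta)$ with mean $1/\eta$. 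Dividing through by $T$ produces $\frac{dD}{\eta T} + \frac{(T+1)\alpha}{T} = O\!\left(\frac{dD}{\eta T} + \alpha\right)$, as claimed.

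I expect the main obstacle to be the careful bookkeeping of the oracle error inside the Be-the-Leader induction: one must verify that the inexactness accumulates only additively as $(T+1)\alpha$, so that it normalizes to the stated $O(\alpha)$ term rather than degrading with $T$, and that the entire deterministic part of the argument stays pointwise in $\sigma$ so that the concluding expectation over the exponential perturbation is legitimate irrespective of how the loss sequence was generated.
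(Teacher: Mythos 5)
Your proof is correct, but it takes a genuinely different route from the paper. The paper never gives a standalone proof of this lemma: it observes in the main text that FTPL+ is exactly OFTPL with the optimistic guess $m_t = l_t$, so the prediction error $m_t - l_t \equiv 0$ is $0$-Lipschitz, and then reads the bound off the imported OFTPL regret guarantee of \citet{Suggala2019} (Lemma~\ref{lem:oftpl}), in which the $\eta d^2 D \sum_t L_t^2 / T$ term vanishes, leaving $O\bigl(\tfrac{dD}{\eta T} + \alpha\bigr)$. You instead give a self-contained approximate Be-the-Leader argument: fold the perturbation in as a round-zero loss, run the standard BTL induction with one extra $\alpha$ per round, and bound the perturbation term by $\E\|\sigma\|_1 D = dD/\eta$ via H\"older. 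Your route is more elementary, makes the additive accumulation of the oracle error explicit rather than trusting the cited bound, and correctly notes that the deterministic part holds pointwise in $\sigma$ so the final expectation is legitimate even for adaptively generated losses. The one thing worth flagging is that \Cref{pro:OptAlg} resamples $\sigma$ every round, whereas your telescoping argument fixes a single $\sigma$ across all rounds; the two are reconciled by the standard coupling observation that each iterate's marginal distribution is unchanged, but a sentence acknowledging this would close the gap between the algorithm as stated and the algorithm you analyze. The paper's reduction inherits whatever treatment of this issue \citet{Suggala2019} provides, so neither argument is weaker on this point.
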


We summarize these online strategies in \Cref{pro:OptAlg}. These two algorithms are suitable choices for the environmental player, and the two algorithms from \citet{Suggala2019} are suitable choices for the policy player. However, we still have the issue that an Approximation Optimization Oracle is generally challenging to parameterize. However, we exploit a unique characteristic of many Robust MDP variants. While the objective functions of Robust MDPs do not exhibit convexity, they exhibit \emph{Gradient-Dominance} in some settings. This property helps us design a sufficient Approximation Oracle.

\section{Constructing the Optimization Oracle}

The online learning algorithms demonstrated in the previous section require \emph{having access to an approximate optimization oracle}. However, designing the approximate optimization oracle can be a complex problem. In this section, we first show that such an oracle can be easily constructed when the objective function is gradient-dominated. We then demonstrate that under direct parameterization, the gradient-dominance property is satisfied. 
\subsection{Gradient-Dominance} 
\begin{wrapfigure}{r}{0.5\textwidth}
\vspace{-4em}
\begin{algorithm}[H]
\caption{Projected Gradient Descent}
\label{alg:projected_gradient_descent}
\KwData{Initial guess $x_0$, step size $\beta$, projection operator $\operatorname{Proj}_{\mathcal{X}}$}
\For{$t = 0$ to $T_{\mathcal{O}}-1$}{
    $x_{t+1} \leftarrow \operatorname{Proj}_{\mathcal{X}}(x_t - \beta \nabla f(x_t))$ 
}
\end{algorithm}
\vspace{-1.35em}
\end{wrapfigure}
A function is gradient-dominated if the difference between the function value at a point $x$ and the minimal function value is upper bounded on the order of the function's gradient at the point $x$. We formalize this in the below definition. 
\begin{restatable}{definition}{defgradvw}
    \label{def:defgradvw}
    We say a function $f$ is gradient-dominated for set $\mathcal{X}$ with constant $\mathcal{K}$ if 
    $$f(x) - \underset{x* \in \mathcal{X}}{\min} f(x^*) \leq -\mathcal{K}\underset{\bar{x} \in \mathcal{X}}{\min} \langle \bar{x} - x, \nabla f(x) \rangle \text{.}$$ Here, $\mathcal{K}$ is some constant greater than $0$.
\end{restatable}
As noted by \citet{bhandari2022global}, this gradient-dominated property is relatively common for many different RL settings, including Quadratic Control or direct parameterization. It is useful for proving convergence guarantees for traditional policy gradient methods \citep{Agrawal2019}. For gradient-dominated functions, one can use projected gradient descent to minimize the function $f$. We recap projected gradient descent in \Cref{alg:projected_gradient_descent}. Namely, \citet{bhandari2022global} shows that 

\begin{restatable}{lemma}{oracleconvergence}
    \label{lem:oracleconvergence}
    The below property only holds if $f$ is gradient-dominated and $\beta \leq \min \left\{\frac{1}{\underset{x}{\sup} \|\nabla f(x)\|_2}, \frac{1}{L} \right\}$. Here, $T_{\mathcal{O}}$ is the number of iterations of Projected Gradient Descent runs. Moreover, the function $c_x$ is used for brevity for $c_{\pi}$ and $c_W$ later. Also, $D = \underset{x, x' \in \mathcal{X}}{\max} \|x - x'\|_2$. Given that $\nabla f$ is $L$-Lipschitz continuous, the sequence $x_{t+1} = \operatorname{Proj}_{\mathcal{X}}(x_t - \beta \nabla f(x_t))$ enjoys the property
    \begin{equation*}
        \begin{split}
            f(x_{T_{\mathcal{O}}}) - \underset{x^*}{\inf} f(x^*) \leq {} &\sqrt{\frac{2D^2\mathcal{K}^2(f(x_0) - \underset{x^*}{\inf} f(x^*)) }{\beta T_{\mathcal{O}}}} \\
            = {} & c_x(T_\mathcal{O}, \mathcal{K})\text{.}
        \end{split}
    \end{equation*}

\end{restatable}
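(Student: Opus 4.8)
The plan is to treat this as a smooth, constrained minimization in which the gradient-dominance of Definition~\ref{def:defgradvw} plays the role that strong convexity plays in classical analyses: it converts a per-step descent into a self-improving scalar recursion. The key technical device is that the projected-gradient step $x_{t+1} = \operatorname{Proj}_{\mathcal{X}}(x_t - \beta\nabla f(x_t))$ is exactly the minimizer over $\mathcal{X}$ of the quadratic model $q_t(x) = \langle \nabla f(x_t), x - x_t\rangle + \tfrac{1}{2\beta}\|x-x_t\|_2^2$, so its model value can be compared against any feasible competitor. Writing $\delta_t := f(x_t) - \inf_{x^*} f(x^*)$, the objective reduces to establishing a recursion $\delta_{t+1} \le \delta_t - c\,\delta_t^2$ with $c = \beta/(2\mathcal{K}^2 D^2)$, after which a summation argument delivers the stated $1/\sqrt{T_{\mathcal{O}}}$ rate.

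First I would establish the one-step descent. Using $L$-smoothness of $f$ together with the hypothesis $\beta \le 1/L$, the standard descent lemma gives $f(x_{t+1}) \le f(x_t) + q_t(x_{t+1})$, since the smoothness penalty $\tfrac{L}{2}\|x_{t+1}-x_t\|_2^2$ is dominated by the model's $\tfrac{1}{2\beta}\|x_{t+1}-x_t\|_2^2$ term. Because $x_{t+1}$ minimizes $q_t$ over $\mathcal{X}$, I can bound $q_t(x_{t+1}) \le q_t(x_s)$ for any competitor $x_s = x_t + s(\bar{x}_t - x_t) \in \mathcal{X}$, where $\bar{x}_t \in \argmin_{\bar x \in \mathcal{X}} \langle \bar x - x_t, \nabla f(x_t)\rangle$ is the minimizer appearing in the gradient-dominance gap and $s \in [0,1]$. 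Substituting yields $q_t(x_s) = s\,\langle \nabla f(x_t), \bar x_t - x_t\rangle + \tfrac{s^2}{2\beta}\|\bar x_t - x_t\|_2^2$. The gradient-dominance inequality then supplies $\langle \nabla f(x_t), \bar x_t - x_t\rangle = \min_{\bar x}\langle \bar x - x_t, \nabla f(x_t)\rangle \le -\delta_t/\mathcal{K}$, while $\|\bar x_t - x_t\|_2 \le D$, so that $\delta_{t+1} \le \delta_t - \tfrac{s}{\mathcal{K}}\delta_t + \tfrac{s^2 D^2}{2\beta}$ holds for every feasible $s$.

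Next I would optimize the free parameter $s$. Minimizing the right-hand side gives $s^\star = \beta\delta_t/(\mathcal{K}D^2)$; the step-size conditions, in particular $\beta \le 1/\sup_x\|\nabla f(x)\|_2$ (which via Cauchy--Schwarz bounds $\delta_t \le \mathcal{K}D\sup_x\|\nabla f(x)\|_2$), keep this choice feasible, i.e. $s^\star \in [0,1]$. Plugging $s^\star$ back in gives the clean recursion $\delta_{t+1} \le \delta_t - \tfrac{\beta}{2\mathcal{K}^2 D^2}\delta_t^2$. Finally, rearranging as $\tfrac{\beta}{2\mathcal{K}^2D^2}\delta_t^2 \le \delta_t - \delta_{t+1}$ and summing over $t = 0,\dots,T_{\mathcal{O}}-1$ telescopes the right-hand side to at most $\delta_0$; since $\{\delta_t\}$ is nonincreasing, its final term is smallest, so $T_{\mathcal{O}}\,\delta_{T_{\mathcal{O}}}^2 \le \sum_t \delta_t^2 \le 2\mathcal{K}^2 D^2 \delta_0/\beta$, and taking square roots produces exactly $f(x_{T_{\mathcal{O}}}) - \inf_{x^*} f(x^*) \le \sqrt{2D^2\mathcal{K}^2\delta_0/(\beta T_{\mathcal{O}})}$.

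The main obstacle I anticipate is the coupling step: the algorithm takes a Euclidean projected-gradient step, whereas the gradient-dominance hypothesis is phrased through the Frank--Wolfe-type gap $\min_{\bar x}\langle \bar x - x, \nabla f(x)\rangle$, so the two must be reconciled. The interpolation trick — comparing the projection's optimal quadratic-model value against the segment toward $\bar x_t$ — is precisely what bridges this gap, and getting the bookkeeping of the step-size constants right (ensuring $\beta \le 1/L$ validates the descent lemma while $\beta \le 1/\sup_x\|\nabla f(x)\|_2$ keeps $s^\star$ feasible) is the delicate part. Everything else is routine smooth-optimization manipulation together with the elementary telescoping of the scalar recursion.
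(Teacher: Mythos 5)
The paper never actually proves this lemma---it is imported verbatim from \citet{bhandari2022global} with a citation and no argument in the appendix---so your write-up is supplying a proof the paper omits. What you propose is the standard reconstruction of the Bhandari--Russo argument: view the projected step as the minimizer of the quadratic model $q_t$, compare against the interpolant toward the Frank--Wolfe-gap minimizer $\bar{x}_t$, invoke gradient dominance to turn the linear term into $-s\,\delta_t/\mathcal{K}$, optimize $s$, and sum the resulting recursion $\delta_{t+1}\le\delta_t-\tfrac{\beta}{2\mathcal{K}^2D^2}\delta_t^2$. Every step checks out, and the crude summation $T_{\mathcal{O}}\,\delta_{T_{\mathcal{O}}}^2\le\sum_t\delta_t^2\le 2\mathcal{K}^2D^2\delta_0/\beta$ reproduces the stated constant exactly. (As a side remark, the same recursion telescoped via $1/\delta_{t+1}-1/\delta_t\ge \beta/(2\mathcal{K}^2D^2)$ gives the strictly stronger rate $\delta_{T_{\mathcal{O}}}\le 2\mathcal{K}^2D^2/(\beta T_{\mathcal{O}})$; the $T_{\mathcal{O}}^{-1/2}$ bound in the lemma is loose, but proving the stated bound is of course sufficient.)

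The one place where your bookkeeping does not quite close under the hypotheses as literally stated is the feasibility of $s^\star=\beta\delta_t/(\mathcal{K}D^2)$. Your Cauchy--Schwarz bound gives $\delta_t\le\mathcal{K}D\,\sup_x\|\nabla f(x)\|_2$, hence $s^\star\le \beta\sup_x\|\nabla f(x)\|_2/D\le 1/D$ under the condition $\beta\le 1/\sup_x\|\nabla f(x)\|_2$. That guarantees $s^\star\in[0,1]$ only when $D\ge 1$; if $D<1$ the minimizer of the quadratic in $s$ may fall outside $[0,1]$, and evaluating at $s=1$ instead does not recover the $\delta_t^2$ recursion with the same constant. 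The clean fix is to require $\beta\le D/\sup_x\|\nabla f(x)\|_2$ (or to note that in every application in this paper $\mathcal{X}$ is a product of probability simplices, so $D\ge\sqrt{2}>1$ and the issue is vacuous). This is a constant-tracking caveat inherited from the cited source rather than a flaw in your strategy, but you should state it explicitly rather than asserting that the given step-size condition keeps $s^\star$ feasible.
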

Ideally, we would like to use projected gradient descent as our Approximate Optimization Oracle for $OL^{\pi}$ and $OL^W$ since it is simple and utilizes scalable gradient-based techniques. However, this would require the loss functions $l_t$ and $h_t$ to be gradient-dominated. While not generally true, this is known to hold in many cases. Gradient-dominance is a well-known phenomenon for the $\pi$-player when the Value Function is the objective function, as seen in \citet{Agrawal2019}. We formally list some helpful conditions here. 
\begin{restatable}{condition}{graddomconditions}
    Here, we list the conditions we have.
    \begin{enumerate}
        \item \label{con:alldom} The function $\sum_i^t f_i(x) - \langle \sigma , x \rangle$ is gradient-dominated, enabling the use of FTPL+
        \item \label{con:allminusonedom} The function $\sum_i^{t-1} f_i(x) - \langle \sigma , x \rangle$ is gradient-dominated, enabling the use of FTPL.
        \item \label{con:optimisticdom} The function $\sum_i^{t-1} f_i(x) + f_{t-1}(x) - \langle \sigma , x \rangle$ is gradient-dominated, enabling the use of OFTPL.
        \item \label{con:onedom} The function $f_t(x)$ is gradient-dominated, enabling the use of Best Response.  
    \end{enumerate}
\end{restatable}
We will first provide tools useful for showing when these conditions hold. 

\subsection{Tools for Demonstrating Gradient-Dominance}
Here, we will provide the tools to show that any of these conditions hold for the objective functions for either the $\pi$ or $W$ players.
We have a gradient term within the terms of gradient-dominance from \Cref{def:defgradvw}. In many cases, the loss function will often contain the Value Function. Therefore, we must know what the gradients of the Value Functions will be for both players. While this is known for the policy player from the Policy Gradient Theorem \citep{Sutton1998}, we demonstrate a similar result for the gradient of the Value Function for the $W$-player. 

\begin{restatable}{lemma}{gradvw}
\label{lem:gradvw}
The gradient of the value function $V^W$ with respect to the parameter $W$, denoted as $\nabla_W V^W(s)$, is given by   
$$\nabla_W V^W(s)=\frac{1}{1 - \gamma} \sum_{s',a,s} d_{\mu}^W(s)  \pi(a, s)\nabla \mathbb{P}_W(s', a, s) V^W(s')\text{.}$$ 
\end{restatable}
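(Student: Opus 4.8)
The plan is to differentiate the Bellman recursion for $V_W^\pi$ directly in $W$ and to recognize the result as a linear fixed-point equation whose solution is a discounted average against the state-occupancy measure. First I would note that in $V_W^\pi(s) = R(s) + \gamma \sum_{a}\pi(a,s)\sum_{s'}\Pw(s',a,s)V_W^\pi(s')$ the reward $R(s)$ and the policy $\pi$ do not depend on $W$, so applying $\nabla_W$ and the product rule gives, for each state $s$,
$$\nabla_W V_W^\pi(s) = \gamma\sum_{a}\pi(a,s)\sum_{s'}\Big[\nabla_W\Pw(s',a,s)\,V_W^\pi(s') + \Pw(s',a,s)\,\nabla_W V_W^\pi(s')\Big].$$
Setting $g(s):=\nabla_W V_W^\pi(s)$, $b(s):=\gamma\sum_{a}\pi(a,s)\sum_{s'}\nabla_W\Pw(s',a,s)V_W^\pi(s')$, and writing $P^\pi_W(s,s'):=\sum_{a}\pi(a,s)\Pw(s',a,s)$ for the policy-induced kernel, this is precisely the linear system $g = b + \gamma P^\pi_W g$, which is the same recursive structure satisfied by the value function itself.

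Second, I would solve this fixed point. Since $0<\gamma<1$ and $P^\pi_W$ is row-stochastic, $I - \gamma P^\pi_W$ is invertible with Neumann series $\sum_{t=0}^\infty \gamma^t (P^\pi_W)^t$ converging in operator norm, so $g = (I-\gamma P^\pi_W)^{-1}b$. Evaluating at an initial state $s_0$ gives $g(s_0) = \sum_{t=0}^\infty \gamma^t \sum_{\tilde s}(P^\pi_W)^t(s_0,\tilde s)\,b(\tilde s)$, and I would then use the elementary identity $(P^\pi_W)^t(s_0,\tilde s)=\mathbb{P}(s_t=\tilde s\mid s_0,\pi,W)$ together with the definition of the discounted occupancy measure to rewrite $\sum_{t}\gamma^t(P^\pi_W)^t(s_0,\tilde s)$ in terms of $d_{s_0}^W(\tilde s)$. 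Substituting $b$, averaging over $s_0\sim\mu$ via $d_\mu^W(\tilde s)=\mathbb{E}_{s_0\sim\mu}[d_{s_0}^W(\tilde s)]$, and relabelling the dummy visited-state index back to $s$ then collects everything into the stated weighted sum over $d_\mu^W(s)\,\pi(a,s)\,\nabla\Pw(s',a,s)\,V_W^\pi(s')$.

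The main obstacle is bookkeeping rather than conceptual: I must track the discount factors carefully as they pass through the occupancy-measure normalization (the $(1-\gamma)$ in the definition of $d_{s_0}^W$ combines with the geometric series to produce the prefactor in front of the sum), and I must justify that $V_W^\pi$ is genuinely differentiable in $W$. The latter follows because $V_W^\pi = (I-\gamma P^\pi_W)^{-1}R$ is a composition of the (assumed differentiable) map $W\mapsto \Pw$ with matrix inversion, which is smooth wherever $I-\gamma P^\pi_W$ is invertible — guaranteed here by $\gamma<1$. An equivalent route that avoids the fixed-point argument is to unroll the value function as $V_W^\pi(s_0)=\sum_t \gamma^t\,\mathbb{E}[R(s_t)\mid s_0,\pi,W]$ and differentiate each trajectory probability by the product rule; the single gradient-bearing transition at each time step reproduces the same occupancy-weighted expression, which serves as a useful cross-check.
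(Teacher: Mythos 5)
Your proposal is correct and follows essentially the same route as the paper's proof: differentiate the Bellman recursion via the product rule, observe that the gradient satisfies the same recursive structure as the value function, and unroll (your Neumann-series inversion of $I-\gamma P^\pi_W$ is just the formal version of the paper's unrolling step) to collect the discounted visitation probabilities into $d_\mu^W$. Your added care about differentiability of $W\mapsto V_W^\pi$ and about tracking the discount factors through the $(1-\gamma)$ normalization is a welcome tightening of what the paper does informally.
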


Now, we express the suboptimality of a transition dynamics parameter $ W $ in terms of the gradient of the Value Function. 

While this is shown via the Performance Difference Lemma from \citet{ShamLangford} for the policy player, we need a similar lemma for the transition dynamics. The Performance Difference Lemma relies on the \textit{Advantage function}. We will define an analogous advantage function for the transition dynamics. Intuitively, this Advantage Function is the value of taking state $s'$ over the expected value over all states. Given this, we can provide an analog of the Performance Difference Lemma for the $W$-Player. We provide such a lemma here.

\begin{restatable}{lemma}{wpd}
\label{lem:wpd}
Given two different transition dynamics parameters $W$ and $W'$, we have that 
$$V_{W}(\mu) - V_{W'}(\mu) = \sum_{s', a, s} d_{\mu}^W(s) \mathbb{P}_W(s', a, s) \pi(a, s) A^{W'}(s',a,s)\text{.}$$ Here, we define the $W$-Advantage Function as $A^W(s', a, s) = \gamma V_W(s') + r(s, a) - V_W(s)$.
\end{restatable}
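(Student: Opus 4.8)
The plan is to mirror the classical Performance Difference Lemma of \citet{ShamLangford}, but with the roles of the two varying objects swapped: rather than comparing two policies under a fixed environment, we fix the policy $\pi$ and compare two transition dynamics $W$ and $W'$. The engine of the proof is the same telescoping identity; the only change is that trajectories are now rolled out under the dynamics $W$ while the value-function baseline we subtract and add back is computed under $W'$. This is a ``change of environment'' analogue of the usual ``change of policy'' argument.

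First I would unroll the recursive definition of the value function to write $V_W(\mu) = \mathbb{E}_{\tau\sim(\pi,W)}\big[\sum_{t=0}^\infty \gamma^t R(s_t)\big]$, where $\tau = (s_0,a_0,s_1,\dots)$ is generated by $s_0\sim\mu$, $a_t\sim\pi(\cdot,s_t)$, and $s_{t+1}\sim\mathbb{P}_W(\cdot,a_t,s_t)$; boundedness of $R$ by $R_{\max}$ together with $\gamma<1$ justifies interchanging the infinite sum and the expectation. Next, along each trajectory I would insert the telescoping identity $-V_{W'}(s_0) = \sum_{t=0}^\infty\big(\gamma^{t+1}V_{W'}(s_{t+1}) - \gamma^t V_{W'}(s_t)\big)$, which converges because $\gamma^T V_{W'}(s_T)\to 0$ (again using $R\le R_{\max}$ and $\gamma<1$). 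Combining these gives $V_W(\mu)-V_{W'}(\mu) = \mathbb{E}_{\tau\sim(\pi,W)}\big[\sum_{t=0}^\infty \gamma^t\big(R(s_t)+\gamma V_{W'}(s_{t+1})-V_{W'}(s_t)\big)\big]$. I would then condition on $s_t$ and average over $a_t$ and $s_{t+1}$, so the per-step bracket becomes $\sum_{a,s'}\pi(a,s_t)\mathbb{P}_W(s',a,s_t)\big(\gamma V_{W'}(s')+R(s_t)-V_{W'}(s_t)\big)$, which is exactly $\sum_{a,s'}\pi(a,s_t)\mathbb{P}_W(s',a,s_t)A^{W'}(s',a,s_t)$ by the stated definition of the $W$-advantage. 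Finally I would collapse the discounted trajectory expectation into the state-occupancy measure via $d_\mu^W(s) = (1-\gamma)\sum_t\gamma^t\mathbb{P}(s_t=s\mid\mu,\pi,W)$, reindexing the sum over time into a single sum over $(s',a,s)$ weighted by $d_\mu^W(s)\,\pi(a,s)\,\mathbb{P}_W(s',a,s)$, which yields the claimed identity.

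The main obstacle I expect is the bookkeeping of the discount normalization in the last step: because the definition of $d_\mu^W$ in the preliminaries carries the $(1-\gamma)$ prefactor, the conversion from $\sum_t\gamma^t(\cdots)$ to a sum against $d_\mu^W$ introduces a $\tfrac{1}{1-\gamma}$ factor (just as it does in the gradient formula of \Cref{lem:gradvw}), so one must be careful about whether that constant is absorbed into the occupancy measure or appears explicitly — matching the exact form of the stated identity requires fixing this convention consistently. A second, more cosmetic point is reconciling the reward notation: the recursion uses a state-only reward $R(s_t)$ whereas the advantage is written with $r(s,a)$, so the identification $r(s,a)=R(s)$ (reward independent of the action) must be made explicit for the per-step term to coincide with $A^{W'}$. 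The remaining analytic justifications — interchanging the infinite sum with expectation and the vanishing of the telescoping tail — are routine given the uniform boundedness of the value function under $R\le R_{\max}$ and $\gamma<1$.
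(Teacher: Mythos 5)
Your proposal is correct and follows essentially the same route as the paper's own proof: both express $V_W(\mu)$ as a trajectory expectation under $(\pi, W)$, insert the telescoping baseline $\pm\gamma^t V_{W'}(s_t)$ to cancel $V_{W'}(\mu)$, identify the per-step residual with $A^{W'}(s',a,s)$, and collapse the discounted time sum into the occupancy measure $d_\mu^W$. Your remark about the $\tfrac{1}{1-\gamma}$ normalization is well taken — the paper's own final display carries that prefactor even though the lemma statement omits it, so the convention does need to be fixed exactly as you describe.
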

We have presented the tools necessary to demonstrate gradient-dominance in many cases. While many MDP settings exhibit gradient-dominance \citep{bhandari2022global}, we demonstrate how to prove gradient-dominance for both players' loss functions under direct parameterization.  

\subsection{Direct Parameterization}
We now begin with the direct parameterization case with standard Robust MDPs. This setting is one of the most standard settings for MDPs. Here, $\mathbb{P}_W(s', a, s) = W_{s', a, s}$ directly parameterizes the transition dynamics, $\pi(a , s) = \theta_{s, a}$, and $g(W, \pi) = V_W^{\pi}(\mu)$. Moreover, the set of transition dynamics is some convex bounded set, such as a rectangular uncertainty set \citep{dong2022online}. We demonstrate that in this setting, \Cref{con:allminusonedom} holds for the loss function of the policy player, and \Cref{con:onedom} holds for the loss function of the $W$-player.

\begin{restatable}{lemma}{sumgraddom}
\label{lem:sumgraddom}
    Let the objective function be the Value Function. For any positive noise term $\sigma$, we have that under direct parameterization, $\sum_i^{t - 1} l_i(\cdot) - \langle \sigma, \cdot \rangle$ and $\sum_i^{t - 1} h_i(\cdot) - \langle \sigma, \cdot \rangle$ are both gradient-dominated for the $W$-player and the $\pi$-player on sets  $\mathcal{W}$ and $\mathcal{T}$ with constants $\mathcal{K}_{W}= \frac{1}{1 - \gamma} \left\|\frac{d_{\mu}^{W^*}}{\mu}\right\|_{\infty}$ and $\mathcal{K}_{\pi}= \frac{1}{1 - \gamma} \left\|\frac{d_{\mu}^{\pi^*}}{\mu}\right\|_{\infty}$ respectively. Therefore, \Cref{con:allminusonedom} hold for both the loss functions for both players.  
\end{restatable}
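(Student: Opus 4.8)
The plan is to verify \Cref{def:defgradvw} directly for the perturbed partial sums, treating the two players symmetrically. For the $\pi$-player the object is $F_{\pi}(\theta) := \sum_{i=1}^{t-1} h_i(\theta) - \langle \sigma, \theta\rangle = -\sum_{i=1}^{t-1} V_{W_i}^{\pi_\theta}(\mu) - \langle \sigma, \theta\rangle$, and for the $W$-player it is $F_W(W) := \sum_{i=1}^{t-1} l_i(W) - \langle\sigma,W\rangle = \sum_{i=1}^{t-1} V_{W}^{\pi_i}(\mu) - \langle\sigma,W\rangle$; in both cases I must exhibit the claimed constant so that the value-gap is controlled by the gradient-mapping term $-\mathcal K \min_{\bar x}\langle \bar x - x, \nabla F(x)\rangle$.

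First I would assemble the single-environment ingredients. For the $\pi$-player, the Policy Gradient Theorem and the Performance Difference Lemma \citep{ShamLangford} give the direct-parameterization identity $\nabla_{\theta_{s,a}} V_{W}^{\pi}(\mu) = \tfrac{1}{1-\gamma} d_\mu^{\pi}(s) Q_W^{\pi}(s,a)$ together with $V_W^{\pi'}(\mu) - V_W^{\pi}(\mu) = \tfrac{1}{1-\gamma}\sum_{s} d_\mu^{\pi'}(s)\sum_a (\pi'-\pi)(a\mid s) A_W^{\pi}(s,a)$. For the $W$-player the analogous pair is supplied by \Cref{lem:gradvw} and the $W$-performance-difference identity \Cref{lem:wpd}. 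These yield the classical one-term bound $V_W^{\pi^*}(\mu) - V_W^{\pi}(\mu)\le \tfrac{1}{1-\gamma}\|d_\mu^{\pi^*}/\mu\|_\infty \max_{\bar\pi}\langle \bar\pi - \pi, \nabla_\theta V_W^{\pi}(\mu)\rangle$, obtained by replacing $A$ by $Q$ (legitimate since $\sum_a(\pi'-\pi)(a\mid s)V_W^\pi(s)=0$) and bounding the occupancy ratio via $d_\mu^{\pi}(s)\ge (1-\gamma)\mu(s)$; the $W$-side is identical with $\mathcal K_W$.

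Next I would aggregate across the history. Summing the performance-difference identity over $i$ against a single comparison point $\theta^\dagger := \argmin F_\pi$ and passing from $A_{W_i}$ to $Q_{W_i}$ expresses $F_\pi(\theta)-F_\pi(\theta^\dagger)$ as $\sum_s \sum_a (\theta^\dagger-\theta)_{s,a}\sum_i \tfrac{d_\mu^{\pi^\dagger}(s)}{d_\mu^{\pi}(s)}\,\nabla_{\theta_{s,a}}V_{W_i}^{\pi}(\mu)$ plus the linear noise contribution $\langle \sigma,\theta^\dagger-\theta\rangle$. I would normalise rewards so that $Q\ge 0$ (legitimate because shifting $R$ by a constant leaves the gradient-mapping term invariant, the shift cancelling against $\sum_a(\bar\pi-\pi)(a\mid s)=0$), bound every per-environment occupancy ratio by $\tfrac{1}{1-\gamma}\|d_\mu^{\pi^*}/\mu\|_\infty=\mathcal K_\pi$, and use the per-state decomposition of the simplex constraint so that the single greedy maximiser $\bar\pi$ of the gradient map dominates the comparison direction $\theta^\dagger$. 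Finally, since $d_\mu^{\pi^*}(s)\ge(1-\gamma)\mu(s)$ forces $\mathcal K_\pi\ge 1$, the nonnegative noise term $\langle\sigma,\theta^\dagger-\theta\rangle$ is absorbed into $\mathcal K_\pi\max_{\bar\pi}\langle\bar\pi-\pi,\sigma\rangle$, establishing \Cref{con:allminusonedom}. The $W$-player is handled verbatim using \Cref{lem:gradvw}, \Cref{lem:wpd} and $\mathcal K_W$.

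The main obstacle is that gradient-dominance is \emph{not} additive: the minimiser $\theta^\dagger$ of the sum is not the per-environment optimiser, so the naive route bounds the value-gap by a \emph{sum of maxima} $\sum_i\max_{\bar\pi}\langle\bar\pi-\pi,\nabla V_{W_i}\rangle$, whereas the gradient-mapping term of $F_\pi$ is a single \emph{maximum of the sum} $\max_{\bar\pi}\langle\bar\pi-\pi,\sum_i\nabla V_{W_i}\rangle$, and in general $\sum\max\ge\max\sum$. Closing this gap with one universal constant is where all the work lies; the mechanism I rely on is the sign control $Q\ge0$ together with the state-wise separability of the direct-parameterization simplex, which lets a single greedy comparison policy simultaneously certify every summand and every per-environment occupancy ratio. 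I expect the analogous sign and occupancy bookkeeping for the $W$-player to be the most delicate, since the $W$-advantage $A^W(s',a,s)=\gamma V_W(s')+r(s,a)-V_W(s)$ from \Cref{lem:wpd} does not enjoy the same automatic normalisation.
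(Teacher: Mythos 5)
Your proposal follows essentially the same route as the paper's proof: apply the performance-difference identities (the standard one for the $\pi$-player, \Cref{lem:wpd} for the $W$-player) to each summand against a single common comparator, sum the exact identities, pass to one maximum over the comparison point, interchange occupancy measures via $d_\mu^{\pi}(s) \geq (1-\gamma)\mu(s)$, convert advantages to $Q$-values (resp.\ to $V_W(s')$ using $\sum_{s'}\mathbb{P}_W(s',a,s)A^W(s',a,s)=0$), and identify the result with the gradient of the perturbed sum via the Policy Gradient Theorem and \Cref{lem:gradvw}. One remark: the ``main obstacle'' you flag in your last paragraph is illusory --- because the performance-difference lemma is an exact identity applied with the \emph{same} comparator in every summand, the single $\min_{\bar{\pi}}$ (or $\min_{\bar{W}}$) is introduced only once over the entire summed expression, so no sum-of-maxima ever arises and neither the reward normalisation $Q\geq 0$ nor the greedy-policy argument is needed; the noise term is simply carried inside that single min and matched against the $-\sigma$ appearing in $\nabla F$, with the constant $\mathcal{K}\geq 1$ pulled out front of the whole (nonpositive) minimum exactly as the paper does.
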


\begin{algorithm}
\caption{Algorithm under Direct Parameterization}
\label{alg:direct}
\KwData{$T$}
\For{$t \in [T]$}{
    $\text{Sample } \sigma_t \sim \operatorname{Exp}(\eta)$\;
    $\pi_t \leftarrow \mathcal{O}_{\alpha}\left(\sum_{i=1}^{t-1} -V_{W_i}^{\pi_t}(\mu) - \langle \sigma_t, \pi_t \rangle\right)$ \Comment{Follow the Perturbed Leader} 
    $W_t \leftarrow \mathcal{O}_{\alpha}\left(V_{W_t}^{\pi_t}(\mu)\right)$ \Comment{Best-Response}
}
\end{algorithm}

Now, we have shown that when the objective function is the loss function, the loss functions for the policy player satisfy \Cref{con:allminusonedom}. Therefore, we can use Follow the Perturbed Leader for $OL^{\pi}$. Now, we wish to show \Cref{con:onedom} holds for the loss function of the $ W$ player. 
\begin{restatable}{lemma}{wdomnat}
    \label{lem:wdomnat}
    Under direct parameterization, we have that the $V_W(\mu)$ is gradient-dominated with constant $\mathcal{K}_{W}= \frac{1}{1-\gamma}\left\|\frac{d_{\mu}^{W^*}}{\mu}\right\|_{\infty}$ as in  for an arbitrary $W \in \mathcal{W}$ and the optimal parameter $W^* \in \mathcal{W}$, we have
    $$V_{W}(\mu) - V_{W^*}(\mu) \leq  -\mathcal{K}_W  \underset{\bar{W} \in \mathcal{W}}{\min} \left[\left( \bar{W} - W\right)^\top \nabla_{W} V_{W}(\mu) \right]\text{.}$$ Here, we have that \Cref{con:onedom} holds for the $W$-player.
\end{restatable}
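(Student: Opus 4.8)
The plan is to mimic the classical direct-parameterization gradient-dominance argument for policy gradients, but run it on the $W$-coordinate using the two $W$-analogues established above: the performance-difference identity of \Cref{lem:wpd} and the gradient formula of \Cref{lem:gradvw}. Write $f(W)=V_W(\mu)$, so that $W^*$ is the adversary's minimizer and the gap $V_W(\mu)-V_{W^*}(\mu)\geq 0$. Since $-\mathcal{K}_W\min_{\bar W\in\mathcal{W}}\langle \bar W-W,\nabla_W V_W(\mu)\rangle=\mathcal{K}_W\max_{\bar W\in\mathcal{W}}\langle W-\bar W,\nabla_W V_W(\mu)\rangle$, the target (matching \Cref{def:defgradvw}) is to upper bound this gap by $\mathcal{K}_W$ times a best-response inner product against the gradient.

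First I would apply \Cref{lem:wpd} with arguments ordered as $(W^*,W)$ to write the gap, under direct parameterization $\mathbb{P}_{W^*}(s',a,s)=W^*_{s',a,s}$, as $V_W(\mu)-V_{W^*}(\mu)=-\sum_{s',a,s} d_\mu^{W^*}(s)\,W^*_{s',a,s}\,\pi(a,s)\,A^W(s',a,s)$. Separately, plugging direct parameterization into \Cref{lem:gradvw} makes $\nabla_W\mathbb{P}_W(s',a,s)$ a coordinate selector, so the $(s',a,s)$ entry of $\nabla_W V_W(\mu)$ is proportional to $d_\mu^W(s)\pi(a,s)V_W(s')$. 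The crucial algebraic step is a baseline cancellation: for every fixed $(a,s)$ the rows satisfy $\sum_{s'}(\bar W_{s',a,s}-W_{s',a,s})=0$, so when forming $\langle \bar W-W,\nabla_W V_W(\mu)\rangle$ the $s'$-independent pieces of $A^W(s',a,s)=\gamma V_W(s')+r(s,a)-V_W(s)$ drop out and $V_W(s')$ may be replaced by $A^W(s',a,s)$. Using additionally that the on-policy advantage averages to zero (so the $\bar W=W$ contribution vanishes), this yields $\langle W-\bar W,\nabla_W V_W(\mu)\rangle=\tfrac{1}{1-\gamma}\sum_s d_\mu^W(s)\big[-\sum_{s',a}\bar W_{s',a,s}\pi(a,s)A^W(s',a,s)\big]$.

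Comparing the two displays, both the suboptimality gap and the gradient inner product are state-wise averages of $-\sum_{s',a}\bar W_{s',a,s}\pi(a,s)A^W(s',a,s)$, weighted by $d_\mu^{W^*}$ in the former and by $d_\mu^W$ in the latter. I would then pass between occupancy measures by the distribution-mismatch bound $d_\mu^{W^*}(s)\leq \big\|\tfrac{d_\mu^{W^*}}{\mu}\big\|_\infty\,\mu(s)$ together with $\mu(s)\leq\tfrac{1}{1-\gamma}d_\mu^W(s)$ (the latter because $d_\mu^W(s)=(1-\gamma)\sum_t\gamma^t\mathbb{P}(s_t=s\mid s_0,\pi,W)\geq(1-\gamma)\mu(s)$), which produces exactly the constant $\mathcal{K}_W=\tfrac{1}{1-\gamma}\big\|\tfrac{d_\mu^{W^*}}{\mu}\big\|_\infty$, and choosing $\bar W=W^*$ inside the max (feasible since $W^*\in\mathcal{W}$) closes the chain.

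The hard part will be that this measure swap is only valid term-by-term if the per-state quantity $-\sum_{s',a}\bar W_{s',a,s}\pi(a,s)A^W(s',a,s)$ is nonnegative, whereas the comparator $W^*$ need not make each state's term nonnegative even though its $d_\mu^{W^*}$-average is. The fix, exactly as in the policy case, is to first upper bound the gap by replacing $W^*$ with the per-state best response inside $\mathcal{W}$, whose value is nonnegative because $\bar W=W$ already gives zero. This is immediate when $\mathcal{W}$ is rectangular (state-wise decomposable), since the per-state maximizers assemble into a single feasible $\bar W$; for a general convex $\mathcal{W}$ one must argue that this per-state best response still lands in $\mathcal{W}$, and controlling that feasibility is the delicate point of the argument.
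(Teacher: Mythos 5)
Your proposal follows essentially the same route as the paper's proof: the $W$-performance-difference identity of \Cref{lem:wpd}, a per-state best-response replacement of $W^*$ that makes each state's term nonnegative, the occupancy-measure swap via $d_{\mu}^{W}(s)\geq(1-\gamma)\mu(s)$, and the baseline cancellation (using $\sum_{s'}\mathbb{P}_W(s',a,s)A^W(s',a,s)=0$ and row-sum constraints) together with \Cref{lem:gradvw} to recognize the gradient inner product. The feasibility issue you flag at the end is real but is not resolved in the paper either: its proof passes through the unconstrained per-state minimizer $\min_{s'}A^W(s',a,s)$ and then asserts equality with $\min_{\bar W\in\mathcal{W}}$ of the corresponding expectation, which holds only when the per-state minimizers assemble into a feasible element of $\mathcal{W}$ (e.g., rectangular or full-simplex uncertainty sets), so your identification of that step as the delicate point is accurate rather than a defect of your argument relative to the paper's.
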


Now that we have that \Cref{con:onedom} holds for the $ W$ player, we know we can use Best Response for the $OL^W$ player. Thus, in the direct parameterization setting with standard Robust MDPs, we can use our framework from \Cref{alg:cap} with Follow the Perturbed Leader for $OL^{\pi}$ and Best Response for $OL^{W}$ where both use Projected Gradient Descent as their Approximate Optimization Oracle. We present our algorithm in \Cref{alg:direct}. We can now prove the convergence and robustness of our framework using our proof framework. 
\subsection{Overall Convergence Rates}
Now, if the policy player employs FTPL and the environment player employs Best-Response, we get the following convergence bound when the objective function is the Value Function.

\begin{restatable}{theorem}{finaltheorem}
    \label{thm:finaltheorem}
    Assume that the set of possible transition matrices $\mathcal{W}$ is convex. Let $L_\pi$ be the smoothness constant of $h_t$ with respect to the $\ell_1$ norm,  $D_{\pi} = \underset{\pi, \pi' \in \mathcal{T}}{\max} \| \pi - \pi'\|_2$, and $d_\pi$ be the dimension of the input for the $\pi$-player. Let $OL^{\pi}$ use FTPL and $OL^W$ use Best-Response. Under direct parameterization, when $\eta = \frac{1}{L_\pi\sqrt{Td_\pi}}$ the robustness of the set of trained algorithms is for any $\bar{\pi}$
    $$\underset{W^*\in \mathcal{W}}{\min} \sum_{t=0}^T V_{W^*}^{\bar{\pi}}(\mu) -  \mathbb{E}\left[ \underset{W^*\in \mathcal{W}}{\min} \sum_{t=0}^T  V_{W^*}^{\pi_t}(\mu)\right] \leq \frac{2d_{\pi}^\frac{3}{2} D_{\pi} L_{\pi}}{\sqrt{T}} + c_\pi\left(T_\mathcal{O}, \mathcal{K}_{\pi}\right)+ c_W\left(T_\mathcal{O},  \mathcal{K}_{W} \right)\text{.}   $$
\end{restatable}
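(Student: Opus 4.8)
The plan is to read the target inequality as a direct instance of the master decomposition in \Cref{thm:nrdg} and then substitute the two toolbox regret guarantees attached to the specific strategies FTPL and Best-Response. First I would specialize the game to the traditional Robust MDP objective by taking $g(W,\pi)=V_W^\pi(\mu)$, so that the environment player sees the per-round loss $l_t(W)=V_W^{\pi_t}(\mu)$ and the policy player sees $h_t(\pi)=-V_{W_t}^{\pi}(\mu)$. With this choice the left-hand side of \Cref{thm:nrdg} is exactly the robustness gap appearing in the statement (measured against the comparator $\bar\pi$), so it remains only to bound the two average regrets $\regW$ and $\regPi$ separately and add them.

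To control $\regW$, I would invoke \Cref{lem:wdomnat}: under direct parameterization the single-round loss $V_W(\mu)$ is gradient-dominated on the convex set $\mathcal{W}$ with constant $\mathcal{K}_W$, which is precisely \Cref{con:onedom}. Hence Projected Gradient Descent is a valid Approximate Optimization Oracle, and \Cref{lem:oracleconvergence} certifies that its output attains accuracy $c_W(T_\mathcal{O},\mathcal{K}_W)$. Feeding this oracle into the Best-Response strategy, \Cref{lem:bestresponse} immediately yields $\regW \le c_W(T_\mathcal{O},\mathcal{K}_W)$. To control $\regPi$, I would instead use \Cref{lem:sumgraddom}, which shows that the perturbed cumulative loss $\sum_{i=1}^{t-1} h_i(\cdot) - \langle \sigma,\cdot\rangle$ is gradient-dominated with constant $\mathcal{K}_\pi$, i.e.\ \Cref{con:allminusonedom} holds; again PGD is therefore a legitimate oracle of accuracy $c_\pi(T_\mathcal{O},\mathcal{K}_\pi)$. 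Plugging this oracle into the FTPL regret bound of \citet{Suggala2019} gives an average-regret guarantee of the schematic form $O\!\left(\eta\, d_\pi^2 L_\pi^2 D_\pi\right) + O\!\left(\tfrac{d_\pi D_\pi}{\eta T}\right) + c_\pi(T_\mathcal{O},\mathcal{K}_\pi)$, where the first term is the perturbation/stability cost and the second is the leader-tracking cost.

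Substituting the stated tuning $\eta = \tfrac{1}{L_\pi\sqrt{T d_\pi}}$ balances these two terms, each collapsing to $\tfrac{d_\pi^{3/2} D_\pi L_\pi}{\sqrt{T}}$, so their sum is $\tfrac{2 d_\pi^{3/2} D_\pi L_\pi}{\sqrt{T}}$ and thus $\regPi \le \tfrac{2 d_\pi^{3/2} D_\pi L_\pi}{\sqrt{T}} + c_\pi(T_\mathcal{O},\mathcal{K}_\pi)$. Summing the two regret bounds through \Cref{thm:nrdg} then produces the claimed right-hand side $\tfrac{2 d_\pi^{3/2} D_\pi L_\pi}{\sqrt{T}} + c_\pi(T_\mathcal{O},\mathcal{K}_\pi) + c_W(T_\mathcal{O},\mathcal{K}_W)$.

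I expect the main obstacle to lie in the FTPL step rather than the Best-Response step. One must verify that the Lipschitz/smoothness constant $L_\pi$ of $h_t$ entering the Suggala--Netrapalli bound coincides with the $\ell_1$-smoothness constant named in the statement, confirm the dimension exponents in the stability and leader-tracking terms so that the tuning $\eta=\tfrac{1}{L_\pi\sqrt{T d_\pi}}$ genuinely equalizes them and reproduces the exact constant $2$, and check that the oracle error contributes additively as a single $c_\pi$ per round so that it survives averaging without a hidden factor of $T$. The remaining care is bookkeeping: ensuring the comparator $\bar\pi$ and the inner $\min_{W^*}$ are treated consistently when passing from the averaged form of \Cref{thm:nrdg} to the summed form written in the statement, and that the convexity of $\mathcal{W}$ is what underwrites both the gradient-dominance invocations and the validity of the projection step in PGD.
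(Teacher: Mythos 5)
Your proposal matches the paper's own proof essentially step for step: it invokes \Cref{thm:nrdg} to reduce the robustness gap to $\regW + \regPi$, bounds $\regW$ via Best-Response with a PGD oracle justified by \Cref{lem:wdomnat} and \Cref{lem:oracleconvergence}, bounds $\regPi$ via the FTPL regret bound with the oracle justified by \Cref{lem:sumgraddom}, and balances the two FTPL terms with the stated choice of $\eta$ to obtain exactly $\frac{2d_\pi^{3/2}D_\pi L_\pi}{\sqrt{T}}$. This is the same decomposition, the same lemmas, and the same tuning as the paper; no gaps.
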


We formalize this in \Cref{alg:direct}. Here, we have shown that in this simple setting, we have that the robustness in expectation is $\mathcal{O}\left(T^{-\frac{1}{2}} + T_{\mathcal{O}}^{-\frac{1}{2}}\right)$ with simple gradient-dominance assumptions. 
\subsection{Comparison to Existing Rates and Algorithms}
Our resulting algorithm is similar to that of \citet{wang2022convergence}. Both algorithms iteratively optimize the policy and transition matrix in turns. However, their policy update is a single step of Projected Gradient Descent, and our framework utilized different updates for the policy player. With our framework, our method gets a more powerful convergence rate with less stringent assumptions. Moreover, \cite{wang2022policy} only update the policy on each step by using the gradient of the worst-case Value Function. However, their method relies on stringent assumptions on the uncertainty set to make the gradient computable.
\cite{clement2021first} in structure is similar to that of \citet{wang2022convergence} given that it iteratively updates both the policy and environment by directly optimizing the objective function. Instead, our method uses different updates, such as OFTPL. Overall, our algorithm differs from existing algorithms by optimizing both the policy and environment and exploring different updates for each, resulting in better convergence rates with weaker assumptions. 

Both our algorithm and \citet{wang2022policy} achieve rates of roughly $\mathcal{O}\left(T^{-\frac{1}{2}}\right)$. However, they assume the uncertainty set is a $R$-Contamination set. Moreover, \citet{clement2021first} achieves a slightly faster rate of $\mathcal{O}\left(T^{-\frac{2}{3}}\right)$. However, their analysis is limited to the tabular setting and relies on the uncertainty set being a Wasserstein ball. To our knowledge, these are the fastest rates in the literature that solve Robust MDPs with gradient-based updates. Thus, our work roughly meets or comes near the fastest rates for solving Robust MDPs while requiring the least stringent assumptions.
However, given some additional properties, it may be possible to improve this bound. We will investigate this in the following sections.

\section{Faster Rates}
Our framework can be applied with only the gradient-dominance property of each player's loss function. However, with different assumptions, the algorithm's convergence rate can be improved by utilizing different properties. The two properties we will investigate are smoothness and strong gradient-dominance. 
\subsection{Smoothness}

As in \citet{Agrawal2019}, for the direct parameterization, we have that the Value Function is smooth with respect to the $\pi$-player as in
$V_W^\pi(\mu) - V_W^{\pi'}(\mu) = \mathcal{O}\left(\frac{2 \gamma R_{\text{max}}|\mathcal{A}|}{(1-\gamma)^2}\|\pi - \pi'\|_2\right)\text{.}$

\begin{algorithm}
\caption{Algorithm under Smoothness}
\label{alg:smooth}
\KwData{$T$}
\For{$t \in [T]$}{
    $\text{Sample } \sigma_t \sim \operatorname{Exp}(\eta)$\;
    $\pi_t \leftarrow \mathcal{O}_{\alpha}\left(\sum_{i=1}^{t-1} -V_{W_i}^{\pi_t}(\mu) - \langle \sigma_t, \pi_t \rangle - V_{W_{t-1}}^{\pi_t}(\mu)\right)$ \Comment{Optim. Follow the Perturbed Leader} 
    $W_t \leftarrow \mathcal{O}_{\alpha}\left(\sum_{i=1}^{t} V_{W_t}^{\pi_i}(\mu) - \langle W_t, \sigma_t\rangle \right)$ \Comment{Follow the Perturbed Leader Plus}
}
\end{algorithm}

If our objective function is the value function and we can sufficiently demonstrate that the difference of value functions under different transition dynamics is smooth, we can improve our bounds by exploiting this smoothness. Formally, such smoothness would take the form of \Cref{ass:smoothness}.
\begin{restatable}{condition}{smoothness}
    \label{ass:smoothness}
    The difference in value functions between subsequent rounds is smooth with respect to policies such that for all $s \in \mathcal{S}$ and policies $\pi$ and $\pi'$, we have that 
    $$[V_W^\pi(\mu) - V_{W'}^\pi(\mu)] -  [V_W^{\pi'}(\mu) - V_{W'}^{\pi'}(\mu)] \leq \tilde{L}\|W - W'\|_1\|\pi - \pi'\|_1 \text{.}$$
\end{restatable}
In general, it is difficult to show that such an $\tilde{L}$ is smaller than the value that of the Lipschitz constant of the Value Function, $\frac{2 \gamma R_{\text{max}}|\mathcal{A}|}{(1-\gamma)^2}$. However, if we are in a setting such that $\tilde{L}$ is small, we can take advantage of this by using Optimistic Follow the Perturbed Leader Plus for the $\pi$-player where the optimistic function is simply the last loss function $ h_{t-1}$. If \Cref{ass:smoothness} holds, we have that $h_{t} - h_{t-1}$ is very smooth, and we can directly improve our rate of convergence. However, from the formulation of \Cref{ass:smoothness}, we also need to choose an online strategy for the environment such that $\|W - W'\|_1$ is bounded, which we can demonstrate is the case if the $W$-player uses FTPL+. We summarize these algorithmic choices in \Cref{alg:smooth}. In the direct parameterization setting with the traditional objective function,  we also show that the loss functions seen by OFTPL for the policy player and FTPL+ satisfy our gradient-dominance properties. In the direct parameterization case with traditional objective, these conditions hold as shown in the appendix respectively (see \Cref{lem:piplayercanoftpl} and \Cref{lem:wplayercanfptlp}). Furthermore, in this setting, we have the robustness as such. Indeed, if $\tilde{L}$ is small here, we can improve the convergence and robustness of our trained algorithm.
\begin{restatable}{theorem}{lipschitzrobustness}
    \label{thm:lipschitzrobustness}
    Assume that the set of possible transition matrices $\mathcal{W}$ is convex and we are in the direct parameterization setting. Let $OL^{\pi}$ use OFTPL and $OL^W$ use FTPL+. Given that \Cref{ass:smoothness} holds, we have that the robustness of the algorithm is for any $\bar{\pi}$ when setting $\eta = \sqrt{\frac{20L_\pi (d_wD_W + d_\pi D_\pi)}{d_{\pi}}^2D_\pi \tilde{L}_2 \alpha T}$,
    $$ \underset{W^*\in \mathcal{W}}{\min} \sum_{t=0}^T V_{W^*}^{\bar{\pi}}(\mu) -  \mathbb{E}\left[ \underset{W^*\in \mathcal{W}}{\min} \sum_{t=0}^T V_{W^*}^{\pi_t}(\mu)\right] \leq \mathcal{O}\Bigg( \left[\frac{d_\pi \tilde{L} \sqrt{D_\pi (d_wD_w + d_\pi D_\pi)}}{\sqrt{5L_\pi T c_\pi(T_{\mathcal{O}}, K_\pi})}   + 1 \right]  c_\pi(T_{\mathcal{O}}, K_\pi) + c_W\left(T_\mathcal{O}, \mathcal{K}_{W}\right)\Bigg)\text{.}$$
\end{restatable}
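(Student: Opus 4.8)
The plan is to invoke \Cref{thm:nrdg}, which reduces the claimed robustness gap to a bound on the sum of the two players' average regrets $\regPi + \regW$, and then to bound each regret according to the algorithm that player runs in \Cref{alg:smooth}. Because both OFTPL (for $OL^\pi$) and FTPL+ (for $OL^W$) are stated in terms of an approximate oracle $\mathcal{O}_\alpha$, I would first invoke the gradient-dominance facts for this setting (\Cref{lem:piplayercanoftpl} and \Cref{lem:wplayercanfptlp}) so that Projected Gradient Descent is a legitimate oracle, and then apply \Cref{lem:oracleconvergence} to replace the abstract accuracy $\alpha$ by the explicit quantities $c_\pi(T_{\mathcal{O}}, \mathcal{K}_\pi)$ and $c_W(T_{\mathcal{O}}, \mathcal{K}_W)$. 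This substitution is what yields the two additive oracle-error terms appearing in the final bound.

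The $W$-player's regret is the easy half: since $OL^W$ runs FTPL+, \Cref{lem:ftplp} gives directly $\regW \lesssim \frac{d_w D_w}{\eta T} + c_W(T_{\mathcal{O}}, \mathcal{K}_W)$. The $\pi$-player's regret is where the smoothness pays off. Running OFTPL with the optimistic prediction $m_t = h_{t-1}$, I would use the \emph{gradual-variation} form of the OFTPL regret bound from \citet{Suggala2019}, whose leading penalty is governed not by the Lipschitz modulus of each individual loss but by the discrepancy between the realized loss $h_t$ and its prediction $h_{t-1}$. The crucial observation is that \Cref{ass:smoothness}, applied to $h_t(\pi) = -V_{W_t}^{\pi}(\mu)$ and $h_{t-1}(\pi) = -V_{W_{t-1}}^{\pi}(\mu)$, states precisely that the cross-difference $[h_t(\pi) - h_{t-1}(\pi)] - [h_t(\pi') - h_{t-1}(\pi')]$ is at most $\tilde{L}\|W_t - W_{t-1}\|_1\|\pi - \pi'\|_1$; that is, $h_t - h_{t-1}$ is $\tilde{L}\|W_t - W_{t-1}\|_1$-Lipschitz in the $\ell_1$ norm. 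Feeding this modulus into the OFTPL bound turns the usual worst-case penalty $\eta \tilde{L}_{\max}^2 T$ into $\eta \tilde{L}^2 \sum_t \|W_t - W_{t-1}\|_1^2$, plus the perturbation overhead $\frac{d_\pi D_\pi}{\eta T}$ and the oracle term $c_\pi(T_{\mathcal{O}}, \mathcal{K}_\pi)$.

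The heart of the argument, and the step I expect to be the main obstacle, is controlling the environment variation $\sum_t \|W_t - W_{t-1}\|_1$ that now appears in the $\pi$-player's regret. This is exactly where the \emph{stability} of FTPL+ (rather than Best-Response) is indispensable: because $OL^W$'s iterate is the minimizer of a perturbed cumulative loss, $W_t$ and $W_{t-1}$ differ only by the effect of appending one bounded loss to an objective regularized by the $\mathrm{Exp}(\eta)$ perturbation, so I would prove an expected per-step stability bound of the form $\mathbb{E}\|W_t - W_{t-1}\|_1 \lesssim \eta \cdot (\text{dimension and diameter factors})$, with the oracle inaccuracy carried along. The delicate feature is that the coupling is bidirectional: the environment sequence reacts to the policies, while the policy regret depends on the environment's stability, so the two regret analyses cannot be carried out in isolation, and the oracle error must be threaded through the stability estimate rather than treated as exact.

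Finally, I would collect the pieces into a bound of the form $\regPi + \regW \lesssim \frac{d_\pi D_\pi + d_w D_w}{\eta T} + \eta \tilde{L}^2 \sum_t \mathbb{E}\|W_t - W_{t-1}\|_1^2 + c_\pi(T_{\mathcal{O}}, \mathcal{K}_\pi) + c_W(T_{\mathcal{O}}, \mathcal{K}_W)$ and optimize the shared perturbation parameter $\eta$. Plugging in the stated choice of $\eta$ balances the $1/\eta$ perturbation term against the $\eta \tilde{L}^2$ variation term; since the variation estimate itself scales with the oracle accuracy, this balance produces the coupling term in which $\tilde{L}$ multiplies $\sqrt{c_\pi(T_{\mathcal{O}}, \mathcal{K}_\pi)}$, giving the $\mathcal{O}(\tilde{L}/\sqrt{T})$-style improvement over \Cref{thm:finaltheorem} precisely when $\tilde{L}$ is small.
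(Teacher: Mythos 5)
Your proposal follows essentially the same route as the paper's proof: reduce to $\regPi + \regW$ via \Cref{thm:nrdg}, apply the gradual-variation OFTPL bound with $m_t = h_{t-1}$ so that \Cref{ass:smoothness} makes the per-round Lipschitz modulus $\tilde{L}\|W_t - W_{t-1}\|_1$, control that variation via the expected stability of FTPL+ (the paper's \Cref{lem:ftplpstability}, with the oracle error threaded through exactly as you anticipate), substitute $c_\pi$ and $c_W$ for $\alpha$ using gradient dominance, and optimize $\eta$. The structure, key lemmas, and the identification of FTPL+ stability as the crux all match the paper's argument.
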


\subsection{Strong Gradient-Dominance}
We now explore another extension: strong gradient-dominance. Moreover, in specific parameterizations, the objective functions for Robust MDPs will obey strong gradient-dominance, also known famously as the Polyak-Lojasiewicz condition \citep{POLYAK1963864}. This condition helps improve convergence in nonconvex settings, analogous to the strong convexity condition. We formally state such a property in \Cref{def:plcon}

\begin{restatable}{condition}{plcon}
    \label{def:plcon}
    \label{cond:strongdominance}
    A function $a(x)$ satisfies $\mathcal{K}, \delta$-strong gradient-dominance if for any point $x \in \mathcal{X}$, 
    $$\underset{x^* \in \mathcal{X}}{\min}a(x^*) \geq a(x) + \underset{x'\in \mathcal{X}}{\min}\left[ \mathcal{K}\langle x' - x, \nabla a(x)\rangle + \frac{\delta}{2} \|x' - x\|_2^2 \right]\text{.}$$
\end{restatable}
In general, this is useful for achieving even tighter optimization bounds. Indeed, from \citet{Karimi2016}, if we have this, Projected Gradient Descent enjoys better convergence. 
\begin{restatable}{lemma}{karimi}
    \label{lem:karimi} Here, $x^* = \underset{x^* \in \mathcal{X}}{\argmin} \; a_t(x^*)$ is the minimizer of $a_t$. $c_x^s$ is used for brevity. Given $a_t$ is $L$-Lipschitz continuous and is $\mathcal{K}, \delta$-strongly dominated, we have that using projected gradient descent gets global linear convergence as in 
    $$a_t(x_k) - a_t(x^*) \leq \left(1- \frac{\delta}{\mathcal{K}^2L}\right)^k (a_t(x_0) - a_t(x^*)) = c_x^s(k, \delta, \mathcal{K})\text{.}$$
    
\end{restatable}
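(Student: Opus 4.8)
The plan is to prove the claim as a standard Polyak-Łojasiewicz--style telescoping argument, combining a one-step descent guarantee coming from smoothness with the $\mathcal{K},\delta$-strong gradient-dominance of \Cref{cond:strongdominance}, run with step size $\beta = 1/L$ (the choice that yields the stated contraction factor). I would first establish a per-iteration decrease of the form $a_t(x_{k+1}) - a^* \leq \left(1 - \frac{\delta}{\mathcal{K}^2 L}\right)\left(a_t(x_k) - a^*\right)$, where $a^* = a_t(x^*)$, and then iterate this inequality $k$ times to obtain the advertised geometric rate.

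For the descent step, I would use that the projected gradient update with $\beta = 1/L$ is exactly the minimizer of the quadratic surrogate over $\mathcal{X}$, that is $x_{k+1} = \operatorname{Proj}_{\mathcal{X}}(x_k - \tfrac{1}{L}\nabla a_t(x_k)) = \argmin_{x' \in \mathcal{X}}\left[\langle \nabla a_t(x_k), x' - x_k\rangle + \tfrac{L}{2}\|x' - x_k\|_2^2\right]$. Writing $g_k = \nabla a_t(x_k)$ and applying the $L$-smoothness descent inequality at the pair $(x_k, x_{k+1})$ gives $a_t(x_{k+1}) - a_t(x_k) \leq \min_{x' \in \mathcal{X}} Q_1(x')$, where $Q_1(x') = \langle g_k, x'-x_k\rangle + \tfrac{L}{2}\|x'-x_k\|_2^2$. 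Separately, \Cref{cond:strongdominance} rearranges to $a_t(x_k) - a^* \leq -\min_{x' \in \mathcal{X}} Q_2(x')$ with $Q_2(x') = \mathcal{K}\langle g_k, x'-x_k\rangle + \tfrac{\delta}{2}\|x'-x_k\|_2^2$.

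The crux is then a purely algebraic comparison of these two constrained quadratic minima: I claim $\min_{x'} Q_1(x') \leq \frac{\delta}{\mathcal{K}^2 L}\,\min_{x'} Q_2(x')$. To see this, let $v = \tilde{x} - x_k$ where $\tilde{x}$ attains $\min Q_2$, and test the point $x_k + t v$ with $t = \frac{\delta}{\mathcal{K} L}$; this point lies in $\mathcal{X}$ by convexity provided $t \leq 1$. Substituting and factoring yields $Q_1(x_k + tv) = \frac{\delta}{\mathcal{K}^2 L}\left[\mathcal{K}\langle g_k, v\rangle + \tfrac{\delta}{2}\|v\|_2^2\right] = \frac{\delta}{\mathcal{K}^2 L}\min Q_2$, which bounds $\min Q_1$ from above. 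Combining the three inequalities gives $a_t(x_{k+1}) - a_t(x_k) \leq -\frac{\delta}{\mathcal{K}^2 L}\left(a_t(x_k) - a^*\right)$, hence the per-step contraction, and iterating produces $a_t(x_k) - a^* \leq \left(1 - \frac{\delta}{\mathcal{K}^2 L}\right)^k (a_t(x_0) - a^*)$ as desired.

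I expect the main obstacle to be the comparison of the two quadratic minima in the third step, specifically justifying feasibility of the test point $x_k + tv$: this needs $t = \frac{\delta}{\mathcal{K}L} \leq 1$, i.e.\ $\delta \leq \mathcal{K}L$, which I would either verify from the problem constants (noting $\mathcal{K} \geq 1$ in our setting) or circumvent by a clamping argument at $t = 1$. A secondary point to handle carefully is the sign bookkeeping: since $Q_2(x_k) = 0$ forces $\min Q_2 \leq 0$, multiplying by the positive factor $\frac{\delta}{\mathcal{K}^2 L}$ preserves the inequality direction. Finally I would confirm that $\beta = 1/L$ is admissible for the smoothness descent lemma, which it is whenever $\nabla a_t$ is $L$-Lipschitz. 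Everything else is routine, and the argument mirrors the classical PL analysis of \citet{Karimi2016} specialized to the projected (proximal-indicator) setting.
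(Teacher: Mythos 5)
The paper does not actually prove this lemma: it is imported wholesale from \citet{Karimi2016} (the paper writes ``Indeed, from \citet{Karimi2016} \dots'' and gives no argument in the appendix). Your proposal is therefore a reconstruction rather than a parallel to anything in the paper, and it is essentially the correct one — it is the standard proximal-PL linear-convergence argument: the projected step with $\beta = 1/L$ minimizes the quadratic surrogate $Q_1$, smoothness turns that minimum into a per-step decrease, and the comparison $\min Q_1 \leq \frac{\delta}{\mathcal{K}^2 L}\min Q_2$ via the test point $x_k + \frac{\delta}{\mathcal{K}L}(\tilde{x}-x_k)$ links the decrease to the suboptimality gap through \Cref{cond:strongdominance}. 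Your algebra for the test point checks out, the sign bookkeeping ($\min Q_2 \leq Q_2(x_k)=0$) is right, and the telescoping is routine. The one genuine loose end is the one you flag: feasibility of the test point requires $\delta \leq \mathcal{K}L$, which is strictly stronger than the condition $\delta \leq \mathcal{K}^2 L$ needed merely for the contraction factor to be nonnegative, and your fallback of ``clamping at $t=1$'' is not actually worked out — at $t=1$ the identity $Q_1(x_k+tv) = \frac{\delta}{\mathcal{K}^2L}Q_2(\tilde{x})$ no longer holds and one would need a separate case analysis. Since the paper's own application takes $\delta = T/2$ (\Cref{lem:lindomsatcon}), which can be large, this hypothesis deserves to be stated explicitly rather than assumed away; with $\delta \leq \mathcal{K}L$ added as an assumption your proof is complete. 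Note also that the lemma's phrasing ``$a_t$ is $L$-Lipschitz continuous'' must be read as $L$-smoothness of $\nabla a_t$, consistent with \Cref{lem:oracleconvergence}, exactly as you observe.
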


\subsection{Direct Parameterization with Regularization}
Say we are in a setting where we want to maximize $V_W^{\pi}$ and ensure that the policy $\pi$ is regularized according to the $\ell_2$ norm. In this way, we can redefine the loss function to be $g(W, \pi) = V_W^{\pi}(\mu) - \| \pi \|_2^2$. Again, the sets $\mathcal{W}$ and $\mathcal{T}$ are bounded convex sets in this setting. 
\begin{algorithm}
\caption{Algorithm under Regularization}
\label{alg:strong}
\KwData{$T$}
\For{$t \in [T]$}{
    $\text{Sample } \sigma_t \sim \operatorname{Exp}(\eta)$\;
    $\pi_t \leftarrow \mathcal{O}_{\alpha}\left(\sum_{i=1}^{t-1} -V_{W_i}^{\pi_t}(\mu) + \|\pi_t\|_2^2 - \langle \sigma_t, \pi_t \rangle\right)$ \Comment{Follow the Perturbed Leader} 
    $W_t \leftarrow \mathcal{O}_{\alpha}\left(V_{W_t}^{\pi_t}(\mu) + \|\pi_t\|_2^2 \right)$ \Comment{Best-Response}
}
\end{algorithm}

\begin{restatable}{lemma}{lindomsatcon}
\label{lem:lindomsatcon}
    We have $g(W, \pi) = V_W^{\pi}(\mu) - \| \pi \|_2^2$ is $ \mathcal{K}_{\pi},\frac{T}{2}$ strongly gradient-dominated on set $\mathcal{T}$
\end{restatable}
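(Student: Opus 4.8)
The plan is to recognize that, although the lemma is phrased in terms of $g$, the object the $\pi$-player actually minimizes in \Cref{alg:strong} is the loss $a := -g(W,\pi) = -V_W^{\pi}(\mu) + \|\pi\|_2^2$, so reading ``$g$ is $\mathcal{K}_\pi,\frac{T}{2}$-strongly gradient-dominated'' as the statement that this loss (and the cumulative loss $\sum_i a(W_i,\cdot)$ fed to the oracle) satisfies \Cref{def:plcon}. I would first isolate the two pieces of $a$: the value-function term $-V_W^{\pi}(\mu)$, which \Cref{lem:sumgraddom} already certifies to be gradient-dominated (\Cref{def:defgradvw}) with constant $\mathcal{K}_\pi = \frac{1}{1-\gamma}\|d_\mu^{\pi^*}/\mu\|_\infty$, and the regularizer $\|\pi\|_2^2$. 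The former is meant to supply the linear $\mathcal{K}\langle \pi'-\pi,\nabla a\rangle$ term of \Cref{def:plcon}, while the latter must supply the quadratic $\frac{\delta}{2}\|\pi'-\pi\|_2^2$ term.

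The key structural fact I would exploit is that $\|\pi\|_2^2$ is an \emph{exact} quadratic, so its second-order expansion has no remainder: $\|\pi'\|_2^2 = \|\pi\|_2^2 + \langle 2\pi, \pi'-\pi\rangle + \|\pi'-\pi\|_2^2$. Hence the regularizer contributes precisely a $\|\pi'-\pi\|_2^2$ curvature term, and because each of the (up to) $T$ FTPL rounds carries one copy of the regularizer in the accumulated loss, this curvature adds across rounds and yields the $\delta = \frac{T}{2}$ modulus after constant bookkeeping. To verify \Cref{def:plcon} it is enough to exhibit a single admissible test point $\pi'\in\mathcal{T}$ for which $\mathcal{K}_\pi\langle \pi'-\pi,\nabla a(\pi)\rangle + \frac{\delta}{2}\|\pi'-\pi\|_2^2 \le a(\pi^*) - a(\pi)$, where $\pi^* = \argmin_{\mathcal{T}} a$; since the inner minimum over $\pi'$ in \Cref{def:plcon} is at most the value at any such point, this suffices. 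The natural candidate is a step from $\pi$ toward the direction that realizes the gradient-dominance of the value term. I would then split $\nabla a = -\nabla_\pi V_W^{\pi}(\mu) + 2\pi$, bound the value-function difference $V^{\pi^*}(\mu)-V^{\pi}(\mu)$ with the performance-difference / gradient-dominance estimate (the $\pi$-analogue of \Cref{lem:wpd}), and substitute the exact quadratic above for the regularizer contribution.

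The main obstacle is reconciling the first-order, nonconvex gradient-dominance of the value function with the precise geometry demanded by \Cref{def:plcon}. Gradient-dominance controls $V^{\pi^*}(\mu)-V^{\pi}(\mu)$ only through $\max_{\bar\pi\in\mathcal{T}}\langle \bar\pi-\pi,\nabla_\pi V_W^{\pi}(\mu)\rangle$, whereas the reduced inequality needs the linear form evaluated in a single fixed direction, and the regularized optimum $\pi^*$ is \emph{not} the value-function optimum, so a naive choice $\pi'=\pi^*$ leaves a sign-mismatched cross term. I expect the decisive step to be choosing the test point $\pi'$ along the gradient-dominance maximizing direction and using the regularizer-supplied curvature $\frac{\delta}{2}\|\pi'-\pi\|_2^2$ to absorb that cross term, so that the gradient-dominance bound and the exact quadratic can be added consistently. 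Pinning the constant down to exactly $\frac{T}{2}$, and confirming that the regularized loss still has a Lipschitz gradient so that \Cref{lem:karimi} subsequently applies, is the remaining careful bookkeeping.
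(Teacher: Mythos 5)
Your proposal matches the paper's proof in all essentials: the paper likewise obtains the linear term from the performance-difference / gradient-dominance bound on the cumulative value-function losses (with constant $\mathcal{K}_\pi = \frac{1}{1-\gamma}\|d_\mu^{\pi^*}/\mu\|_\infty$) and obtains the quadratic modulus from the exact identity $-\|\bar\pi-\pi\|_2^2 = -\|\bar\pi\|_2^2+\|\pi\|_2^2+2\langle\bar\pi-\pi,\pi\rangle$ applied to the accumulated regularizers, folding the linear part of that identity into the gradient $\nabla_\pi\bigl(-\sum_t V_{W_t}^{\pi}-\langle\sigma,\pi\rangle-T\|\pi\|_2^2\bigr)$. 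The only remark worth making is that the ``sign-mismatched cross term'' you anticipate having to absorb never materializes, precisely because that expansion is exact and is performed inside the minimum over $\bar\pi$; otherwise your route is the paper's.
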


In this setting, if the $\pi$ player employs FTPL where its Approximate Optimization Oracle enjoys even better convergence and the $W$-Player employs Best Response, we have the robustness bound as follows. We summarize this algorithm in \Cref{alg:strong}. Indeed, given strong gradient-dominance, we have that the dependence on the complexity for the Optimization Oracle is better for the $\pi$-player, slightly improving the robustness bounds. 

\begin{restatable}{theorem}{strongdominancerobustness}
    \label{thm:strongdominancerobustness}
    We are in the direct parameterization setting where the objective function is the regularized Value Function. Assume that the set of possible transition matrices $\mathcal{W}$ is convex. Let $OL^{\pi}$ use FTPL and $OL^W$ use Best-Response. Under direct parameterization, given that \Cref{cond:strongdominance} holds and $\eta = \frac{1}{L_\pi\sqrt{Td_\pi}}$, we have that the robustness of \Cref{alg:cap} is for any $\bar{\pi}$
    \begin{align}
        \underset{W^*\in \mathcal{W}}{\min} \sum_{t=0}^T V_{W^*}^{\bar{\pi}}(\mu) - \|\bar{\pi}\|^2 -  \underset{W^*\in \mathcal{W}}{\min}& \sum_{t=0}^T  V_{W^*}^{\pi_t}(\mu) 0\|\pi_t\|^2  \leq \frac{2d_\pi^\frac{3}{2} D_{\pi} L_\pi}{\sqrt{T}}  + c_{\pi}^s\left(T_{\mathcal{O}}, \mathcal{K}_{\pi},\frac{1}{2}\right) + c_W(T_{\mathcal{O}}, \mathcal{K}_{W})\text{.} \nonumber
    \end{align}
\end{restatable}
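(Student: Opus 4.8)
The plan is to instantiate the master inequality of \Cref{thm:nrdg} and then bound each of the two average regrets separately, mirroring the argument behind \Cref{thm:finaltheorem} but replacing the $\pi$-player's oracle-accuracy contribution with the faster, linearly-convergent bound available under strong gradient-dominance. First, setting $l_t(W_t) = g(W_t, \pi_t)$ and $h_t(\pi_t) = -g(W_t, \pi_t)$ with $g(W,\pi) = V_W^{\pi}(\mu) - \|\pi\|_2^2$, \Cref{thm:nrdg} shows that the regularized robustness gap on the left-hand side is upper bounded by $\regW + \regPi$, in expectation over the FTPL noise. It then remains to control the two regrets under the stated choices $OL^{\pi} = {}$FTPL and $OL^{W} = {}$Best-Response.

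For the $W$-player I would invoke \Cref{lem:bestresponse}, so that $\regW$ is at most the oracle accuracy. Since the regularizer $\|\pi_t\|_2^2$ does not depend on $W$, the $W$-player's per-round objective equals $V_{W_t}^{\pi_t}(\mu)$ up to a $W$-independent constant, which is gradient-dominated with constant $\mathcal{K}_W$ by \Cref{lem:wdomnat}, so \Cref{con:onedom} holds. Running Projected Gradient Descent as the oracle and applying \Cref{lem:oracleconvergence} then yields oracle accuracy $c_W(T_{\mathcal{O}}, \mathcal{K}_W)$, giving $\regW \le c_W(T_{\mathcal{O}}, \mathcal{K}_W)$.

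For the $\pi$-player I would use the FTPL regret guarantee of \citet{Suggala2019}, whose bound decomposes into a perturbation/stability term and an additive oracle-accuracy term. Optimizing the perturbation term at $\eta = \frac{1}{L_\pi\sqrt{Td_\pi}}$ produces the $\frac{2 d_\pi^{3/2} D_\pi L_\pi}{\sqrt{T}}$ contribution, exactly as in \Cref{thm:finaltheorem}. The essential difference lies in the oracle-accuracy term: because the objective the FTPL oracle minimizes, $\sum_{i=1}^{t-1} h_i(\pi) - \langle \sigma, \pi\rangle$, carries the accumulated $\ell_2$ regularizer, \Cref{lem:lindomsatcon} shows it is strongly gradient-dominated, which simultaneously verifies \Cref{con:allminusonedom}. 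Consequently Projected Gradient Descent enjoys the linear rate of \Cref{lem:karimi} in place of the $\mathcal{O}(T_{\mathcal{O}}^{-1/2})$ rate of plain gradient-dominance, so the oracle term may be taken as $c_\pi^s(T_{\mathcal{O}}, \mathcal{K}_\pi, \tfrac{1}{2})$. Summing the two regret bounds then yields the claimed inequality.

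The main obstacle I anticipate is the bookkeeping that connects the strong gradient-dominance of the summed loss to the oracle term inside the FTPL bound. Concretely, one must confirm that the per-round FTPL objective remains strongly gradient-dominated after adding both the linear perturbation $-\langle\sigma,\pi\rangle$ and the accumulated regularizer, so that \Cref{lem:karimi} applies with the constants of \Cref{lem:lindomsatcon}; and one must track that the oracle error enters the \citet{Suggala2019} regret bound only additively and uniformly across rounds, so that substituting $\alpha \mapsto c_\pi^s(T_{\mathcal{O}}, \mathcal{K}_\pi, \tfrac{1}{2})$ is legitimate. The perturbation-term optimization and the $W$-player bound are routine; essentially all the novelty sits in replacing the plain oracle rate with the faster strongly-dominated one.
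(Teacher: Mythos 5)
Your proposal matches the paper's proof essentially step for step: both apply \Cref{thm:nrdg} to reduce the robustness gap to $\regW + \regPi$, bound the $\pi$-player's FTPL regret with $\eta = \frac{1}{L_\pi\sqrt{Td_\pi}}$ to obtain the $\frac{2d_\pi^{3/2}D_\pi L_\pi}{\sqrt{T}}$ term, substitute the linearly convergent oracle error $c_\pi^s(T_{\mathcal{O}},\mathcal{K}_\pi,\tfrac12)$ from \Cref{lem:karimi} using the strong gradient-dominance of \Cref{lem:lindomsatcon}, and bound the Best-Response $W$-player by $c_W(T_{\mathcal{O}},\mathcal{K}_W)$ via \Cref{lem:bestresponse} and \Cref{lem:oracleconvergence}. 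Your additional remark that the regularizer is $W$-independent (so the $W$-player's gradient-dominance is unaffected) is a detail the paper leaves implicit, but it does not change the argument.
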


\section{Experiments}
\label{sec:experiments}
We now turn to explore how our algorithm finds robust minima empirically. We will use \Cref{alg:direct} to optimize a policy in the GridWorld MDP \citep{Sutton1998}. This setting is a traditional MDP where the world is a grid where the initial state is one corner of the grid. The goal state is the opposite corner of the grid. At each step, the policy can take any of four actions. The next state is sampled respectively from the transition matrix. If the policy lands in the goal state, it receives a reward of $10$, and the MDP is finished. Otherwise, it receives a reward of $-1$. We experiment in the setting where the grid is $5$ states tall and $5$ states wide. We wish to measure how quickly the robustness of policy is improved through each iteration of our algorithm. As a metric to measure robustness, given a policy, we choose the transition matrix that minimizes the expected reward of the initial state and reports the initial state's expected reward. We do this for every iteration of our algorithm. We will do this over different adversarial transition matrix sets. The sets in question will be
$$\mathcal{T} = \{T \text{ s.t. } \|T - T_0\|_q \leq \tau\}\text{.}$$ 
Here, $T_0$ is some randomly generated initial transition matrix, $q$ is a hyperparameter affecting the shape of the transition set, and $\tau$ is the radius of the transition set. We demonstrate robustness improvement over several values of $\tau$ and $q$. We plot the convergence of our algorithm over $q \in \{1, 2\}$ and $\tau \in \{.1, .2, .3, .5\}$ in \Cref{fig:allfigures}. As a baseline, we provide the convergence results for Double-Loop Robust Policy Gradient (DRPG) from \cite{wang2022convergence}. We see in these convergence plots that our algorithm roughly achieves a convergence rate of $\mathcal{O}\left(T^{-\frac{1}{2}}\right)$. This is achieved across all uncertainty set shapes and radii. This corroborates the theoretical results of our framework's versatility and efficiency in different environmental settings. When comparing our results to the baseline, we see that our method and DRPG achieve similar convergence curves. Overall, these results suggest that, empirically, this algorithm performs competitively with existing algorithms and slightly.

\begin{figure}
    \centering
    \begin{tabular}{cccc}
    
        \subfloat[$\tau = .1, q=1$]{%
            \includegraphics[width=.22\linewidth]{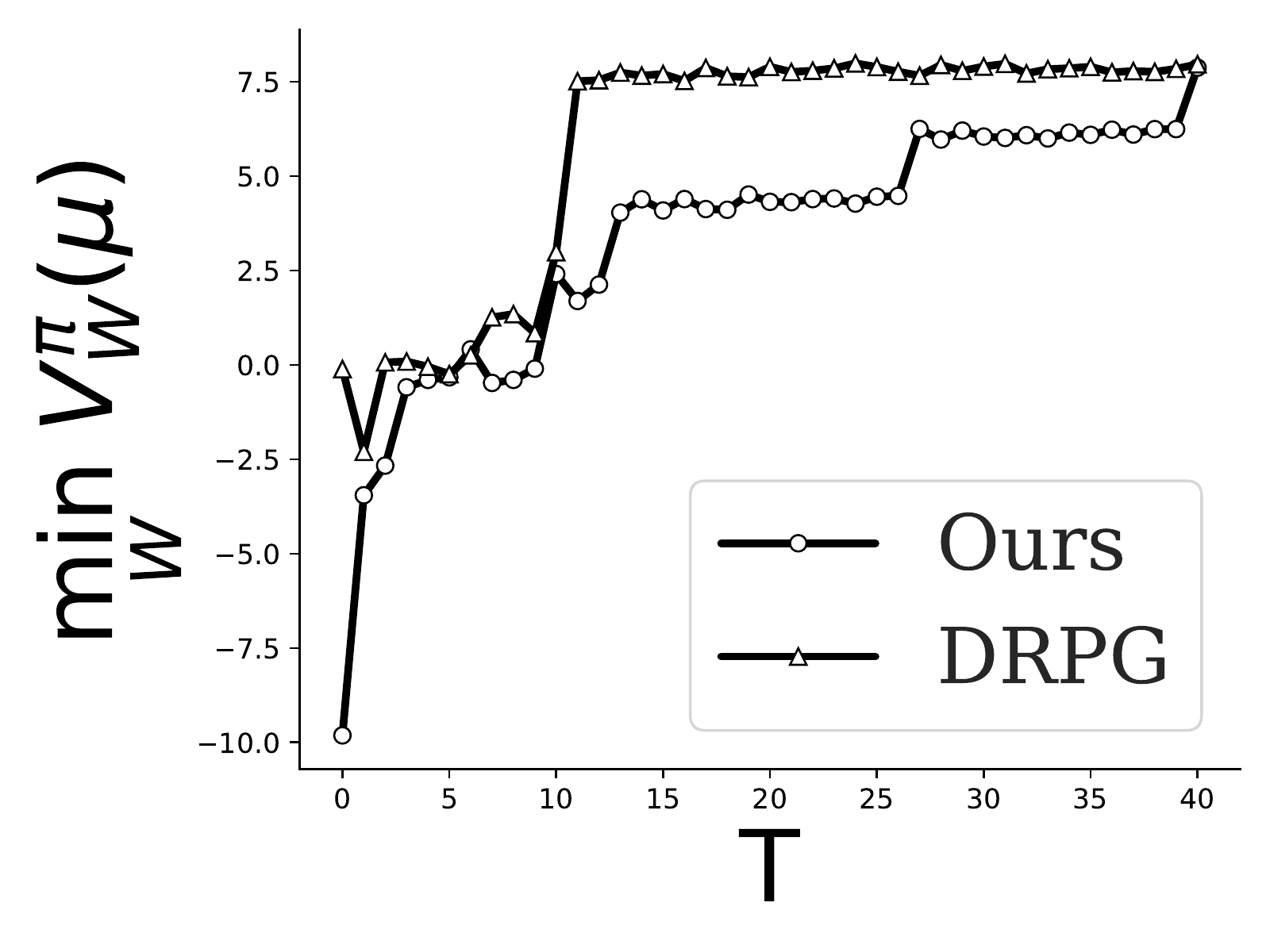}%
            \label{fig:r1_s1}%
        } & \hfill
        \subfloat[$\tau = .2, q=1$]{%
            \includegraphics[width=.22\linewidth]{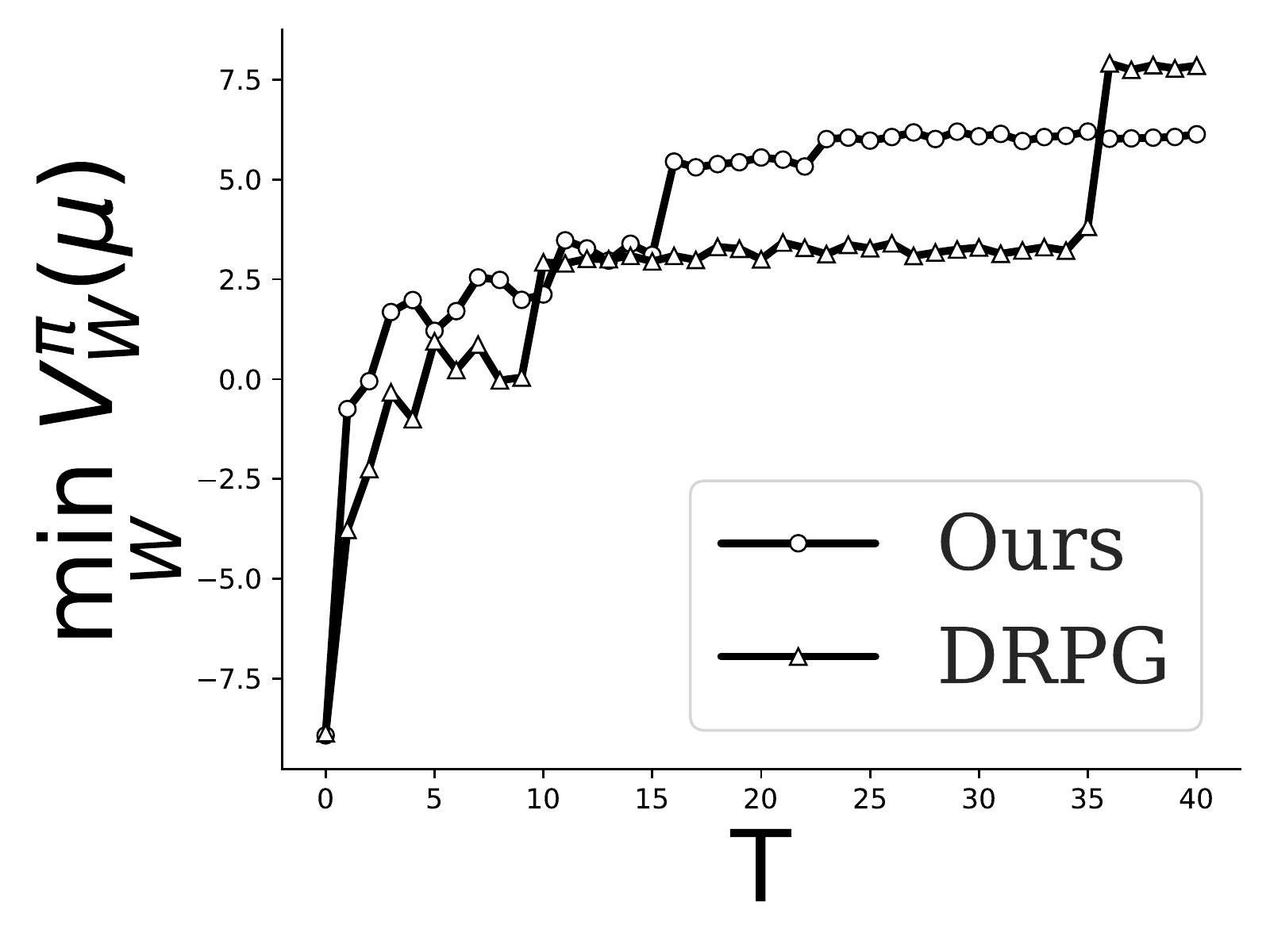}%
            \label{fig:r2_s1}%
        } & \hfill
        \subfloat[$\tau = .3, q=1$]{%
            \includegraphics[width=.22\linewidth]{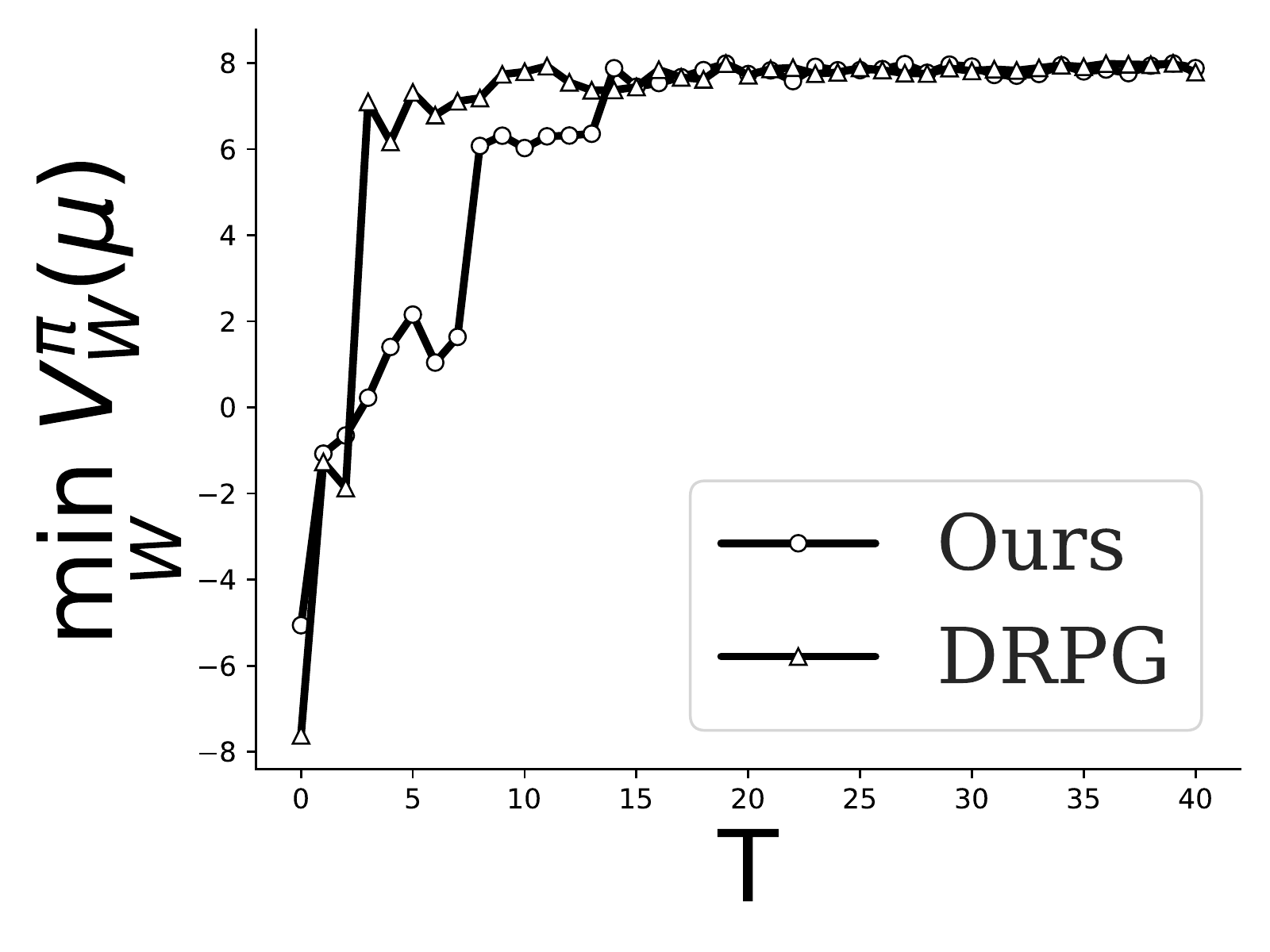}%
            \label{fig:r3_s1}%
        }  & \hfill
         \subfloat[$\tau = .5, q=1$]{%
            \includegraphics[width=.22\linewidth]{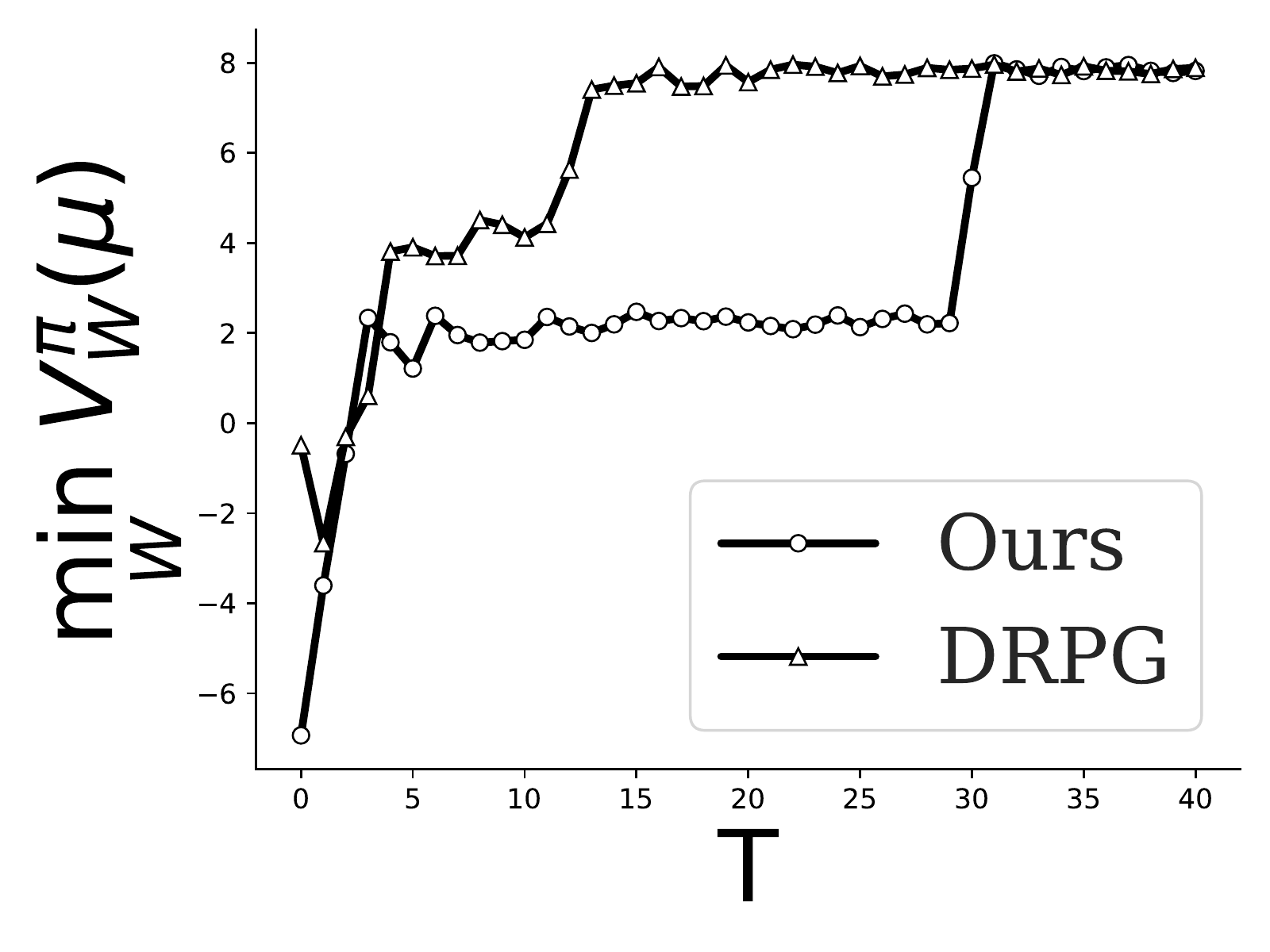}%
            \label{fig:r5_s1}%
        }  \\
        \subfloat[$\tau = .1, q=2$]{%
            \includegraphics[width=.22\linewidth]{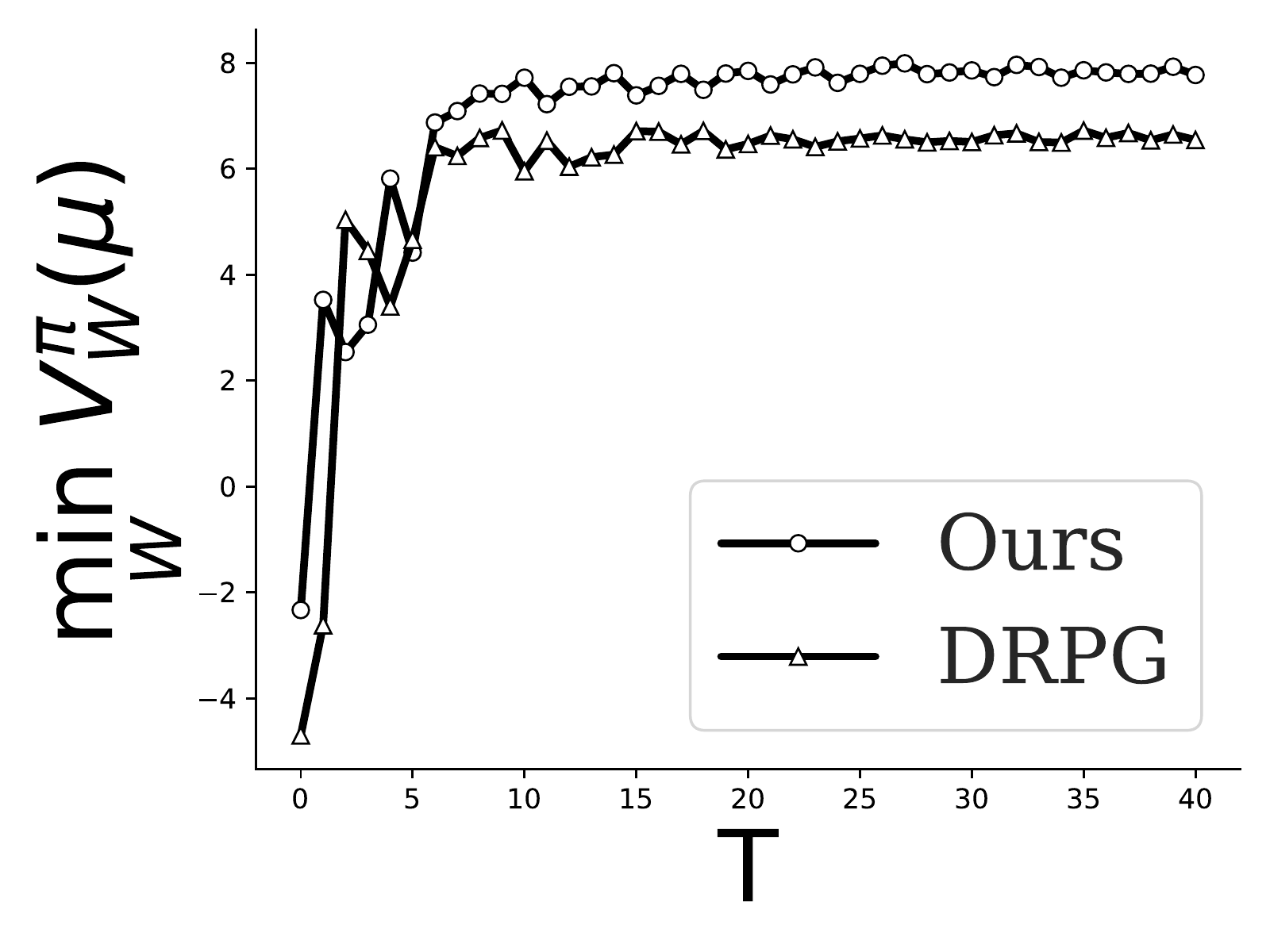}%
            \label{fig:r1_s2}%
        } & \hfill
        \subfloat[$\tau = .2, q=2$]{%
            \includegraphics[width=.22\linewidth]{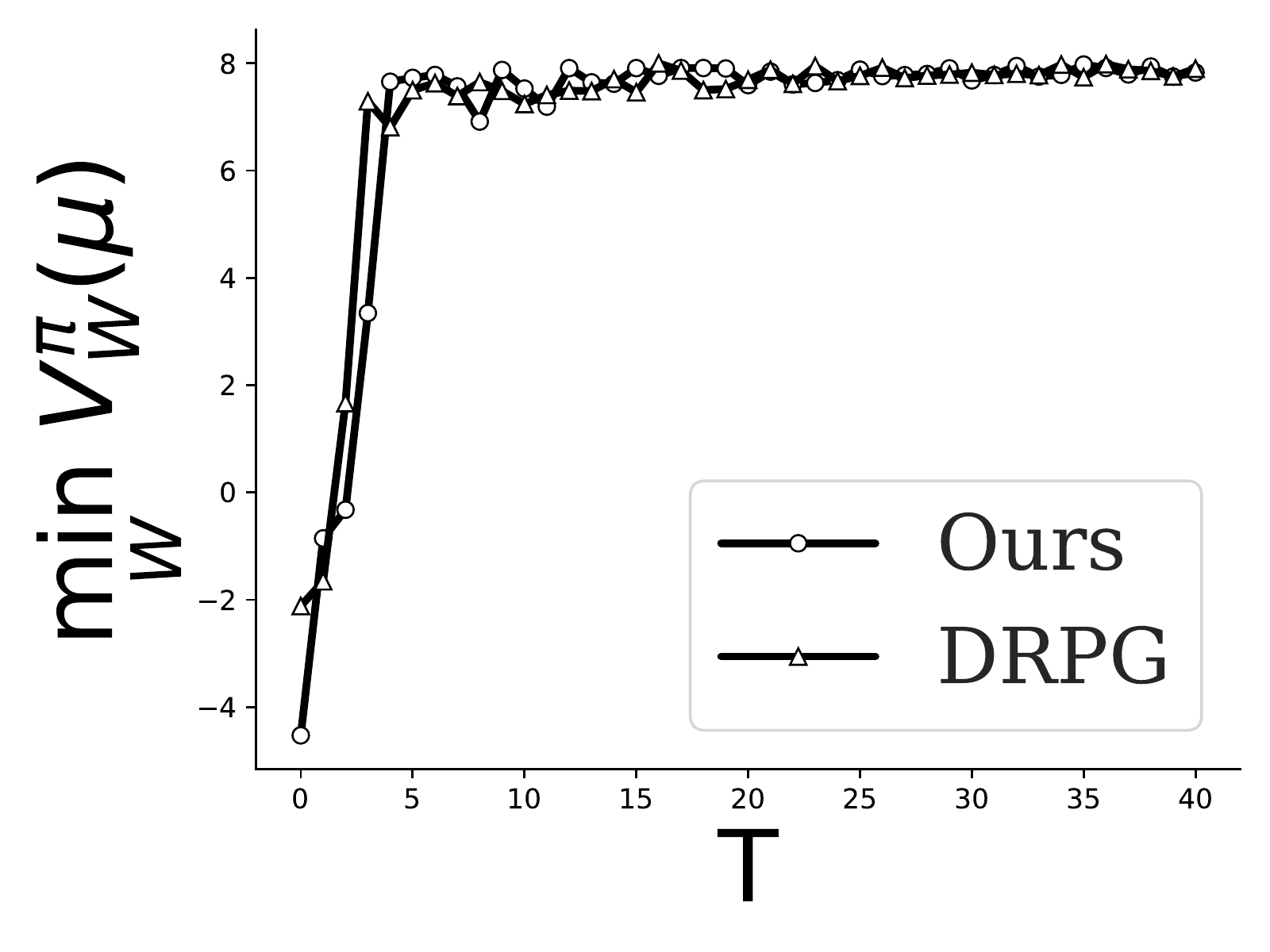}%
            \label{fig:r2_s2}%
        } & \hfill
        \subfloat[$\tau = .3, q=2$]{%
            \includegraphics[width=.22\linewidth]{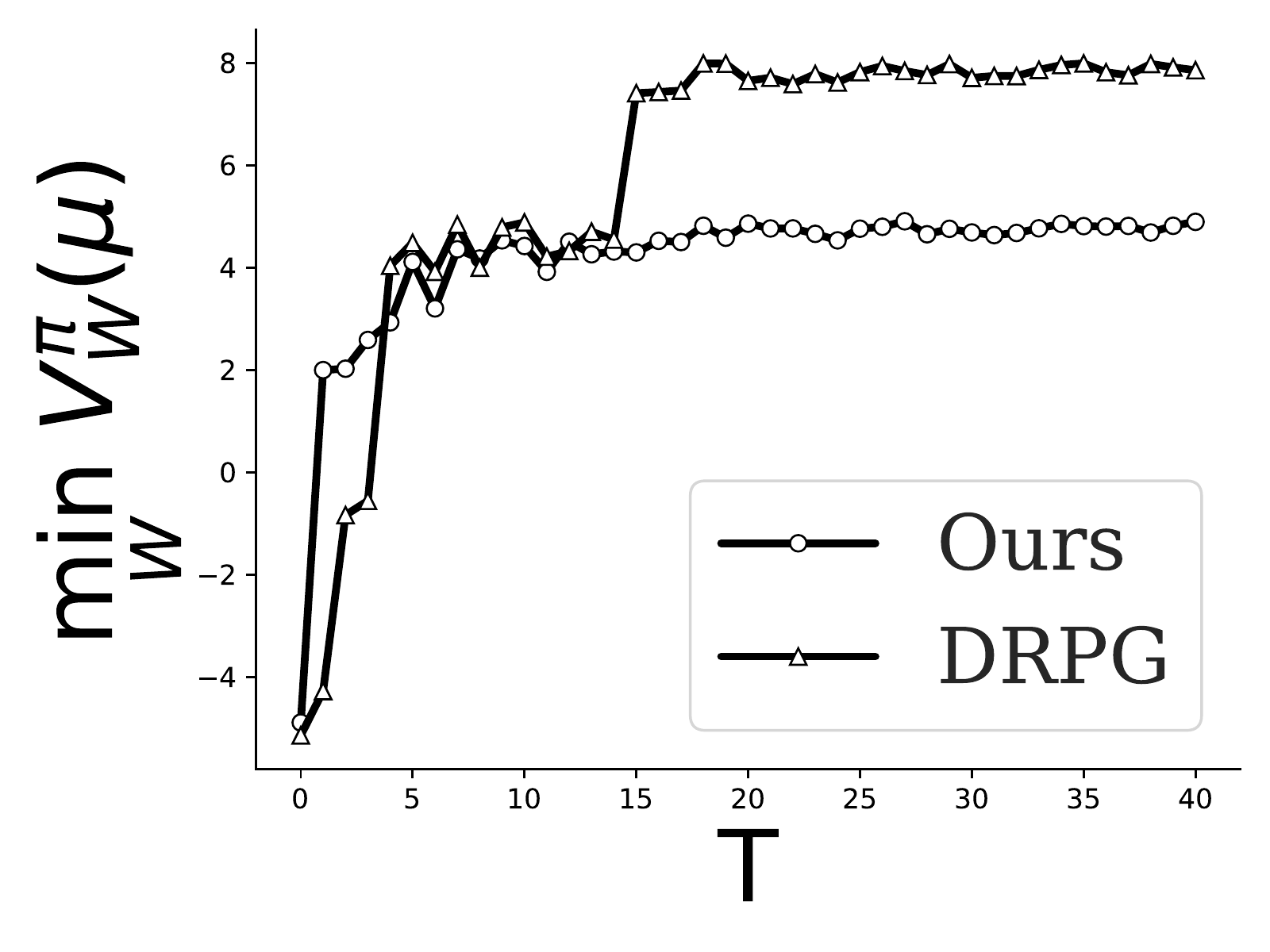}%
            \label{fig:r3_s2}%
        }  & \hfill
         \subfloat[$\tau = .5, q=2$]{%
            \includegraphics[width=.22\linewidth]{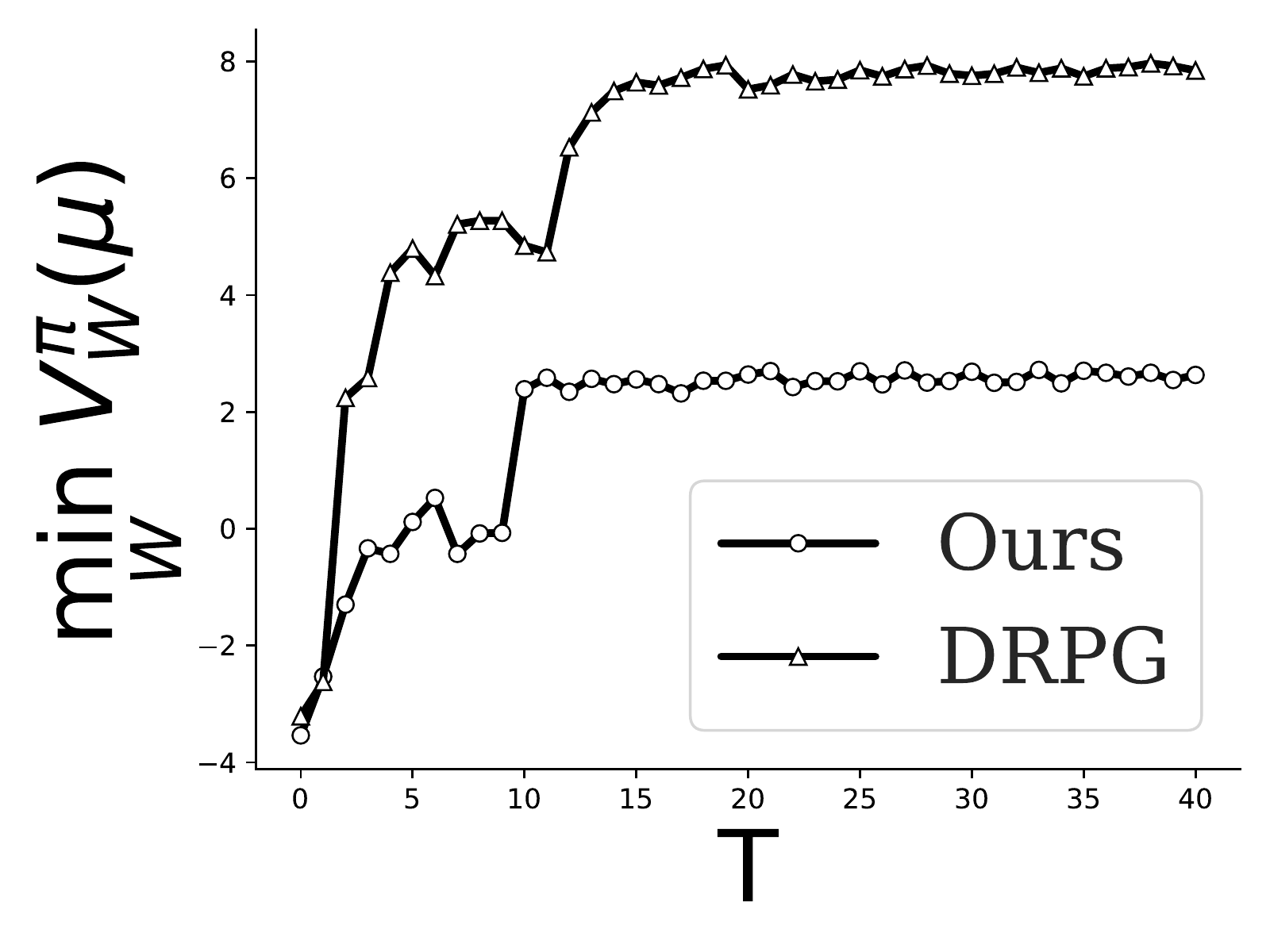}%
            \label{fig:r5_s2}%
        } 
    \end{tabular}
    \caption{We plot the convergence of our algorithm over many different transition matrix uncertainty set shapes. We see that over all shapes, our algorithm converges in roughly the predicted $\frac{1}{\sqrt{T}}$ rate predicted by our results. The convergence curves of DRPG and our algorithms are very similar.}
    \label{fig:allfigures}
\end{figure}

\subsection{Implementation of the Practical Algorithm}

    We have provided a GitHub repository to reproduce our experiments. We use the OpenAI Gym environment to set up our experiments. For all optimization tasks, we utilitize the constrained optimization libraries in SciPy. Moreover, implementing projected gradient descent unfortunately requires manually designing the gradient calculation depending on the problem setting.

\section{Discussion and Limitations}
In this paper, we developed a nonconvex No-Regret framework that has decoupled the convergence for Robust MDP algorithms. Based on the proposed framework, we designed different Robust MDP algorithms under standard gradient-dominance, strong gradient-dominance, and smooth MDPs. The proven convergence results are some of the strongest in the literature, with only a convexity assumption on the set of possible transition matrices. Possible extensions include using this nonconvex No-Regret Framework for other nonconvex problems, such as other nonconvex games, or exploring how different minimization oracles could empirically improve the performance of our algorithms. Another possible avenue could be studying Robust MDPs where the optimization objective obeys the strict saddle property.

\paragraph{Limitations} However, our work has several limitations. Firstly, we require some gradient-dominance conditions for the policy and environmental dynamics player. In many settings, the objective function for the Robust MDP does not satisfy this. This assumption does reduce the scope of our work. Moreover, many of our convergence guarantees are only made in expectation, while many other bounds in the literature are made absolutely. Moreover, we do not generate a single policy that achieves strong robustness but, instead, a series of policies that, if used together, obey our convergence guarantees.

\bibliographystyle{tmlr}
\bibliography{ref}

\begin{thebibliography}{55}
\providecommand{\natexlab}[1]{#1}
\providecommand{\url}[1]{\texttt{#1}}
\expandafter\ifx\csname urlstyle\endcsname\relax
  \providecommand{\doi}[1]{doi: #1}\else
  \providecommand{\doi}{doi: \begingroup \urlstyle{rm}\Url}\fi

\bibitem[Agarwal et~al.(2019)Agarwal, Kakade, Lee, and Mahajan]{Agrawal2019}
Alekh Agarwal, Sham~M. Kakade, Jason~D. Lee, and Gaurav Mahajan.
\newblock Optimality and approximation with policy gradient methods in markov decision processes.
\newblock \emph{CoRR}, abs/1908.00261, 2019.
\newblock URL \url{http://arxiv.org/abs/1908.00261}.

\bibitem[Badrinath \& Kalathil(2021)Badrinath and Kalathil]{pmlr-v139-badrinath21a}
Kishan~Panaganti Badrinath and Dileep Kalathil.
\newblock Robust reinforcement learning using least squares policy iteration with provable performance guarantees.
\newblock In Marina Meila and Tong Zhang (eds.), \emph{Proceedings of the 38th International Conference on Machine Learning}, volume 139 of \emph{Proceedings of Machine Learning Research}, pp.\  511--520. PMLR, 18--24 Jul 2021.
\newblock URL \url{https://proceedings.mlr.press/v139/badrinath21a.html}.

\bibitem[Bagnell et~al.()Bagnell, Ng, and Schneider]{bagnellsolving}
J~Andrew Bagnell, Andrew~Y Ng, and Jeff~G Schneider.
\newblock Solving uncertain markov decision processes.

\bibitem[Balduzzi et~al.(2018)Balduzzi, Racaniere, Martens, Foerster, Tuyls, and Graepel]{balduzzi2018mechanics}
David Balduzzi, Sebastien Racaniere, James Martens, Jakob Foerster, Karl Tuyls, and Thore Graepel.
\newblock The mechanics of n-player differentiable games.
\newblock In \emph{International Conference on Machine Learning}, pp.\  354--363. PMLR, 2018.

\bibitem[Bhandari \& Russo(2022)Bhandari and Russo]{bhandari2022global}
Jalaj Bhandari and Daniel Russo.
\newblock Global optimality guarantees for policy gradient methods, 2022.

\bibitem[Buchholz \& Scheftelowitsch(2019)Buchholz and Scheftelowitsch]{buchholz2019light}
Peter Buchholz and Dimitri Scheftelowitsch.
\newblock Light robustness in the optimization of markov decision processes with uncertain parameters.
\newblock \emph{Computers \& Operations Research}, 108:\penalty0 69--81, 2019.

\bibitem[Chen et~al.(2019)Chen, Yu, and Haskell]{chen2019distributionally}
Zhi Chen, Pengqian Yu, and William~B Haskell.
\newblock Distributionally robust optimization for sequential decision-making.
\newblock \emph{Optimization}, 68\penalty0 (12):\penalty0 2397--2426, 2019.

\bibitem[Choi et~al.(2021)Choi, Lee, Sreenath, Tomlin, and Herbert]{choi2021robust}
Jason~J. Choi, Donggun Lee, Koushil Sreenath, Claire~J. Tomlin, and Sylvia~L. Herbert.
\newblock Robust control barrier-value functions for safety-critical control, 2021.

\bibitem[Clement \& Kroer(2021)Clement and Kroer]{clement2021first}
Julien~Grand Clement and Christian Kroer.
\newblock First-order methods for wasserstein distributionally robust mdp.
\newblock In \emph{International Conference on Machine Learning}, pp.\  2010--2019. PMLR, 2021.

\bibitem[Cobbe et~al.(2018)Cobbe, Klimov, Hesse, Kim, and Schulman]{Cobbe2018}
Karl Cobbe, Oleg Klimov, Christopher Hesse, Taehoon Kim, and John Schulman.
\newblock Quantifying generalization in reinforcement learning.
\newblock \emph{CoRR}, abs/1812.02341, 2018.
\newblock URL \url{http://arxiv.org/abs/1812.02341}.

\bibitem[Daskalakis et~al.(2017)Daskalakis, Ilyas, Syrgkanis, and Zeng]{daskalakis2017training}
Constantinos Daskalakis, Andrew Ilyas, Vasilis Syrgkanis, and Haoyang Zeng.
\newblock Training gans with optimism.
\newblock \emph{arXiv preprint arXiv:1711.00141}, 2017.

\bibitem[Dong et~al.(2022)Dong, Li, Wang, and Zhang]{dong2022online}
Jing Dong, Jingwei Li, Baoxiang Wang, and Jingzhao Zhang.
\newblock Online policy optimization for robust mdp.
\newblock \emph{arXiv preprint arXiv:2209.13841}, 2022.

\bibitem[Eysenbach \& Levine(2021)Eysenbach and Levine]{eysenbach2021maximum}
Benjamin Eysenbach and Sergey Levine.
\newblock Maximum entropy rl (provably) solves some robust rl problems.
\newblock \emph{arXiv preprint arXiv:2103.06257}, 2021.

\bibitem[Farebrother et~al.(2018)Farebrother, Machado, and Bowling]{Farebrother2018}
Jesse Farebrother, Marlos~C. Machado, and Michael Bowling.
\newblock Generalization and regularization in {DQN}.
\newblock \emph{CoRR}, abs/1810.00123, 2018.
\newblock URL \url{http://arxiv.org/abs/1810.00123}.

\bibitem[Ghavamzadeh et~al.(2016)Ghavamzadeh, Petrik, and Chow]{ghavamzadeh2016safe}
Mohammad Ghavamzadeh, Marek Petrik, and Yinlam Chow.
\newblock Safe policy improvement by minimizing robust baseline regret.
\newblock \emph{Advances in Neural Information Processing Systems}, 29, 2016.

\bibitem[Goyal \& Grand-Clement(2023)Goyal and Grand-Clement]{goyal2023robust}
Vineet Goyal and Julien Grand-Clement.
\newblock Robust markov decision processes: Beyond rectangularity.
\newblock \emph{Mathematics of Operations Research}, 48\penalty0 (1):\penalty0 203--226, 2023.

\bibitem[Iyengar(2005)]{iyengar2005robust}
Garud~N Iyengar.
\newblock Robust dynamic programming.
\newblock \emph{Mathematics of Operations Research}, 30\penalty0 (2):\penalty0 257--280, 2005.

\bibitem[Kakade \& Langford(2002)Kakade and Langford]{ShamLangford}
Sham Kakade and John Langford.
\newblock Approximately optimal approximate reinforcement learning.
\newblock In \emph{Proceedings of the Nineteenth International Conference on Machine Learning}, ICML '02, pp.\  267–274, San Francisco, CA, USA, 2002. Morgan Kaufmann Publishers Inc.
\newblock ISBN 1558608737.

\bibitem[Kakade(2001)]{kakade2001natural}
Sham~M Kakade.
\newblock A natural policy gradient.
\newblock \emph{Advances in neural information processing systems}, 14, 2001.

\bibitem[Karimi et~al.(2016)Karimi, Nutini, and Schmidt]{Karimi2016}
Hamed Karimi, Julie Nutini, and Mark Schmidt.
\newblock Linear convergence of gradient and proximal-gradient methods under the polyak-{\l}ojasiewicz condition.
\newblock \emph{CoRR}, abs/1608.04636, 2016.
\newblock URL \url{http://arxiv.org/abs/1608.04636}.

\bibitem[Konda \& Tsitsiklis(1999)Konda and Tsitsiklis]{konda1999actor}
Vijay Konda and John Tsitsiklis.
\newblock Actor-critic algorithms.
\newblock \emph{Advances in neural information processing systems}, 12, 1999.

\bibitem[Lobo et~al.(2020)Lobo, Ghavamzadeh, and Petrik]{Lobo2011}
Elita~A. Lobo, Mohammad Ghavamzadeh, and Marek Petrik.
\newblock Soft-robust algorithms for handling model misspecification.
\newblock \emph{CoRR}, abs/2011.14495, 2020.
\newblock URL \url{https://arxiv.org/abs/2011.14495}.

\bibitem[Mankowitz et~al.(2018)Mankowitz, Mann, Bacon, Precup, and Mannor]{Mankowitz2018}
Daniel~J. Mankowitz, Timothy~A. Mann, Pierre{-}Luc Bacon, Doina Precup, and Shie Mannor.
\newblock Learning robust options.
\newblock \emph{CoRR}, abs/1802.03236, 2018.
\newblock URL \url{http://arxiv.org/abs/1802.03236}.

\bibitem[Mankowitz et~al.(2019)Mankowitz, Levine, Jeong, Abdolmaleki, Springenberg, Mann, Hester, and Riedmiller]{Mankowitz2019}
Daniel~J. Mankowitz, Nir Levine, Rae Jeong, Abbas Abdolmaleki, Jost~Tobias Springenberg, Timothy~A. Mann, Todd Hester, and Martin~A. Riedmiller.
\newblock Robust reinforcement learning for continuous control with model misspecification.
\newblock \emph{CoRR}, abs/1906.07516, 2019.
\newblock URL \url{http://arxiv.org/abs/1906.07516}.

\bibitem[McMahan \& Abernethy(2013)McMahan and Abernethy]{mcmahan2013minimax}
Brendan McMahan and Jacob Abernethy.
\newblock Minimax optimal algorithms for unconstrained linear optimization.
\newblock \emph{Advances in Neural Information Processing Systems}, 26, 2013.

\bibitem[Nilim \& Ghaoui(2003)Nilim and Ghaoui]{Nilim2003}
Arnab Nilim and Laurent Ghaoui.
\newblock Robustness in markov decision problems with uncertain transition matrices.
\newblock In S.~Thrun, L.~Saul, and B.~Sch\"{o}lkopf (eds.), \emph{Advances in Neural Information Processing Systems}, volume~16. MIT Press, 2003.
\newblock URL \url{https://proceedings.neurips.cc/paper_files/paper/2003/file/300891a62162b960cf02ce3827bb363c-Paper.pdf}.

\bibitem[Packer et~al.(2018)Packer, Gao, Kos, Kr{\"{a}}henb{\"{u}}hl, Koltun, and Song]{Packer2018}
Charles Packer, Katelyn Gao, Jernej Kos, Philipp Kr{\"{a}}henb{\"{u}}hl, Vladlen Koltun, and Dawn Song.
\newblock Assessing generalization in deep reinforcement learning.
\newblock \emph{CoRR}, abs/1810.12282, 2018.
\newblock URL \url{http://arxiv.org/abs/1810.12282}.

\bibitem[Pattanaik et~al.(2017)Pattanaik, Tang, Liu, Bommannan, and Chowdhary]{pattanaik2017robust}
Anay Pattanaik, Zhenyi Tang, Shuijing Liu, Gautham Bommannan, and Girish Chowdhary.
\newblock Robust deep reinforcement learning with adversarial attacks.
\newblock \emph{arXiv preprint arXiv:1712.03632}, 2017.

\bibitem[Petrik(2012)]{petrik2012approximate}
Marek Petrik.
\newblock Approximate dynamic programming by minimizing distributionally robust bounds.
\newblock \emph{arXiv preprint arXiv:1205.1782}, 2012.

\bibitem[Polyak(1963)]{POLYAK1963864}
B.T. Polyak.
\newblock Gradient methods for the minimisation of functionals.
\newblock \emph{USSR Computational Mathematics and Mathematical Physics}, 3\penalty0 (4):\penalty0 864--878, 1963.
\newblock ISSN 0041-5553.
\newblock \doi{https://doi.org/10.1016/0041-5553(63)90382-3}.
\newblock URL \url{https://www.sciencedirect.com/science/article/pii/0041555363903823}.

\bibitem[Raileanu \& Fergus(2021)Raileanu and Fergus]{Raileanu2021}
Roberta Raileanu and Rob Fergus.
\newblock Decoupling value and policy for generalization in reinforcement learning.
\newblock \emph{CoRR}, abs/2102.10330, 2021.
\newblock URL \url{https://arxiv.org/abs/2102.10330}.

\bibitem[Roy et~al.(2017)Roy, Xu, and Pokutta]{roy2017reinforcement}
Aurko Roy, Huan Xu, and Sebastian Pokutta.
\newblock Reinforcement learning under model mismatch.
\newblock \emph{Advances in neural information processing systems}, 30, 2017.

\bibitem[Ruszczy{\'n}ski(2010)]{ruszczynski2010risk}
Andrzej Ruszczy{\'n}ski.
\newblock Risk-averse dynamic programming for markov decision processes.
\newblock \emph{Mathematical programming}, 125:\penalty0 235--261, 2010.

\bibitem[Shapiro(2016)]{shapiro2016rectangular}
Alexander Shapiro.
\newblock Rectangular sets of probability measures.
\newblock \emph{Operations Research}, 64\penalty0 (2):\penalty0 528--541, 2016.

\bibitem[Sinha \& Ghate(2016)Sinha and Ghate]{sinha2016policy}
Saumya Sinha and Archis Ghate.
\newblock Policy iteration for robust nonstationary markov decision processes.
\newblock \emph{Optimization Letters}, 10:\penalty0 1613--1628, 2016.

\bibitem[Song et~al.(2019)Song, Jiang, Tu, Du, and Neyshabur]{Song2019}
Xingyou Song, Yiding Jiang, Stephen Tu, Yilun Du, and Behnam Neyshabur.
\newblock Observational overfitting in reinforcement learning.
\newblock \emph{CoRR}, abs/1912.02975, 2019.
\newblock URL \url{http://arxiv.org/abs/1912.02975}.

\bibitem[Srinivasan et~al.(2020)Srinivasan, Eysenbach, Ha, Tan, and Finn]{Srinivasan2020}
Krishnan Srinivasan, Benjamin Eysenbach, Sehoon Ha, Jie Tan, and Chelsea Finn.
\newblock Learning to be safe: Deep {RL} with a safety critic.
\newblock \emph{CoRR}, abs/2010.14603, 2020.
\newblock URL \url{https://arxiv.org/abs/2010.14603}.

\bibitem[Steimle et~al.(2021)Steimle, Kaufman, and Denton]{steimle2021multi}
Lauren~N Steimle, David~L Kaufman, and Brian~T Denton.
\newblock Multi-model markov decision processes.
\newblock \emph{IISE Transactions}, 53\penalty0 (10):\penalty0 1124--1139, 2021.

\bibitem[Suggala \& Netrapalli(2019)Suggala and Netrapalli]{Suggala2019}
Arun~Sai Suggala and Praneeth Netrapalli.
\newblock Online non-convex learning: Following the perturbed leader is optimal.
\newblock \emph{CoRR}, abs/1903.08110, 2019.
\newblock URL \url{http://arxiv.org/abs/1903.08110}.

\bibitem[Sutton \& Barto(2018)Sutton and Barto]{Sutton1998}
Richard~S. Sutton and Andrew~G. Barto.
\newblock \emph{Reinforcement Learning: An Introduction}.
\newblock The MIT Press, second edition, 2018.
\newblock URL \url{http://incompleteideas.net/book/the-book-2nd.html}.

\bibitem[Sutton et~al.(1999)Sutton, McAllester, Singh, and Mansour]{sutton1999policy}
Richard~S Sutton, David McAllester, Satinder Singh, and Yishay Mansour.
\newblock Policy gradient methods for reinforcement learning with function approximation.
\newblock \emph{Advances in neural information processing systems}, 12, 1999.

\bibitem[Syrgkanis et~al.(2015)Syrgkanis, Agarwal, Luo, and Schapire]{syrgkanis2015fast}
Vasilis Syrgkanis, Alekh Agarwal, Haipeng Luo, and Robert~E. Schapire.
\newblock Fast convergence of regularized learning in games, 2015.

\bibitem[Tamar et~al.(2014)Tamar, Mannor, and Xu]{tamar2014scaling}
Aviv Tamar, Shie Mannor, and Huan Xu.
\newblock Scaling up robust mdps using function approximation.
\newblock In \emph{International conference on machine learning}, pp.\  181--189. PMLR, 2014.

\bibitem[Tessler et~al.(2019)Tessler, Efroni, and Mannor]{tessler2019action}
Chen Tessler, Yonathan Efroni, and Shie Mannor.
\newblock Action robust reinforcement learning and applications in continuous control.
\newblock In \emph{International Conference on Machine Learning}, pp.\  6215--6224. PMLR, 2019.

\bibitem[Wang et~al.(2022{\natexlab{a}})Wang, Hanashiro, Guha, and Abernethy]{wang2022accelerated}
Guanghui Wang, Rafael Hanashiro, Etash Guha, and Jacob Abernethy.
\newblock On accelerated perceptrons and beyond.
\newblock \emph{arXiv preprint arXiv:2210.09371}, 2022{\natexlab{a}}.

\bibitem[Wang et~al.(2021)Wang, Abernethy, and Levy]{FenchelGame}
Jun{-}Kun Wang, Jacob~D. Abernethy, and Kfir~Y. Levy.
\newblock No-regret dynamics in the fenchel game: {A} unified framework for algorithmic convex optimization.
\newblock \emph{CoRR}, abs/2111.11309, 2021.
\newblock URL \url{https://arxiv.org/abs/2111.11309}.

\bibitem[Wang et~al.(2022{\natexlab{b}})Wang, Ho, and Petrik]{wang2022convergence}
Qiuhao Wang, Chin~Pang Ho, and Marek Petrik.
\newblock On the convergence of policy gradient in robust mdps.
\newblock \emph{arXiv preprint arXiv:2212.10439}, 2022{\natexlab{b}}.

\bibitem[Wang \& Zou(2021)Wang and Zou]{wang2021online}
Yue Wang and Shaofeng Zou.
\newblock Online robust reinforcement learning with model uncertainty.
\newblock \emph{Advances in Neural Information Processing Systems}, 34:\penalty0 7193--7206, 2021.

\bibitem[Wang \& Zou(2022)Wang and Zou]{wang2022policy}
Yue Wang and Shaofeng Zou.
\newblock Policy gradient method for robust reinforcement learning.
\newblock In \emph{International Conference on Machine Learning}, pp.\  23484--23526. PMLR, 2022.

\bibitem[Wang et~al.(2023)Wang, Velasquez, Atia, Prater-Bennette, and Zou]{wang2023robust}
Yue Wang, Alvaro Velasquez, George Atia, Ashley Prater-Bennette, and Shaofeng Zou.
\newblock Robust average-reward markov decision processes.
\newblock \emph{arXiv preprint arXiv:2301.00858}, 2023.

\bibitem[Wiesemann et~al.(2013)Wiesemann, Kuhn, and Rustem]{wiesemann2013robust}
Wolfram Wiesemann, Daniel Kuhn, and Ber{\c{c}} Rustem.
\newblock Robust markov decision processes.
\newblock \emph{Mathematics of Operations Research}, 38\penalty0 (1):\penalty0 153--183, 2013.

\bibitem[Williams(1992)]{williams1992simple}
Ronald~J Williams.
\newblock Simple statistical gradient-following algorithms for connectionist reinforcement learning.
\newblock \emph{Machine learning}, 8:\penalty0 229--256, 1992.

\bibitem[Xu \& Mannor(2009)Xu and Mannor]{xu2009parametric}
Huan Xu and Shie Mannor.
\newblock Parametric regret in uncertain markov decision processes.
\newblock In \emph{Proceedings of the 48h IEEE Conference on Decision and Control (CDC) held jointly with 2009 28th Chinese Control Conference}, pp.\  3606--3613. IEEE, 2009.

\bibitem[Xu \& Mannor(2010)Xu and Mannor]{xu2010distributionally}
Huan Xu and Shie Mannor.
\newblock Distributionally robust markov decision processes.
\newblock \emph{Advances in Neural Information Processing Systems}, 23, 2010.

\bibitem[Zhang et~al.(2021)Zhang, Chen, Boning, and Hsieh]{Zhang2021}
Huan Zhang, Hongge Chen, Duane~S. Boning, and Cho{-}Jui Hsieh.
\newblock Robust reinforcement learning on state observations with learned optimal adversary.
\newblock \emph{CoRR}, abs/2101.08452, 2021.
\newblock URL \url{https://arxiv.org/abs/2101.08452}.

\end{thebibliography}

\appendix

\onecolumn

\begin{table}
 
\centering
\begin{tabular}{@{}ll@{}}\toprule
\textbf{Symbol} & \textbf{Meaning} \\
\midrule
$\gamma$ & Discount Factor \\
$\mathcal{S}$ & Set of Actions \\
$\mathbb{P}_W$ & Transition Matrix parameterized by $W$ \\
$\mathcal{R}$ & Reward Function \\
$W$ & Parameter for Environment \\
$\mathcal{R}_{\text{max}}$ & Maximum Reward \\
$\theta$ & Parameter for the policy \\
$\mathcal{T}$ & Set of Policy Parameters \\
$\pi_\theta, \pi$ & Policy \\
$\mu$ & Distribution over starting states \\
$V_W^{\pi}$ & Value Function \\
$d_{s_0}^W, d_{s_0}^{\pi}$ & Occupancy Measure of state \\
$g$ & Objective function of Robust MDP \\
$l_t$ & Loss Function for Environmental Player \\
$h_t$ & Loss Function for Policy Player \\
$\operatorname{Reg}_W$ & Regret Of Environment Player \\
$\operatorname{Reg}_{\pi}$ & Regret Of Policy Player \\
$\mathcal{O}_{\alpha}$ & Optimization Oracle \\
$\sigma$ & Noise Vector for Online Algorithms \\
$\eta$ & Parameter for the Noise Distribution \\
$\mathcal{K}_W, \mathcal{K}_{\pi}$ & Gradient Dominance Constant for $W$ and $\pi$ \\
$T_{\mathcal{O}}$ & Number of Iterations for Optimization Oracle \\
$c_x(T_\mathcal{O}, \mathcal{K})$ & Suboptimality of Projected Gradient Descent \\
$L_{\pi}$ & Lipschitz Smoothness Constant of $h_t$ \\
$D_{\pi}$ & Radius of Set of Policies \\
$d_{\pi}$ & Dimension of $\theta$ \\
$c_W(T_\mathcal{O}, \mathcal{K}_W)$ & Suboptimality of Projected Gradient Descent for Environment Player \\
$c_{\pi}(T_\mathcal{O}, \mathcal{K}_{\pi})$ & Suboptimality of Projected Gradient Descent for Policy Player \\
$\tilde{L}$ & Lipschitz Constant of Difference of Objective Functions \\
$\delta$ & Constant for Strong Gradient Dominance \\
$c_x^s(T_\mathcal{O}, \mathcal{K})$ & Suboptimality of Projected Gradient Descent under Strong Gradient Dominance\\
\bottomrule
\end{tabular}
\caption{Table of Notation}

\end{table}

\section{Proof of Preliminary Theorems}



\subsection{Proof of \Cref{thm:nrdg}}
\nrdg*
\begin{proof}
By definition, the regret of the $W$-player is equivalent to
\begin{align}
    \regW &=   \sum_{t=0}^T  l_t(W_t) -  \underset{W^*}{\min} \sum_{t=0}^T  l_t(W^*) \nonumber\\
     &=   \sum_{t=0}^T  g(W_t, \pi_t) -  \underset{W^*}{\min} \sum_{t=0}^T  g(W^*, \pi_t) \nonumber
\end{align}
Similarly, for the $\pi$ player, we have that 
\begin{align}
    \regPi &=   \sum_{t=0}^T  l_t(\pi_t) -  \underset{\pi^*}{\min} \sum_{t=0}^T  l_t(\pi^*) \nonumber\\
    &=  \underset{\pi^*}{\max} \sum_{t=0}^T  g(W_t, \pi^*) -   \sum_{t=0}^T  g(W_t, \pi_t)\label{eq:finalregpi}
\end{align}

Therefore, we can upper bound the sum of objective functions throughout our training process as 
 $$  \sum_{t=0}^T  g(W_t, \pi_t) = \regW +  \underset{W^*}{\min} \sum_{t=0}^T  g(W^*, \pi_t) \text{.}$$
We similarly lower bound the sum of value functions throughout our training process as 
\begin{align}
      \sum_{t=0}^T  g(W_t, \pi_t) &=  \underset{\pi^*}{\max} \sum_{t=0}^T  g(W_t, \pi^*) - \regPi \nonumber \\
    &\geq   \sum_{t=0}^T  g(W_t, \bar{\pi}) - \regPi \nonumber\\
    &\geq  \underset{W^*}{\min} \sum_{t=0}^T  g(W^*, \bar{\pi}) - \regPi \label{eq:finalregw}
\end{align}
Combining \Cref{eq:finalregpi} and \Cref{eq:finalregw}, we have our desired statement
$$ \underset{W^*}{\min} \sum_{t=0}^T  g(W^*, \bar{\pi}) -   \underset{W^*}{\min} \sum_{t=0}^T  g(W^*, \pi_t) \leq \regW + \regPi\text{.}$$
\end{proof}
\section{Proofs of Gradient Dominance}
\subsection{Proof of \Cref{lem:gradvw}}
\gradvw*
\begin{proof}
    We wish to calculate the gradient of the value function $V^W$ with respect to $\mathbb{P}_W(s', a, s)$. 
    We have that 
    \begin{align}
        \nabla_W V^W(s) &= \nabla_W\left(\sum_a \pi(a, s)Q_{\pi}(s, a)\right)\nonumber\\
        &= \sum_a \pi(a, s)\nabla_W Q_{\pi}(s, a)\nonumber\\
        &= \sum_a \pi(a, s)\nabla_W \left[ R(s,a) + \gamma \sum_{s'} \mathbb{P}_W(s', a, s) V^{\pi}(s')\right]\label{eq:111}\\
        &= \sum_a \gamma \pi(a, s) \sum_{s'}\left[\nabla_W \mathbb{P}_W(s', a, s) V^{\pi}(s') + \mathbb{P}_W(s', a, s) \nabla_W V^{\pi}(s') \right]\nonumber\\
        &= \sum_{a, s'} \gamma \pi(a, s)\mathbb{P}_W(s', a, s) \nabla_W V^{\pi}(s')+  \sum_{s',a}\pi(a, s)\gamma\nabla_W \mathbb{P}_W(s', a, s) V^{\pi}(s') \nonumber
    \end{align}
    Here, the \Cref{eq:111} comes from the definition of the $ Q$ function. Unrolling this makes it such that we have
    \begin{align}
        \nabla_W V^W(\mu) &= \sum_{t=0}^{\infty} \sum_{s',a,s} Pr(s_t = s|\mu) \gamma^t  \pi(a, s)\nabla \mathbb{P}_W(s', a, s) V^W(s') \nonumber\\
        &= \frac{1}{1 - \gamma} \sum_{s',a,s} d_{\mu}^W(s)  \pi(a, s)\nabla \mathbb{P}_W(s', a, s) V^W(s') \nonumber
    \end{align}
    We have now arrived at our desired quantity.
\end{proof}
\subsection{Proof of \Cref{lem:wpd}}
\wpd*
\begin{proof}
    This proof follows mainly from the proof of the Performance Difference Lemma from \citet{ShamLangford}. 
    \begin{align}
        V_W(\mu) - V_{W'}(\mu) =& \mathbb{E}_{\mathbb{P}_W, \pi} \sum_{t=0}^{\infty} \gamma^t r(s_t, a_t) - V^{W'}(\mu)\nonumber\\
        =& \mathbb{E}_{\mathbb{P}_W, \pi} \left[\sum_{t=0}^{\infty} \gamma^t r(s_t, a_t) + \gamma^t V_{W'}(s_t) - \gamma^t V_{W'}(s_t)\right] - V^{W'}(\mu)\nonumber\\
        =& \mathbb{E}_{\mathbb{P}_W, \pi} \left[\sum_{t=0}^{\infty} \gamma^t r(s_t, a_t) + \gamma^{t + 1} V_{W'}(s_{t+1}) - \gamma^t V_{W'}(s_t)\right]\nonumber\\
        =& \mathbb{E}_{\mathbb{P}_W, \pi} \left[\sum_{t=0}^{\infty} \gamma^t A^{W'}(s_{t+1}, a_t, s_{t})\right]\nonumber\\
        =& \frac{1}{1-\gamma}\sum_{s', a, s} \left[\gamma^t d_{\mu}^W(s)\mathbb{P}_W(s',a,s)\pi(a, s) A^{W'}(s', a_t, s)\right] \label{eq:131}
    \end{align}
    Here, \Cref{eq:131} comes from our definition of the Advantage function for the $W$-player. This concludes the proof. 
\end{proof}

\subsection{Proof of \Cref{lem:wdomnat}}
\wdomnat*
\begin{proof}
    We can use \Cref{lem:wpd} to prove this. We will lower bound the difference between the optimal $V_{W^*}(\mu)$ and the $V_{W}(\mu)$. In order to prove Gradient Dominance, we need that $V_W(\mu) - V_{W^*}(\mu)$ to be upper-bounded. We will equivalently lower bound the negative of this. We have
    \begin{align}
        V_{W}(\mu) - V_{W^*}(\mu)  &= \frac{-1}{1-\gamma}\sum_{s', a, s} \left[\gamma^t d_{\mu}^{W^*}(s)\pi(a, s)\mathbb{P}_{W^*}(s'|a,s) A^{W}(s', a, s)\right] \nonumber\\
        &\leq \frac{-1}{1-\gamma}\sum_{s', a, s} \left[\gamma^t d_{\mu}^{W^*}(s)\pi(a, s) \underset{s'}{\min} \left(A^{W}(s', a, s)\right)\right] \label{eq:142} \\
        &\leq \left(\underset{s}{\max} \frac{d_{\mu}^{W^*}(s)}{d_{\mu}^{W}(s)}\right)\frac{-1}{1-\gamma}\sum_{s', a, s} \left[\gamma^t d_{\mu}^{W}(s)  \pi(a, s) \; \underset{s'}{\min} \left( A^{W}(s', a, s)\right)\right] \nonumber
    \end{align}
    Here, the \Cref{eq:142} comes from seeing that the value $\mathbb{P}_{W^*}(s'|a,s) A^{W}(s', a, s)$ is minimized when $\mathbb{P}_{W^*}$ puts the most weight on the state minimizing the advantage function. Looking only at that last term, we can bound it in the following manner
    \begin{align}
    \frac{-1}{1-\gamma}&\sum_{s', a, s} \left[\gamma^t  d_{\mu}^W(s)  \pi(a, s) \; \underset{s'}{\min} \left( A^{W}(s', a, s)\right)\right]\nonumber\\
    &= \frac{-1}{1-\gamma} \underset{\bar{W}}{\min} \sum_{s', a, s} \left[\gamma^t d_{\mu}^W(s)  \pi(a, s) \; \mathbb{P}_{\bar{W}}(s', a, s)\left( A^{W}(s', a, s)\right)\right]\label{eq:148}\\
    &= \frac{-1}{1-\gamma} \underset{\bar{W}}{\min} \sum_{s', a, s} \left[\gamma^t d_{\mu}^W(s)  \pi(a, s) \; \left(\mathbb{P}_{\bar{W}}(s', a, s) - \mathbb{P}_{W}(s', a, s)\right)\left( A^{W}(s', a, s)\right)\right]\label{eq:149}\\
    &= \frac{-1}{1-\gamma} \underset{\bar{W}}{\min} \sum_{s', a, s} \left[\gamma^t d_{\mu}^W(s)  \pi(a, s) \; \left(\mathbb{P}_{\bar{W}}(s', a, s) - \mathbb{P}_{W}(s', a, s)\right)\left( V_{W}(s')\right)\right]\label{eq:150}\\
    &=  -\underset{\bar{W}}{\min} \left[\left(\bar{W} - W\right)^\top \nabla_{W'} V_{W}(\mu) \right] \label{eq:151}
    \end{align}
    Here, \Cref{eq:148} comes from seeing that the value $\mathbb{P}_{W^*}(s'|a,s) A^{W}(s', a, s)$ is minimized when $\mathbb{P}_{W^*}$ puts the most weight on the state minimizing the advantage function. \Cref{eq:149} comes from the fact that the  $\sum_{s'}\mathbb{P}_{W}(s', a, s)A^{W}(s', a, s) = 0$. \Cref{eq:150} comes from the definition of the $W$-player advantage function. Finally, \Cref{eq:151} comes from \Cref{lem:gradvw}. Combining these yield
    \begin{align}
        V_{W}(\mu) - V_{W^*}(\mu) \leq& -\left\|\frac{d_{\mu}^{W^*}}{d_{\mu}^{W}}\right\|_{\infty}  \underset{\bar{W}}{\min} \left[\left(\bar{W} - W\right)^\top \nabla_{W'} V_{W}(\mu) \right] \nonumber\\
        \leq& \frac{-1}{1-\gamma}\left\|\frac{d_{\mu}^{W}}{\mu}\right\|_{\infty}  \underset{\bar{W}}{\min} \left[\left(\bar{W} - W\right)^\top \nabla_{W'} V_{W}(\mu) \right] \label{eq:156}
    \end{align}
    \Cref{eq:156} comes from the fact that $d_{\mu}^{W^*}(s) \geq (1-\gamma)\mu(s)$ by definition. Here, flipping this, we have 
    \begin{align}
    V_{W}(\mu) - V_{W^*}(\mu) &\leq  \frac{-1}{1-\gamma}\left\|\frac{d_{\mu}^{W^*}}{\mu}\right\|_{\infty}  \underset{\bar{W}}{\min} \left[\left(\bar{W} - W\right)^\top \nabla_{W'} V_{W}(\mu) \right] \nonumber 
    \end{align}
    This is a satisfying definition of gradient dominance. 
\end{proof}
\subsection{Proof of \Cref{lem:sumgraddom}}
\sumgraddom*
\begin{proof}
    We can use \Cref{lem:wpd} to prove both. We start with the $W$ player. 
    \begin{align}
        \sum_i^{t-1} V_{W}^{\pi_i}&(\mu) - V_{W^*}^{\pi_i}(\mu) - \langle \sigma, W - W^*\rangle =\nonumber\\ 
        &\sum_i^{t-1}\frac{-1}{1-\gamma}\sum_{s', a, s} \left[ d_{\mu}^{W^*}(s)\pi_i(a|s)\mathbb{P}_{W^*}(s'|a,s) A^{W}(s', a, s)\right] - \langle \sigma, W - W^*\rangle \label{eq:170}\\
        &\leq \left(\underset{s}{\max} \frac{d_{\mu}^{W^*}(s)}{d_{\mu}^{W}(s)}\right) \Bigg[\sum_i^{t-1} \frac{-1}{1-\gamma}\sum_{s', a, s} \left[ d_{\mu}^{W}(s)  \pi_i(a|s) \; \mathbb{P}_{W^*}(s'|a,s)\left( A^{W}(s', a, s)\right)\right]\nonumber  \\
        & \quad \quad \quad \quad \quad \quad \quad \quad \quad \quad \quad \quad   - \langle \sigma, W - W^*\rangle \Bigg] \nonumber \\
        &\leq  \left(\underset{s}{\max} \frac{d_{\mu}^{W^*}(s)}{d_{\mu}^{W}(s)}\right)-\underset{\bar{W}}{\min} \Bigg[\sum_i^{t-1} \frac{1}{1-\gamma} \sum_{s', a, s} \left[ d_{\mu}^{W^*}(s)\pi_i(a|s) \mathbb{P}_{\bar{W}}(s'|a,s) \left(A^{W}(s', a, s)\right)\right] \label{eq:173}\\
        & \quad \quad \quad \quad \quad \quad \quad \quad \quad \quad \quad \quad   - \langle \sigma, \bar{W} - W\rangle \Bigg] \nonumber 
    \end{align}
     Here, \Cref{eq:170} comes from the fact that the maximum of the interior of the RHS is always nonnegative, and \Cref{eq:173} comes from using the minimizing transition dynamics $\bar{W}$. Looking at the inside term, we have that 
    \begin{align}
    -\underset{\bar{W}}{\min} \Bigg[&\frac{-1}{1-\gamma} \sum_{s', a, s} \sum_i\left[ d_{\mu}^W(s)  \pi_i(a|s) \; \mathbb{P}_{\bar{W}}(s', a, s)\left( A^{W}(s', a, s)\right)\right]- \langle \sigma, \bar{W} - W \rangle \Bigg]\nonumber\\
    &= \underset{\bar{W}}{\min}\left[\frac{-1}{1-\gamma} \sum_{s', a, s} \sum_i\Bigg[ d_{\mu}^W(s)  \pi_i(a|s) \; \left(\mathbb{P}_{\bar{W}}(s', a, s) - \mathbb{P}_{W}(s', a, s)\right)\left( A^{W}(s', a, s)\right)\right] \label{eq:179} \\
    & \quad \quad \quad \quad \quad \quad \quad \quad \quad \quad \quad \quad - \langle \sigma, \bar{W} - W \rangle \Bigg] \nonumber\\
    &= -\underset{\bar{W}}{\min}\Bigg[\frac{-1}{1-\gamma} \sum_{s', a, s} \sum_i\left[ d_{\mu}^W(s)  \pi_i(a|s) \; \left(\mathbb{P}_{\bar{W}}(s', a, s) - \mathbb{P}_{W}(s', a, s)\right)\left( V_{W}(s')\right)\right] \label{eq:181}\\
    &\quad \quad \quad \quad \quad \quad \quad \quad \quad \quad \quad \quad - \langle \sigma, \bar{W} - W \rangle \Bigg] \nonumber\\
    &=  -\underset{\bar{W}}{\min}\left[\left(\bar{W} - W \right)^\top \nabla_{W}\left[\sum_i V_{W}^{\pi_i}(\mu) - \langle \sigma, W\rangle \right]\right] \label{eq:183}
    \end{align}
\Cref{eq:179} comes from the fact that the  $\sum_{s'}\mathbb{P}_{W}(s', a, s)A^{W}(s', a, s) = 0$. \Cref{eq:181} comes from the definition of the $W$-player advantage function. Finally, \label{eq:183} comes from \Cref{lem:gradvw}. Combining these, we have that 
\begin{align}
    \sum_i^{t-1} V_{W}^{\pi_i}(\mu) - &V_{W^*}^{\pi_i}(\mu) - \langle \sigma, W - W^* \rangle \leq \nonumber\\
    &\left\|\frac{d_{\mu}^{W^*}}{\mu}\right\|_{\infty} \frac{-1}{1 - \gamma}  \underset{\bar{W}}{\min} \left[\left(W-\bar{W}\right)^\top \nabla_{W}\left[\sum_i V_{W}^{\pi_i}(\mu) - \langle \sigma, W\rangle \right]\right] \nonumber
\end{align}
Moreover, we use the fact that $d_{\mu}^{W}(s) \geq (1-\gamma)\mu(s)$ by definition. We now do this for the $\pi$-player. 
    By the Performance Difference Lemma,
    \begin{align}
        \sum_i^{t-1} V_{W_i}^{\pi^*}  &-  V_{W_i}^{\pi} - \langle \sigma, \pi - \pi^* \rangle= \frac{1}{1-\gamma}  \sum_i^{t-1}\sum_{s,a} d_{\mu}^{\pi^*}(s)\pi^*(a, s)A^{\pi}(s, a) - \langle \sigma, \pi - \pi^*\rangle \label{eq:193}\\
        &\leq \frac{-1}{1-\gamma} \underset{\bar{\pi}}{\min} \left[ -\sum_i^{t-1} \sum_{s,a} d_{\mu}^{\pi^*}(s) \bar{\pi}(s, a) A^{\pi}(s, a) - \langle \sigma, \bar{\pi} - \pi\rangle \right]\nonumber\\
        &\leq  -\left\|\frac{d_{\mu}^{\pi^*}}{d_{\mu}^{\pi}}\right\|_{\infty} \underset{\bar{\pi}}{\min} \left[\frac{-1}{1-\gamma} \sum_i^{t-1} \sum_{s,a} d_{\mu}^{\pi}(s)  \bar{\pi}(s, a) A^{\pi}(s, a) - \langle \sigma, \bar{\pi} - \pi\rangle \right]\nonumber\\
        &=  -\left\|\frac{d_{\mu}^{\pi^*}}{d_{\mu}^{\pi}}\right\|_{\infty} \underset{\bar{\pi}}{\min} \left[\frac{-1}{1-\gamma}  \sum_i^{t-1} \sum_{s,a} d_{\mu}^{\pi}(s)  (\bar{\pi}(s, a) - \pi(a, s)) A^{\pi}(s, a) - \langle \sigma, \bar{\pi} - \pi\rangle \right] \label{eq:196}\\
        &= -\left\|\frac{d_{\mu}^{\pi^*}}{d_{\mu}^{\pi}}\right\|_{\infty}  \underset{\bar{\pi}}{\min} \left[\frac{-1}{1-\gamma} \sum_i^{t-1} \sum_{s,a} d_{\mu}^{\pi}(s)  (\bar{\pi}(s, a) - \pi(a, s)) Q^{\pi}(s, a) - \langle \sigma, \bar{\pi} - \pi\rangle \right]\label{eq:197}\\
        &\leq \frac{-1}{1-\gamma}\left\|\frac{d_{\mu}^{\pi^*}}{\mu}\right\|_{\infty} \underset{\bar{\pi}}{\min}(\bar{\pi} - \pi)^{\top} \nabla_\pi \left(-\sum_i^{t-1} V_{W_i}^{\pi}  - \langle \sigma, \pi \rangle\right) \label{eq:198}
    \end{align}
    Here, \Cref{eq:193} comes from the Performance Difference Lemma, \Cref{eq:196} comes from the fact that $\sum_{a} \pi(a, s) A^{\pi}(s, a) = 0$, \Cref{eq:197} comes from the definition of the Advantage Function for the $\pi$-player, and \Cref{eq:198} comes from the both the Policy Gradient Theorem and the fact that $d_{\mu}^{\pi}(s) \geq (1-\gamma)\mu(s)$.
    We now have proven both claims of our lemma.
\end{proof}

\begin{restatable}{lemma}{piplayercanoftpl}
\label{lem:piplayercanoftpl}
    The $\pi$-player enjoys \Cref{con:optimisticdom}, i.e.  $\sum_i^{t-1} h_i(\cdot) + h_{t-1}(\cdot) - \sigma$ is gradient-dominated with constant $\frac{1}{1 - \gamma}\left\|\frac{d_{\mu}^{\pi^*}}{\mu}\right\|_{\infty}  $.
\end{restatable}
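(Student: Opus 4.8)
The plan is to recognize that the function in question is structurally identical to the one already handled for the $\pi$-player in \Cref{lem:sumgraddom}, and to reuse that argument almost verbatim. Writing out the loss functions, $\sum_{i}^{t-1} h_i(\pi) + h_{t-1}(\pi) - \langle \sigma, \pi\rangle = -\left(\sum_{i}^{t-1} V_{W_i}^{\pi}(\mu) + V_{W_{t-1}}^{\pi}(\mu)\right) - \langle \sigma, \pi\rangle$, which is a negated sum of value functions over a finite family of environments, minus a linear term --- exactly the object whose gradient-dominance is established in \Cref{lem:sumgraddom}. The only difference is that the family now contains $t$ environments, namely $W_1, \dots, W_{t-1}$ together with a second copy of $W_{t-1}$, rather than the $t-1$ environments $W_1, \dots, W_{t-1}$. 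Crucially, nowhere in the $\pi$-player half of the proof of \Cref{lem:sumgraddom} is it used that the environments are distinct, so enlarging the index set and allowing a repetition within it changes nothing.

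Concretely, I would let $\pi^*$ be the minimizer over $\mathcal{T}$ of the combined objective $-\sum_{i}^{t-1} V_{W_i}^{\pi}(\mu) - V_{W_{t-1}}^{\pi}(\mu) - \langle \sigma, \pi\rangle$ and then run the chain \Cref{eq:193}--\Cref{eq:198} with the enlarged index set. First, apply the Performance Difference Lemma of \citet{ShamLangford} to each of the $t$ value-function gaps $V_{W_i}^{\pi^*}(\mu) - V_{W_i}^{\pi}(\mu)$, producing advantage terms $A^{\pi}(s,a)$ (one per environment, with $W_{t-1}$ contributing twice). Next, bound the $d_{\mu}^{\pi^*}$-weighted sum by $\left\|d_{\mu}^{\pi^*}/d_{\mu}^{\pi}\right\|_{\infty}$ times the $d_{\mu}^{\pi}$-weighted sum, use $\sum_{a}\pi(a,s)A^{\pi}(s,a)=0$ to replace $\bar{\pi}$ by $\bar{\pi}-\pi$, pass from $A^{\pi}$ to $Q^{\pi}$, and finally invoke the Policy Gradient Theorem \citep{Sutton1998} together with $d_{\mu}^{\pi}(s) \geq (1-\gamma)\mu(s)$ to recognize the bracketed quantity as $\nabla_{\pi}\left(-\sum_{i}^{t-1}V_{W_i}^{\pi}(\mu) - V_{W_{t-1}}^{\pi}(\mu) - \langle\sigma,\pi\rangle\right)$. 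Since every step matches \Cref{lem:sumgraddom}, the gradient-dominance constant emerges unchanged as $\frac{1}{1-\gamma}\left\|\frac{d_{\mu}^{\pi^*}}{\mu}\right\|_{\infty}$, which is exactly the claim and certifies \Cref{con:optimisticdom}.

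There is no genuine obstacle here; the only thing to watch is bookkeeping. By linearity of the gradient, the extra $V_{W_{t-1}}^{\pi}$ term must be carried consistently to both sides, so the gradient appearing in the gradient-dominance inequality is that of the full combined objective (with the doubled $W_{t-1}$-term), and the minimizing policy $\pi^*$ must be taken with respect to this same combined objective rather than the one from \Cref{lem:sumgraddom}. Once these are tracked correctly, the derivation is a direct re-indexing of the existing proof.
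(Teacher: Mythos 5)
Your proposal is correct and matches the paper's own proof essentially verbatim: the paper likewise absorbs the repeated $h_{t-1}$ term by re-indexing the sum over the multiset $\{h_1,\dots,h_{t-1},h_{t-1}\}$ and then reruns the $\pi$-player chain from \Cref{lem:sumgraddom} (Performance Difference Lemma, occupancy-ratio bound, $\sum_a \pi(a,s)A^{\pi}(s,a)=0$, passage to $Q^{\pi}$, Policy Gradient Theorem, and $d_{\mu}^{\pi}(s)\geq(1-\gamma)\mu(s)$) to obtain the same constant. No further changes are needed.
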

\begin{proof}
    For simplicity, we will call $\sum_j h_j(\cdot) \coloneqq \sum_i^{t-1} h_i(\cdot) + h_{t-1}(\cdot)$ where $j$ indexes over the set $\{h_1, \dots, h_{t-2}, h_{t-1}, h_{t-1}\}$. With this, we can follow through with our proof. 
     By the performance difference lemma,
    \begin{align}
        \sum_j  V_{W_i}^{\pi^*}& -  V_{W_i}^{\pi} - \langle \sigma, \pi- \pi^*  \rangle \nonumber \\
        & = \frac{1}{1-\gamma}  \sum_j \sum_{s,a} d_{\mu}^{\pi^*}(s)\pi^*(a, s)A^{\pi}(s, a) - \langle \sigma, \pi^* - \pi\rangle \label{eq:213}\\
        &\leq \frac{-1}{1-\gamma} \underset{\bar{\pi}}{\min}\left[ -\sum_j  \sum_{s,a} d_{\mu}^{\pi^*}(s) \bar{\pi}(s, a) A^{\pi}(s, a) - \langle \sigma, \bar{\pi} - \pi \rangle\right] \nonumber\\
        &\leq  -\left\|\frac{d_{\mu}^{\pi^*}}{d_{\mu}^{\pi}}\right\|_{\infty} \underset{\bar{\pi}}{\min} \Bigg[ \frac{-1}{1-\gamma} \sum_j  \sum_{s,a} d_{\mu}^{\pi}(s)  \bar{\pi}(s, a) A^{\pi}(s, a) - \langle \sigma, \bar{\pi} - \pi\rangle \Bigg] \nonumber \\
        &=  -\left\|\frac{d_{\mu}^{\pi^*}}{d_{\mu}^{\pi}}\right\|_{\infty}\underset{\bar{\pi}}{\min}   \Bigg[\frac{-1}{1-\gamma} \sum_j  \sum_{s,a} d_{\mu}^{\pi}(s)  (\bar{\pi}(s, a) - \pi(a, s)) A^{\pi}(s, a) - \langle \sigma, \bar{\pi} - \pi\rangle \Bigg]\label{eq:216}\\
        &= -\left\|\frac{d_{\mu}^{\pi^*}}{d_{\mu}^{\pi}}\right\|_{\infty}\underset{\bar{\pi}}{\min}   \Bigg[\frac{-1}{1-\gamma}\sum_j  \sum_{s,a} d_{\mu}^{\pi}(s)  (\bar{\pi}(s, a) - \pi(a, s)) Q^{\pi}(s, a) - \langle \sigma, \bar{\pi} - \pi\rangle \Bigg] \label{eq:217}\\
        &\leq  \frac{1}{1-\gamma}\left\|\frac{d_{\mu}^{\pi^*}}{\mu}\right\|_{\infty}  \underset{\bar{\pi}}{\min}(\bar{\pi} - \pi)^{\top} \nabla_\pi \left(\sum_j  -V_{W_i}^{\pi}  - \langle \sigma, \pi \rangle\right) \label{eq:218}
    \end{align}
     Here, \Cref{eq:213} comes from the Performance Difference Lemma, \Cref{eq:216} comes from the fact that $\sum_{a} \pi(a, s) A^{\pi}(s, a) = 0$, \Cref{eq:217} comes from the definition of the Advantage Function for the $\pi$-player, and \Cref{eq:218} comes from the both the Policy Gradient Theorem and the fact that $d_{\mu}^{\pi}(s) \geq (1-\gamma)\mu(s)$.
\end{proof}

\begin{restatable}{lemma}{wplayercanfptlp}
    \label{lem:wplayercanfptlp}
    \Cref{con:alldom} is satisfied for the $W$-player, i.e. $\sum_i^t l_i - \sigma$ is gradient dominated.  
\end{restatable}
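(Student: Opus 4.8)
The plan is to reproduce the $W$-player argument of \Cref{lem:sumgraddom} essentially verbatim, the only difference being that the running sum now extends to index $t$ rather than $t-1$. Since the $W$-player's loss is $l_i(W)=V_W^{\pi_i}(\mu)$, the function whose gradient-dominance I must establish is $F(W):=\sum_{i}^{t} V_W^{\pi_i}(\mu)-\langle\sigma,W\rangle$, and I would let $W^*$ denote its minimizer over $\mathcal{W}$.

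First I would expand the left-hand side of \Cref{def:defgradvw}, namely $F(W)-F(W^*)=\sum_i^t\big(V_W^{\pi_i}(\mu)-V_{W^*}^{\pi_i}(\mu)\big)-\langle\sigma,W-W^*\rangle$, and apply the $W$-player Performance Difference Lemma (\Cref{lem:wpd}) to each summand, writing each difference as a sum over $(s',a,s)$ of $d_{\mu}^{W^*}(s)\pi_i(a,s)\mathbb{P}_{W^*}(s',a,s)A^W(s',a,s)$ with the appropriate sign. Next I would factor out the occupancy ratio $\|d_{\mu}^{W^*}/d_{\mu}^{W}\|_{\infty}$ and replace the fixed $\mathbb{P}_{W^*}$ by a minimization over an arbitrary $\bar{W}\in\mathcal{W}$, since $\mathbb{P}_{W^*}$ can only place more mass on advantage-minimizing next-states than the minimizer does. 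I would then insert the identity $\sum_{s'}\mathbb{P}_W(s',a,s)A^W(s',a,s)=0$ to turn $\mathbb{P}_{\bar{W}}$ into the difference $\mathbb{P}_{\bar{W}}-\mathbb{P}_W$, substitute $A^W(s',a,s)\to V_W(s')$ (the advantage differs only by terms annihilated by this difference), and recognize the resulting expression, via \Cref{lem:gradvw}, as $\langle \bar{W}-W,\nabla_W[\sum_i V_W^{\pi_i}(\mu)-\langle\sigma,W\rangle]\rangle=\langle\bar{W}-W,\nabla F(W)\rangle$. Finally I would use $d_{\mu}^{W}(s)\ge(1-\gamma)\mu(s)$ to bound the occupancy ratio by $\tfrac{1}{1-\gamma}\|d_{\mu}^{W^*}/\mu\|_{\infty}$, yielding gradient-dominance with constant $\mathcal{K}_W=\tfrac{1}{1-\gamma}\|d_{\mu}^{W^*}/\mu\|_{\infty}$, exactly matching \Cref{con:alldom}.

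The step that requires the most care is bookkeeping the linear noise term $-\langle\sigma,W\rangle$ through the chain of inequalities: because it is linear its contribution to $\nabla F(W)$ is simply $-\sigma$, so it must be carried alongside the value-function gradient at every stage and folded into the final inner product rather than dropped. The only conceptual observation — and the reason the earlier proof transfers unchanged — is that each value-function term in the sum is handled independently, so nothing depends on whether the upper limit of summation is $t-1$ or $t$; the advantage-to-gradient reconstruction is linear in the number of summands. Consequently no new estimate is needed beyond those already established for \Cref{lem:sumgraddom}.
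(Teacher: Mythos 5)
Your proposal is correct and follows essentially the same route as the paper's own proof: the paper likewise proves \Cref{lem:wplayercanfptlp} by repeating the $W$-player half of the argument for \Cref{lem:sumgraddom} with the sum running to $t$ instead of $t-1$, applying \Cref{lem:wpd} termwise, passing to the minimizing $\bar{W}$, using $\sum_{s'}\mathbb{P}_W(s',a,s)A^W(s',a,s)=0$ and \Cref{lem:gradvw}, and finishing with $d_\mu^W(s)\ge(1-\gamma)\mu(s)$. Your observation that the argument is linear in the number of summands and therefore insensitive to the upper limit is exactly the reason the paper's proof transfers verbatim.
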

\begin{proof}
We can use \Cref{lem:wpd} to prove both. We start with the $W$ player. 
    \begin{align}
        \sum_i^{t} V_{W}^{\pi_i}(\mu) - &V_{W^*}^{\pi_i}(\mu) - \langle \sigma,  W - W^* \rangle =\nonumber\\ 
        &\sum_i^{t}\frac{-1}{1-\gamma}\sum_{s', a, s} \left[ d_{\mu}^{W^*}(s)\pi_i(a|s)\mathbb{P}_{W^*}(s'|a,s) A^{W}(s', a, s)\right] - \langle \sigma,  W - W^*\rangle \nonumber\\
        &\leq \left\|\frac{d_{\mu}^{W^*}(s)}{d_{\mu}^{W}(s)}\right\| _{\infty} \Bigg[\sum_i^{t}\frac{-1}{1-\gamma}\sum_{s', a, s} \left[ d_{\mu}^{W}(s)  \pi_i(a|s) \; \mathbb{P}_{W^*}(s'|a,s)\left( A^{W}(s', a, s)\right)\right] \nonumber \\
        & \quad \quad \quad \quad \quad \quad \quad \quad \quad - \langle \sigma,  W - W^*\rangle\Bigg]  \nonumber \\
        & \leq -\left\|\frac{d_{\mu}^{W^*}(s)}{d_{\mu}^{W}(s)}\right\| _{\infty} \Bigg[ \underset{\bar{W}}{\min} \sum_i^{t} \frac{1}{1-\gamma} \sum_{s', a, s} \left[ d_{\mu}^{W^*}(s)\pi_i(a|s) \mathbb{P}_{\bar{W}}(s'|a,s) \left(A^{W}(s', a, s)\right)\right]  \label{eq:again} \\
        &  \quad \quad \quad \quad \quad \quad \quad \quad \quad - \langle \sigma, \bar{W} - W \rangle \Bigg] \nonumber
    \end{align}
     Here, \Cref{eq:again} comes from using the minimizing transition dynamics $\bar{W}$.  Looking at the inside term, we have that 
    \begin{align}
     \underset{\bar{W}}{\min} \Bigg[ &\frac{1}{1-\gamma} \sum_{s', a, s} \sum_i\left[ d_{\mu}^W(s)  \pi_i(a|s) \; \mathbb{P}_{\bar{W}}(s', a, s)\left( A^{W}(s', a, s)\right)\right]- \langle \sigma, \bar{W} - W \rangle \Bigg] \nonumber\\
    &=  \underset{\bar{W}}{\min} \Bigg[ \frac{1}{1-\gamma} \sum_{s', a, s} \sum_i\left[ d_{\mu}^W(s)  \pi_i(a|s) \; \left(\mathbb{P}_{\bar{W}}(s', a, s) - \mathbb{P}_{W}(s', a, s)\right)\left( A^{W}(s', a, s)\right)\right] \label{eq:240} \\
    & \quad \quad \quad \quad \quad \quad \quad  - \langle \sigma, \bar{W} - W \rangle \Bigg] \nonumber\\
    &=  \underset{\bar{W}}{\min} \Bigg[\frac{1}{1-\gamma} \sum_{s', a, s} \sum_i\left[ d_{\mu}^W(s)  \pi_i(a|s) \; \left(\mathbb{P}_{\bar{W}}(s', a, s) - \mathbb{P}_{W}(s', a, s)\right)\left( V_{W}(s')\right)\right] \label{eq:242} \\
    & \quad \quad \quad \quad \quad \quad \quad - \langle \sigma, \bar{W} - W \rangle \Bigg] \nonumber\\
    &=  \underset{\bar{W}}{\min}\left[\left(\bar{W} - W\right)^\top \nabla_{W}\left[\sum_i V_{W}^{\pi_i}(\mu) - \langle \sigma, W\rangle \right]\right] \label{eq:244}
    \end{align}
\Cref{eq:240} comes from the fact that the  $\sum_{s'}\mathbb{P}_{W}(s', a, s)A^{W}(s', a, s) = 0$. \Cref{eq:242} comes from the definition of the $W$-player advantage function. Finally, \Cref{eq:244} comes from \Cref{lem:gradvw}. Combining these, we have that 
\begin{align}
    \sum_i^{t} V_{W}^{\pi_i}(\mu) - &V_{W^*}^{\pi_i}(\mu) - \langle \sigma, W - W^* \rangle \leq \nonumber \\
    &\frac{-1}{1 - \gamma}\left\|\frac{d_{\mu}^{W^*}}{\mu}\right\|_{\infty}   \underset{\bar{W}}{\min} \left[\left(\bar{W} - W\right)^\top \nabla_{W}\left[\sum_i V_{W}^{\pi_i}(\mu) - \langle \sigma, W\rangle \right]\right] \nonumber
\end{align}
Moreover, we use the fact that $d_{\mu}^{W}(s) \geq (1-\gamma)\mu(s)$ by definition.
\end{proof}
\section{Proofs for Convergence}
Here, we detail a lemma on the convergence of FTPL and OFTPL as proven in \citet{Suggala2019}.

\begin{restatable}{lemma}{FTPL}
    \label{lem:ftpl}
    Let $D$ be the $\ell_{\infty}$ diameter of the space $\mathcal{X}$. Suppose the losses encountered by the learner are $L$-Lipschitz w.r.t $\ell_1$ norm. Moreover, suppose the optimization oracle used has error $\alpha$. For any fixed $\eta$, the predictions of Follow the Perturbed Leader satisfy the following regret bound. Here, $d$ is the dimension of the noise vector. 
    $$\mathbb{E}\left[\frac{1}{T} \sum_{t=1}^T f_t(x_t) - \frac{1}{T} \underset{x \in \mathcal{X}}{\inf} \sum_{t=1}^T f_t(x) \right]\leq O\left(\eta d^2 D L^2 + \frac{dD}{\eta T} + \alpha\right)\text{.}$$
    \label{lem:oftpl} For OFTPL, suppose our guess $g_t$ is such that $g_t - f_t$ is  $L_t$-Lipschitz w.r.t $\ell_1$ norm, for all $t \in [T]$. The predictions of Optimistic Follow the Perturbed Leader satisfy the following regret bound. 
    $$\mathbb{E}\left[\frac{1}{T} \sum_{t=1}^T f_t(x_t) - \frac{1}{T} \underset{x \in \mathcal{X}}{\inf} \sum_{t=1}^T f_t(x) \right]\leq O\left(\eta d^2 D \sum_{t=1}^T \frac{L_t^2}{T} + \frac{dD}{\eta T} + \alpha\right)\text{.}$$
\end{restatable}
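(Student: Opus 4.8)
The plan is to follow the analysis of \citet{Suggala2019}, whose core idea is to treat the random perturbation $-\langle\sigma,\cdot\rangle$ as a fictitious ``round zero'' loss $f_0 := -\langle\sigma,\cdot\rangle$ and thereby reduce FTPL on $f_1,\dots,f_T$ to exact Follow-the-Leader on the augmented sequence $f_0,f_1,\dots,f_T$. First I would invoke the classical Be-the-Leader inequality: if $x_{t+1}^\star$ denotes the exact minimizer of $\sum_{i=0}^t f_i$, then $\sum_{t=0}^T f_t(x_{t+1}^\star)\le\sum_{t=0}^T f_t(x)$ for every $x\in\mathcal{X}$. Specializing $x$ to the point attaining $\inf_{x}\sum_t f_t(x)$ and rearranging, the regret decomposes into a telescoping \emph{stability} sum $\sum_t [f_t(x_t)-f_t(x_{t+1})]$, a \emph{perturbation-bias} term coming from $f_0$, and an accumulated \emph{oracle-error} term.

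The perturbation-bias term is the easy one: by H\"older it is bounded by $\mathbb{E}\|\sigma\|_1\cdot\|x_1-x^\star\|_\infty \le D\,\mathbb{E}\|\sigma\|_1$, and since each coordinate $\sigma_i\sim\mathrm{Exp}(\eta)$ has mean $1/\eta$, this gives $dD/\eta$, i.e. the $\tfrac{dD}{\eta T}$ contribution after averaging over $T$ rounds. The oracle-error term is handled by noting that each invocation of $\mathcal{O}_\alpha$ loses at most $\alpha$ relative to the exact leader, so its total contribution to the average regret is $O(\alpha)$.

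The crux is bounding the stability sum $\mathbb{E}[f_t(x_t)-f_t(x_{t+1})]$. Using $L$-Lipschitzness of $f_t$ with respect to $\ell_1$, this is at most $L\,\mathbb{E}\|x_t-x_{t+1}\|_1$, where $x_t$ and $x_{t+1}$ are perturbed minimizers of two objectives differing only by the single $L$-Lipschitz function $f_t$. The hard part will be showing that adding one such function moves the perturbed minimizer by only $O(\eta d^2 D L)$ in expectation; this is exactly where the exponential noise is essential. I would carry this out through the change-of-measure property of the exponential distribution — that translating its argument by $v$ rescales the density by a factor $e^{\pm\eta\|v\|_1}$ — combined with a coordinate-wise summation over the $d$ dimensions, yielding the $\eta d^2 D L$ bound on $\mathbb{E}\|x_t-x_{t+1}\|_1$ and hence the $\eta d^2 D L^2$ stability term. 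Summing the three contributions gives the stated FTPL bound.

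For the OFTPL bound, the only modification is that the prediction $x_t$ is computed from the cumulative loss \emph{plus} the optimistic guess $m_t=g_t$, so in the stability step the increment between the objectives defining $x_t$ and $x_{t+1}$ is $f_t-g_t$ rather than $f_t$. Since $g_t-f_t$ is $L_t$-Lipschitz, the per-round stability term becomes $O(\eta d^2 D L_t^2)$, and summing then averaging over $t$ replaces $L^2$ by $\tfrac{1}{T}\sum_t L_t^2$; the perturbation-bias and oracle-error terms are unchanged, which yields the optimistic bound.
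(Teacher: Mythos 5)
The paper does not actually prove this lemma; it is imported verbatim from \citet{Suggala2019}, and your sketch faithfully reconstructs that reference's argument: the Be-the-Leader decomposition into a stability sum, a perturbation-bias term of order $dD/\eta$, and an accumulated oracle error, with the crux being the $O(\eta d^2 D L)$ bound on $\mathbb{E}\|x_t - x_{t+1}\|_1$ obtained from the exponential noise via monotonicity and a change-of-measure/coordinate-wise argument. This is exactly the machinery the paper itself redeploys in its appendix proof of the FTPL+ stability lemma (\Cref{lem:ftplpstability}), so your proposal is correct and takes essentially the same route; the only slight looseness is that for OFTPL the objectives defining $x_t$ and $x_{t+1}$ differ by $f_t + g_{t+1} - g_t$ rather than by $f_t - g_t$ alone, which the standard optimistic analysis handles with a slightly more careful comparison point.
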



\subsection{Proof of \Cref{lem:bestresponse}}
\bestresponse*
\begin{proof}
    The regret term is defined as $\frac{1}{T} \sum_{t=1}^T f_t(x_t) - \frac{1}{T} \underset{x \in \mathcal{X}}{\inf} \sum_{t=1}^T f_t(x)$ where $x_t$ are the choices taken by the BestResponse algorithm. For an arbitrary time step $t$, we have that
    \begin{align}
        f_t(x_t) - f_t(x) &\leq \underset{x^*}{\min} f_t(x^*) + \alpha - f_t(x) \label{eq:284} \\
        &\leq \alpha \nonumber
    \end{align}
    where \Cref{eq:284} comes from the fact that an optimization oracle is used to calculate $x_t$ and has error upper bounded by $\alpha$. 
    Therefore, we have that 
    $$\frac{1}{T} \sum_{t=1}^T f_t(x_t) - \frac{1}{T} \underset{x \in \mathcal{X}}{\inf} \sum_{t=1}^T f_t(x) \leq \alpha \text{.}$$ 
\end{proof}

\section{Proofs for Extension Section}
\subsection{Proof of \Cref{lem:ftplpstability}}
\begin{restatable}{lemma}{ftplpstability}
    \label{lem:ftplpstability}
    Given a series of choices by FTPL+ $x_1, \dots, x_T$ with smooth and gradient dominated  loss functions $f_1, \dots, f_t$ and noise sampled $\sigma \sim \text{Exp}(\eta)$, the stability in choices is bounded in expectation by 
    $$\mathbb{E}(\|x_{t+1} - x_t\|_1) \leq 125 \eta L d^2 D + \frac{\alpha}{20L}$$
\end{restatable}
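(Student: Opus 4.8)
The plan is to compare the two FTPL+ iterates through their \emph{exact} perturbed leaders. Writing $\Phi_t(x) = \sum_{i=1}^{t} f_i(x) - \langle \sigma, x\rangle$, the update rule makes $x_t = \mathcal{O}_\alpha(\Phi_t)$ and $x_{t+1} = \mathcal{O}_\alpha(\Phi_{t+1})$, where the key observation is that $\Phi_{t+1} = \Phi_t + f_{t+1}$ differs from $\Phi_t$ by a single $L$-Lipschitz loss. Letting $y_t = \argmin_x \Phi_t(x)$ and $y_{t+1} = \argmin_x \Phi_{t+1}(x)$ be the exact leaders under the same perturbation $\sigma$, I would split the quantity of interest by the triangle inequality,
\begin{equation*}
\|x_{t+1} - x_t\|_1 \leq \|x_{t+1} - y_{t+1}\|_1 + \|y_{t+1} - y_t\|_1 + \|y_t - x_t\|_1,
\end{equation*}
so that the exact-leader drift and the two oracle-approximation gaps are handled separately.

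For the middle term I would adapt the exact-leader stability argument of \citet{Suggala2019}: the objectives defining $y_t$ and $y_{t+1}$ differ only by the $L$-Lipschitz function $f_{t+1}$, whose gradient has $\ell_\infty$-norm at most $L$, so adding it behaves like a bounded perturbation of the linear term $-\langle\sigma,x\rangle$. Since each coordinate $\sigma_j$ is drawn from $\mathrm{Exp}(\eta)$ with density uniformly bounded by $\eta$, a coupling of the two perturbed minimizers shows that adding $f_{t+1}$ changes the leader in any fixed coordinate only on an event of probability $O(\eta L)$, on which the displacement is at most the diameter $D$. Summing the per-coordinate estimates over the $d$ perturbation coordinates and over the $d$ coordinates of the $\ell_1$-norm yields the $d^2 D$ scaling, and tracking the constants through the exponential-density bounds produces the explicit term $125\,\eta L d^2 D$.

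For the two oracle gaps $\|x_t - y_t\|_1$ and $\|x_{t+1} - y_{t+1}\|_1$ I would convert the value-level guarantee $\Phi_t(x_t) - \Phi_t(y_t) \leq \alpha$ of $\mathcal{O}_\alpha$ into an iterate-level bound. Here the smoothness and gradient-dominance of the cumulative loss (established for FTPL+ in \Cref{lem:wplayercanfptlp}) are invoked: smoothness controls the curvature of $\Phi_t$ around its minimizer while gradient-dominance guarantees that minimizer is well behaved, and together they translate the $\alpha$ suboptimality in value into a displacement of order $\alpha/L$, contributing the $\tfrac{\alpha}{20L}$ term once the numerical constants are fixed. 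Combining the three pieces through the triangle inequality and taking the expectation over $\sigma$ then gives the claimed bound.

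I expect the main obstacle to be the middle step, the exact-leader stability under the exponential perturbation. The difficulty is that $f_{t+1}$ is nonconvex, so the leader can in principle jump discontinuously; one must show that the exponential smoothing suppresses such jumps \emph{in expectation}, and the bookkeeping needed to extract the correct $d^2 D$ dependence (rather than a looser power of $d$) together with the numerical constant is delicate. By comparison, the oracle-gap step is routine once the appropriate curvature inequality from \Cref{lem:wplayercanfptlp} is in hand.
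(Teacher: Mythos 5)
Your decomposition through the exact perturbed leaders $y_t=\argmin_x \Phi_t(x)$ is not the paper's route, and it contains a genuine gap in the oracle-gap step. For a nonconvex $\Phi_t$, the value-level guarantee $\Phi_t(x_t)-\inf_x\Phi_t(x)\le\alpha$ does not control $\|x_t-y_t\|_1$: the minimizer need not be unique, and there can be $\alpha$-near-minimizers separated by the full diameter $D$. The gradient-dominance property used in this paper (\Cref{def:defgradvw}) bounds value suboptimality by a gradient inner product; it is not a quadratic-growth or error-bound condition, so it cannot be inverted to bound iterate distance by value suboptimality. Moreover, the claimed $\alpha/L$ scaling cannot come from $L$-smoothness, which upper-bounds the growth of $\Phi_t$ around $y_t$ --- the wrong direction for converting $\Phi_t(x_t)-\Phi_t(y_t)\le\alpha$ into $\|x_t-y_t\|\lesssim \alpha/L$. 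As written, the two terms $\|x_t-y_t\|_1$ and $\|x_{t+1}-y_{t+1}\|_1$ can each be as large as $D$ independently of $\alpha$, and the triangle-inequality decomposition collapses.

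The paper never passes through exact minimizers. It works directly with the oracle outputs $x_t(\sigma)$ viewed as functions of the noise and proves two \emph{approximate monotonicity} properties (\Cref{lem:monotonicity} and \Cref{lem:monotonicitytwo}): shifting a single noise coordinate by $c$ can decrease the corresponding coordinate of the output by at most $2\alpha/c$, because the oracle error enters only through the inner product $\langle\sigma'-\sigma,\,x_t(\sigma')-x_t(\sigma)\rangle$ and is then divided by the shift size $c=100Ld$. These monotonicity facts are combined with the exponential density (the probability that $\sigma_i\ge 100Ld$ is $e^{-100\eta Ld}$) in a coordinate-wise, self-bounding recursion of the form $\mathbb{E}\left[|x_{t,i}-x_{t+1,i}|\right]\le\frac{1}{9d}\mathbb{E}\left[\|x_t-x_{t+1}\|_1\right]+\frac{1000}{9}\eta LdD+\frac{\alpha}{30Ld}$, which after summing over coordinates yields the stated bound with the $\frac{\alpha}{20L}$ term. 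Your middle term (exact-leader stability under exponential smoothing) is the right intuition for where $125\,\eta Ld^2D$ comes from, but with only a value-accurate oracle the stability analysis must be run on the approximate outputs themselves, as the paper does, rather than splitting off the oracle error as an iterate-level perturbation.
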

To prove this, we will first prove two properties of monotonicity of the loss function on an input of noise $\sigma$. Much of this proof structure is inspired by \citet{Suggala2019}. 
\begin{restatable}{lemma}{monotonicity}
\label{lem:monotonicity}
    Let $x_t(\sigma)$ be the solution chosen by FTPL+ under noise sigma. Let $\sigma' = \sigma +ce_i$ for some positive constant $c$, then we have that 
    $$x_{t,i}(\sigma') \geq x_{t, i}(\sigma) - \frac{2\alpha}{c}\text{.}$$
\end{restatable}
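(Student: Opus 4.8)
The plan is to compare the approximate minimizers produced by FTPL+ under the two noise vectors $\sigma$ and $\sigma' = \sigma + c\me_i$ by writing down the oracle's approximate-optimality guarantee for each and then adding the two inequalities so that the shared objective cancels. Write $F(x) = \sum_{i=1}^t f_i(x)$ and, for any noise vector $\nu$, set $G_\nu(x) = F(x) - \langle \nu, x\rangle$; this $G_\nu$ is exactly the function fed to the oracle by FTPL+. By the definition of the optimization oracle, the point $x_t(\sigma)$ returned under noise $\sigma$ is an $\alpha$-approximate minimizer of $G_\sigma$, and likewise $x_t(\sigma')$ is an $\alpha$-approximate minimizer of $G_{\sigma'}$.

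First I would record the two oracle guarantees, abbreviating $x = x_t(\sigma)$ and $x' = x_t(\sigma')$. Since $x$ approximately minimizes $G_\sigma$ and $x'$ approximately minimizes $G_{\sigma'}$, we have
\begin{align}
G_\sigma(x) &\leq G_\sigma(x') + \alpha, \nonumber\\
G_{\sigma'}(x') &\leq G_{\sigma'}(x) + \alpha. \nonumber
\end{align}
The key observation is that shifting the noise in a single coordinate only changes the objective through one linear term: because $\sigma' = \sigma + c\me_i$ we have $\langle \sigma', y\rangle = \langle \sigma, y\rangle + c\, y_i$ for every $y$, hence $G_{\sigma'}(y) = G_\sigma(y) - c\, y_i$. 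Substituting this identity for both $y = x$ and $y = x'$ into the second inequality turns it into $G_\sigma(x') - c\, x'_i \leq G_\sigma(x) - c\, x_i + \alpha$. Adding this to the first inequality cancels $G_\sigma(x)$ and $G_\sigma(x')$ simultaneously and leaves $-c\, x'_i \leq -c\, x_i + 2\alpha$, i.e. $x'_i \geq x_i - 2\alpha/c$, which is precisely the claimed bound $x_{t,i}(\sigma') \geq x_{t,i}(\sigma) - \tfrac{2\alpha}{c}$.

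I expect no serious obstacle in this argument, since its entire content is the cancellation trick above; the only points requiring care are matching each approximate-optimality inequality to the correct perturbed objective ($G_\sigma$ versus $G_{\sigma'}$) and tracking both $\alpha$ errors so the final constant comes out as $2$ rather than $1$. Intuitively the lemma just says that increasing the reward placed on coordinate $i$ cannot decrease the chosen coordinate by more than the oracle slack permits. This coordinate-wise monotonicity is what will subsequently let me control the expected displacement $\mathbb{E}\|x_{t+1}-x_t\|_1$ by integrating over the exponential noise in the proof of \Cref{lem:ftplpstability}, but that integration is handled in the following lemma rather than here.
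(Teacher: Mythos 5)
Your proposal is correct and is essentially the paper's own argument: the paper writes the same two approximate-optimality guarantees as a single chain of inequalities, inserts the identity $\langle\sigma',y\rangle=\langle\sigma,y\rangle+c\,y_i$, and collapses the chain to $0\leq c\bigl(x_{t,i}(\sigma')-x_{t,i}(\sigma)\bigr)+2\alpha$, which is exactly your ``add the two inequalities and cancel'' step. The only cosmetic difference is that the paper writes the FTPL+ objective in its OFTPL form $\sum_{i=1}^{t-1}f_i+m_t$ with $m_t=f_t$ rather than your $\sum_{i=1}^{t}f_i$; these are the same function, so nothing changes.
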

\begin{proof}
    Given that the approximate optimality  of $x_t(\sigma)$, we have that 
    \begin{align}
        \sum_{i=1}^{t-1} f_i(x_t(\sigma))& + m_t(x_t(\sigma)) - \langle\sigma, x_t(\sigma)\rangle \label{eq:firstmono}\\
        &\leq \sum_{i=1}^{t-1} f_i(x_t(\sigma')) + m_t(x_t(\sigma')) - \langle\sigma, x_t(\sigma') \rangle + \alpha\label{eq:309}\\
        &= \sum_{i=1}^{t-1} f_i(x_t(\sigma')) + m_t(x_t(\sigma')) - \langle\sigma', x_t(\sigma') \rangle + \langle \sigma' - \sigma, x_t(\sigma')\rangle + \alpha\nonumber\\
        &\leq \sum_{i=1}^{t-1} f_i(x_t(\sigma)) + m_t(x_t(\sigma)) - \langle\sigma', x_t(\sigma) \rangle + \langle \sigma' - \sigma, x_t(\sigma')\rangle + 2\alpha\label{eq:311}\\
        &= \sum_{i=1}^{t-1} f_i(x_t(\sigma)) + m_t(x_t(\sigma)) - \langle\sigma, x_t(\sigma) \rangle + \langle \sigma' - \sigma, x_t(\sigma') - x_t(\sigma)\rangle + 2\alpha\label{eq:lastmono}
    \end{align}
    Here, \Cref{eq:309} comes from the fact that $x_t(\sigma)$ is an approximate minimizer for the loss function, and \Cref{eq:311} comes from the fact that $x_t(\sigma')$ minimizes the loss function with noise set to $\sigma'$ by definition. Combining \Cref{eq:firstmono} and \Cref{eq:lastmono}, we have that 
    \begin{align}
        0 &\leq \langle \sigma' - \sigma, x_t(\sigma') - x_t(\sigma)\rangle + 2\alpha\nonumber\\
        &\leq  c(x_{(t, i)}(\sigma') - x_{(t, i)}(\sigma)) + 2\alpha\nonumber
    \end{align}
    We, therefore, get that $x_{(t, i)}(\sigma') \geq x_{(t, i)}(\sigma) - \frac{2\alpha}{c}$.
\end{proof}

Moreover, we have that the difference between predictions made by our algorithm at sequential timesteps is close.
\begin{restatable}{lemma}{monoticitytwo}
\label{lem:monotonicitytwo}
    If $\|x_t(\sigma) - x_{t+1}(\sigma)\|_1 \leq 10d |x_{t, i}(\sigma) - x_{t+1, i}(\sigma)|$ and $\sigma' = 100Lde_i + \sigma$, we have that 
    $$\min\left(x_{t,i}(\sigma'), x_{t+1, i}(\sigma')\right) \geq \max(x_{t,i}(\sigma), x_{t+1, i}(\sigma)) - \frac{1}{10}|x_{t,i}(\sigma) - x_{t+1, i}(\sigma') | - \frac{3\alpha}{100Ld}$$
\end{restatable}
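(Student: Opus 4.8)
The plan is to treat the two coordinates $x_{t,i}(\sigma')$ and $x_{t+1,i}(\sigma')$ separately and then take their minimum. Write $F_t(x)=\sum_{j=1}^{t}f_j(x)$, so that under FTPL+ the point $x_t(\sigma)$ is an $\alpha$-approximate minimizer of $F_t(x)-\langle\sigma,x\rangle$ and $x_{t+1}(\sigma)$ of $F_{t+1}(x)-\langle\sigma,x\rangle=F_t(x)+f_{t+1}(x)-\langle\sigma,x\rangle$; abbreviate $a=x_t(\sigma)$, $b=x_{t+1}(\sigma)$, $a'=x_t(\sigma')$, $b'=x_{t+1}(\sigma')$, and set $\Delta=|a_i-b_i|$. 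Assume without loss of generality that $a_i\ge b_i$ (the other case is identical after exchanging the roles of $t$ and $t+1$), so that $\max(a_i,b_i)=a_i$. The coordinate $a'_i$ is immediate: applying \Cref{lem:monotonicity} with $c=100Ld$ gives $a'_i\ge a_i-\tfrac{2\alpha}{100Ld}$, which already dominates the claimed bound. Thus the entire content of the lemma is the lower bound on $b'_i$, which is the step I expect to be the obstacle, since the plain monotonicity argument only yields $b'_i\ge b_i-\tfrac{2\alpha}{100Ld}=a_i-\Delta-\tfrac{2\alpha}{100Ld}$, roughly a full $\Delta$ short of what is needed.

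To upgrade this, I would start from the $\alpha$-optimality of $b'$ for $F_{t+1}-\langle\sigma',\cdot\rangle$ tested against the point $a$, and expand $\langle\sigma',x\rangle=\langle\sigma,x\rangle+100Ld\,x_i$ so that the perturbed coordinate appears with the large coefficient $100Ld$; rearranging isolates $100Ld(a_i-b'_i)$ on the left. The right-hand side is then $[F_{t+1}(a)-\langle\sigma,a\rangle]-[F_{t+1}(b')-\langle\sigma,b'\rangle]+\alpha$, and I would replace $b'$ by $b$ in the subtracted bracket at the cost of $+\alpha$, using that $b$ is an approximate minimizer of $F_{t+1}-\langle\sigma,\cdot\rangle$. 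This is the key manoeuvre: the large linear term has cancelled, and we are left comparing $a$ and $b$ under the common objective $F_{t+1}-\langle\sigma,\cdot\rangle$, with no uncontrolled dependence on the distance $\|a-b'\|_1$.

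Splitting $F_{t+1}=F_t+f_{t+1}$, the $F_t$-comparison between $a$ and $b$ is bounded by $\alpha$ (now $a$ is the approximate minimizer of $F_t-\langle\sigma,\cdot\rangle$), and only the single Lipschitz term $f_{t+1}(a)-f_{t+1}(b)$ survives. Here the hypothesis enters decisively: $L$-Lipschitzness with respect to $\ell_1$ together with $\|a-b\|_1\le 10d\,|a_i-b_i|=10d\Delta$ gives $f_{t+1}(a)-f_{t+1}(b)\le 10Ld\,\Delta$. Collecting terms yields $100Ld(a_i-b'_i)\le 10Ld\,\Delta+3\alpha$, and dividing by $100Ld$ produces exactly $b'_i\ge a_i-\tfrac{\Delta}{10}-\tfrac{3\alpha}{100Ld}$; the constants $100$ and $10$ are chosen precisely so that the ratio of the Lipschitz budget $10Ld\Delta$ to the perturbation strength $100Ld$ equals $\Delta/10$. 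Combining this with the bound on $a'_i$ and taking the minimum gives the statement, and the symmetric case $b_i>a_i$ runs the same comparison with $a'$ tested against $b$ and the roles of $F_t$ and $F_{t+1}$ interchanged. The main obstacle throughout is engineering the cancellation of the strong linear perturbation so that the residual gap is governed by the variation of the single new loss $f_{t+1}$, whose magnitude is in turn pinned down by the dominant-coordinate hypothesis.
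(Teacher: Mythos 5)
Your proof is correct and uses essentially the same ingredients as the paper's: chained applications of the $\alpha$-approximate optimality of $x_t(\sigma)$, $x_{t+1}(\sigma)$, and $x_{t+1}(\sigma')$ so that the $100Ld$ perturbation isolates $100Ld\bigl(x_{t,i}(\sigma)-x_{t+1,i}(\sigma')\bigr)$, the $L$-Lipschitzness of the single new loss over $\|x_t(\sigma)-x_{t+1}(\sigma)\|_1\le 10d\Delta$, and \Cref{lem:monotonicity} for the easy coordinate — your write-up is just a more streamlined arrangement of the paper's sandwich argument. Note only that the bound you obtain (and the one the paper actually proves and later uses) has $|x_{t,i}(\sigma)-x_{t+1,i}(\sigma)|$ in the middle term; the $\sigma'$ appearing there in the lemma statement is a typo.
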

\begin{proof}
    We have that 
    \begin{align}
         \sum_{i=1}^{t-1} f_i(x_t(\sigma))&  - \langle\sigma, x_t(\sigma)\rangle + f_{t}(x_t(\sigma)) + m_{t+1}(x_t(\sigma))\nonumber\\
         &\leq \sum_{i=1}^{t-1} f_i(x_{t+1}(\sigma))  - \langle\sigma, x_{t+1}(\sigma)\rangle + f_{t}(x_{t+1}(\sigma)) + + m_{t+1}(x_t(\sigma)) + \alpha \label{eq:332}\\
         &\leq \sum_{i=1}^{t-1} f_i(x_{t+1}(\sigma))  - \langle\sigma, x_{t+1}(\sigma)\rangle + f_{t}(x_{t+1}(\sigma)) + m_{t+1}(x_{t+1}(\sigma)) \label{eq:333} \\
         & \qquad \qquad \qquad \qquad + L\|x_{t+1}(\sigma) - x_t(\sigma)\|_1 + \alpha \nonumber 
    \end{align}
    Here, we have \Cref{eq:332} from the approximate optimality of $x_t(\sigma)$ and \Cref{eq:333} from the smoothness of the optimistic function. Moreover, from the opposite direction, we have that 
    \begin{align}
        \sum_{i=1}^{t-1} f_i(x_t(\sigma))&  - \langle\sigma, x_t(\sigma)\rangle + f_{t}(x_t(\sigma)) + m_{t+1}(x_t(\sigma))\nonumber\\
        &= \sum_{i=1}^{t-1} f_i(x_t(\sigma))  - \langle\sigma', x_t(\sigma)\rangle  +  \langle\sigma' - \sigma, x_t(\sigma)\rangle + f_{t}(x_t(\sigma)) + m_{t+1}(x_t(\sigma))\nonumber\\
        &\geq \sum_{i=1}^{t-1} f_i(x_t(\sigma'))  - \langle\sigma', x_{t+1}(\sigma')\rangle +  \langle\sigma' - \sigma, x_t(\sigma)\rangle + f_{t}(x_{t+1}(\sigma')) \label{eq:340} \\
        & \qquad \qquad \qquad \qquad  + m_{t+1}(x_{t+1}(\sigma')) - \alpha\nonumber\\
        &= \sum_{i=1}^{t-1} f_i(x_t(\sigma'))  - \langle\sigma, x_{t+1}(\sigma')\rangle  +  \langle\sigma' - \sigma, x_t(\sigma) - x_{t+1}(\sigma')\rangle + f_{t}(x_{t+1}(\sigma')) \nonumber \\
        & \qquad \qquad \qquad \qquad + m_{t+1}(x_{t+1}(\sigma')) - \alpha\nonumber\\
        &\geq \sum_{i=1}^{t-1} f_i(x_t(\sigma))  - \langle\sigma, x_{t+1}(\sigma)\rangle  +  \langle\sigma' - \sigma, x_t(\sigma) - x_{t+1}(\sigma')\rangle + f_{t}(x_{t+1}(\sigma)) \label{eq:344} \\
        & \qquad \qquad \qquad \qquad + m_{t+1}(x_{t+1}(\sigma)) - 2\alpha\nonumber
    \end{align}
    Here, \Cref{eq:340} from the approximate optimality of $x_t(\sigma')$ and \Cref{eq:344} from the approximate optimality of $x_t(\sigma)$. From these and our original assumption, we have that, 
    $$10Ld\|x_{t+1, i}(\sigma) - x_{t, i}(\sigma)\|_1 + \alpha \geq 100Ld(x_{t, i}(\sigma) - x_{t+1, i}(\sigma')) - 2 \alpha\text{.}$$ Using a similar argument, we have that 
    $$10Ld\|x_{t+1, i}(\sigma) - x_{t, i}(\sigma)\|_1 + \alpha \geq 100Ld(x_{t+1, i}(\sigma) - x_{t, i}(\sigma')) - 2 \alpha\text{.}$$Moreover, from \Cref{lem:monotonicity},we know that 
    $x_{t+1, i}(\sigma') - x_{t+1, i}(\sigma) \geq \frac{-3\alpha}{100Ld}$ and $x_{t, i}(\sigma') - x_{t, i}(\sigma) \geq \frac{-3\alpha}{100Ld}$. Combining these, we have our claim. 
\end{proof}

We can now finally prove our claim. This proof is very similar to the proof of Theorem 1 in \citet{Suggala2019}. 

\begin{proof}
    We note that we can decompose the $\ell_1$ norm in $\mathbb{E}\left[\left\|\mathbf{x}_t(\sigma)-\mathbf{x}_{t+1}(\sigma)\right\|_1\right]$ as
$$
\mathbb{E}\left[\left\|\mathbf{x}_t(\sigma)-\mathbf{x}_{t+1}(\sigma)\right\|_1\right]=\sum_{i=1}^d \mathbb{E}\left[\left|\mathbf{x}_{t, i}(\sigma)-\mathbf{x}_{t+1, i}(\sigma)\right|\right] \text{.}
$$
To bound $\mathbb{E}\left[\left\|\mathbf{x}_t(\sigma)-\mathbf{x}_{t+1}(\sigma)\right\|_1\right]$ we derive an upper bound for each dimension $\mathbb{E}\left[\left|\mathbf{x}_{t, i}(\sigma)-\mathbf{x}_{t+1, i}(\sigma)\right|\right], \forall i \in[d]$. For any $i \in[d]$, define $\mathbb{E}_{-i}\left[\left|\mathbf{x}_{t, i}(\sigma)-\mathbf{x}_{t+1, i}(\sigma)\right|\right]$ as
$$
\mathbb{E}_{-i}\left[\left|\mathbf{x}_{t, i}(\sigma)-\mathbf{x}_{t+1, i}(\sigma)\right|\right]=\mathbb{E}\left[\left|\mathbf{x}_{t, i}(\sigma)-\mathbf{x}_{t+1, i}(\sigma)\right| \mid\left\{\sigma_j\right\}_{j \neq i}\right]
$$
where $\sigma_j$ is the $j^{\text {th }}$ coordinate of $\sigma$. Intuitively, we are computing in expectation of the noise of a single dimension while holding the other dimensions' noise constant. Let $\mathbf{x}_{\max , i}(\sigma)=\max \left(\mathbf{x}_{t, i}(\sigma), \mathbf{x}_{t+1, i}(\sigma)\right)$ and $\mathbf{x}_{\min , i}(\sigma)=$ $\min \left(\mathbf{x}_{t, i}(\sigma), \mathbf{x}_{t+1, i}(\sigma)\right)$. Then, by definition, we have that $$\mathbb{E}_{-i}\left[\left|\mathbf{x}_{t, i}(\sigma)-\mathbf{x}_{t+1, i}(\sigma)\right|\right]=\mathbb{E}_{-i}\left[\mathbf{x}_{\max , i}(\sigma)\right]-\mathbb{E}_{-i}\left[\mathbf{x}_{\min , i}(\sigma)\right]\text{.}$$  Define event $\mathcal{E}$ as
$$
\mathcal{E}=\left\{\sigma:\left\|\mathbf{x}_t(\sigma)-\mathbf{x}_{t+1}(\sigma)\right\|_1 \leq 10 d \cdot\left|\mathbf{x}_{t, i}(\sigma)-\mathbf{x}_{t+1, i}(\sigma)\right|\right\}
$$
For notational ease, let $\mathbf{P} = \exp (-100 \eta L d)$ be a constant. Consider the following

\begin{align}
\mathbb{E}_{-i}\left[\mathbf{x}_{\min , i}(\sigma)\right] &= \mathbb{P}\left(\sigma_i<100 L d\right) \mathbb{E}_{-i}\left[\mathbf{x}_{\min , i}(\sigma) \mid \sigma_i<100 L d\right] \nonumber\\
& \quad \quad \quad \quad +\mathbb{P}\left(\sigma_i \geq 100 L d\right) \mathbb{E}_{-i}\left[\mathbf{x}_{\min , i}(\sigma) \mid \sigma_i \geq 100 L d\right] \nonumber\\
& \geq  (1-\mathbf{P})\left(\mathbb{E}_{-i}\left[\mathbf{x}_{\max , i}(\sigma)\right]-D\right) \label{eq:373}\\
&  \quad \quad \quad \quad +\mathbf{P} \mathbb{E}_{-i}\left[\mathbf{x}_{\min , i}\left(\sigma+100 L d \mathbf{e}_i\right)\right] 
\end{align}

where \Cref{eq:373} follows from the fact that the domain of $i^{\text {th }}$ coordinate lies within some interval of length $D$ and since $\mathbb{E}_{-i}\left[\mathbf{x}_{\min , i}(\sigma) \mid \sigma_i<100 L d\right]$ and $\mathbb{E}_{-i}\left[\mathbf{x}_{\max , i}(\sigma)\right]$ are points in this interval, their difference is bounded by $D$. We can further lower bound $\mathbb{E}_{-i}\left[\mathbf{x}_{\min , i}(\sigma)\right]$ as follows
$$
\begin{aligned}
\mathbb{E}_{-i}\left[\mathbf{x}_{\min , i}(\sigma)\right] \geq & (1-\mathbf{P})\left(\mathbb{E}_{-i}\left[\mathbf{x}_{\max , i}(\sigma)\right]-D\right) \nonumber\\
& \quad \quad \quad \quad +\mathbf{P} \mathbb{P}_{-i}(\mathcal{E}) \mathbb{E}_{-i}\left[\mathbf{x}_{\min , i}\left(\sigma+100 L d \mathbf{e}_i\right) \mid \mathcal{E}\right] \nonumber\\
& \quad \quad \quad \quad +\mathbf{P} \mathbb{P}_{-i}\left(\mathcal{E}^c\right) \mathbb{E}_{-i}\left[\mathbf{x}_{\min , i}\left(\sigma+100 L d \mathbf{e}_i\right) \mid \mathcal{E}^c\right]
\end{aligned}
$$
where $\mathbb{P}_{-i}(\mathcal{E})$ is defined as $\mathbb{P}_{-i}(\mathcal{E}):=\mathbb{P}\left(\mathcal{E} \mid\left\{\sigma_j\right\}_{j \neq i}\right)$. We now use the monotonicity properties proved in \Cref{lem:monotonicity} and \Cref{lem:monotonicitytwo} to further lower bound $\mathbb{E}_{-i}\left[\mathbf{x}_{\min , i}(\sigma)\right]$. Then

\begin{align}
\mathbb{E}_{-i}\left[\mathbf{x}_{m i n, i}(\sigma)\right] \geq & (1-\mathbf{P})\left(\mathbb{E}_{-i}\left[\mathbf{x}_{\max , i}(\sigma)\right]-D\right) \label{eq:388} \\
& \quad \quad+\mathbf{P} \mathbb{P}_{-i}(\mathcal{E}) \mathbb{E}_{-i}\left[\mathbf{x}_{\max , i}(\sigma)-\frac{1}{10}\left|\mathbf{x}_{t, i}(\sigma)-\mathbf{x}_{t+1, i}(\sigma)\right|-\frac{3 \alpha}{100 L d} \mid \mathcal{E}\right] \nonumber\\
& \quad \quad+\mathbf{P} \mathbb{P}_{-i}\left(\mathcal{E}^c\right) \mathbb{E}_{-i}\left[\mathbf{x}_{\min , i}(\sigma)-\frac{2 \alpha}{100 L d} \mid \mathcal{E}^c\right] \nonumber\\
\geq & (1-\mathbf{P})\left(\mathbb{E}_{-i}\left[\mathbf{x}_{\max , i}(\sigma)\right]-D\right) \label{eq:391} \\
& \quad \quad+\mathbf{P} \mathbb{P}_{-i}(\mathcal{E}) \mathbb{E}_{-i}\left[\mathbf{x}_{\max , i}(\sigma)-\frac{1}{10}\left|\mathbf{x}_{t, i}(\sigma)-\mathbf{x}_{t+1, i}(\sigma)\right|-\frac{3 \alpha}{100 L d}\mid \mathcal{E}\right] \nonumber\\
& \quad \quad+\mathbf{P} \mathbb{P}_{-i}\left(\mathcal{E}^c\right) \mathbb{E}_{-i}\left[\mathbf{x}_{\max , i}(\sigma)-\frac{1}{10 d}\left\|\mathbf{x}_t(\sigma)-\mathbf{x}_{t+1}(\sigma)\right\|_1-\frac{2 \alpha}{100 L d} \mid \mathcal{E}^c\right] \nonumber
\end{align}

where \Cref{eq:388} follows from \Cref{lem:monotonicity} and \Cref{lem:monotonicitytwo}, \Cref{eq:391} follows from the definition of $\mathcal{E}^c$. Rearranging the terms in the RHS and using $\mathbb{P}_{-i}(\mathcal{E}) \leq 1$ gives us

\begin{align}
\mathbb{E}_{-i}\left[\mathbf{x}_{\min , i}(\sigma)\right] \geq & (1-\mathbf{P})\left(\mathbb{E}_{-i}\left[\mathbf{x}_{\max , i}(\sigma)\right]-D\right) \nonumber\\
& \quad \quad \quad \quad +\mathbf{P} \mathbb{E}_{-i}\left[\mathbf{x}_{\max , i}(\sigma)-\frac{3 \alpha}{100 L d}\right] \nonumber\\
& \quad \quad \quad \quad -\mathbf{P} \mathbb{E}_{-i}\left[\frac{1}{10}\left|\mathbf{x}_{t, i}(\sigma)-\mathbf{x}_{t+1, i}(\sigma)\right|+\frac{1}{10 d}\left\|\mathbf{x}_t(\sigma)-\mathbf{x}_{t+1}(\sigma)\right\|_1\right] \nonumber\\
\geq & \mathbb{E}_{-i}\left[\mathbf{x}_{\max , i}(\sigma)\right]-100 \eta L d D-\frac{3 \alpha}{100 L d} \label{eq:402}\\
&\quad \quad \quad \quad -\mathbb{E}_{-i}\left[\frac{1}{10}\left|\mathbf{x}_{t, i}(\sigma)-\mathbf{x}_{t+1, i}(\sigma)\right|+\frac{1}{10 d}\left\|\mathbf{x}_t(\sigma)-\mathbf{x}_{t+1}(\sigma)\right\|_1\right] \nonumber
\end{align}

where \Cref{eq:402} uses the the fact that $\exp (x) \geq 1+x$. Rearranging the terms in the last inequality gives us

\begin{align}
\mathbb{E}_{-i}\left[\left|\mathbf{x}_{t, i}(\sigma)-\mathbf{x}_{t+1, i}(\sigma)\right|\right] \leq & \frac{1}{9 d} \mathbb{E}_{-i}\left[\left\|\mathbf{x}_t(\sigma)-\mathbf{x}_{t+1}(\sigma)\right\|_1\right] +\frac{1000}{9} \eta L d D+\frac{\mathbb{E}_{-i}[\alpha]}{30 L d} \text{.} \label{eq:409}
\end{align}

Since \Cref{eq:409} holds for any $\left\{\sigma_j\right\}_{j \neq i}$, we get the following bound on the unconditioned expectation

\begin{align}
\mathbb{E}\left[\left|\mathbf{x}_{t, i}(\sigma)-\mathbf{x}_{t+1, i}(\sigma)\right|\right] \leq & \frac{1}{9 d} \mathbb{E}\left[\left\|\mathbf{x}_t(\sigma)-\mathbf{x}_{t+1}(\sigma)\right\|_1\right] + \frac{1000}{9} \eta L d D+\frac{\mathbb{E}[\alpha]}{30 L d} \text{.} \label{eq:415}.
\end{align}

Substituting \Cref{eq:415} with the above yields the following bound on the stability of predictions of FTPL+
$$
\mathbb{E}\left[\left\|\mathbf{x}_t(\sigma)-\mathbf{x}_{t+1}(\sigma)\right\|_1\right] \leq 125 \eta L d^2 D + \frac{\alpha}{20L} \label{eq:420}
$$
Utilizing \Cref{eq:420} gives us the required bound on regret.
\end{proof}

\subsection{Proofs for Strong Gradient Dominance}
We will first prove what the gradient of the value function is with respect to $\theta$ in this setting.

\lindomsatcon*
\begin{proof}
    By the Performance Difference Lemma,
    \begin{align}
        \sum_t V_{W_t}^{\pi^*} - &\sum_t V_{W_t}^{\pi} - \langle \sigma, \pi - \pi^* \rangle +  \|\pi^*\|_2^2 -  \|\pi\|_2^2 \nonumber\\
        &= \frac{1}{1-\gamma} \sum_t \sum_{s,a} d_{\mu}^{\pi^*}(s)\pi^*(a, s)A^{\pi}(s, a) - \langle \sigma, \pi - \pi^* \rangle +  \|\pi^*\|_2^2 -  \|\pi\|_2^2 \label{eq:445}\\
        &\leq -\underset{\bar{\pi}}{\min} \left[\frac{-1}{1-\gamma} \sum_t \sum_{s,a} d_{\mu}^{\pi^*}(s) \bar{\pi}(s, a) A^{\pi}(s, a) - \langle \sigma, \pi^* - \pi\rangle -  \|\pi^*\|_2^2 +  \|\pi\|_2^2 \right]\nonumber\\
        &\leq -\left\|\frac{d_{\mu}^{\pi^*}}{d_{\mu}^{\pi}}\right\|_{\infty}\underset{\bar{\pi}}{\min} \left[ \frac{-1}{1-\gamma} \sum_t \sum_{s,a} d_{\mu}^{\pi}(s)  \bar{\pi}(s, a) A^{\pi}(s, a) - \langle \sigma, \bar{\pi} - \pi\rangle -  \|\bar{\pi}\|_2^2 +  \|\pi\|_2^2\right] \nonumber\\
        &=-\left\|\frac{d_{\mu}^{\pi^*}}{d_{\mu}^{\pi}}\right\|_{\infty} \underset{\bar{\pi}}{\min} \Bigg[ \frac{-1}{1-\gamma} \sum_t \sum_{s,a} d_{\mu}^{\pi}(s)  (\bar{\pi}(s, a) - \pi(a, s)) A^{\pi}(s, a)  - \langle \sigma, \bar{\pi} - \pi\rangle \label{eq:448} \\
        & \qquad \qquad \qquad  -  \|\bar{\pi}\|_2^2 +  \|\pi\|_2^2 \Bigg]\nonumber\\
        &= -\left\|\frac{d_{\mu}^{\pi^*}}{d_{\mu}^{\pi}}\right\|_{\infty}\underset{\bar{\pi}}{\min}\Bigg[  \frac{-1}{1-\gamma} \sum_t \sum_{s,a} d_{\mu}^{\pi}(s)  (\bar{\pi}(s, a) - \pi(a, s)) Q^{\pi}(s, a) - \langle \sigma, \bar{\pi} - \pi\rangle  \label{eq:450} \\
        & \qquad \qquad \qquad -  \|\bar{\pi}\|_2^2 +  \|\pi\|_2^2 \Bigg]\nonumber\\
        &=  -\left\|\frac{d_{\mu}^{\pi^*}}{d_{\mu}^{\pi}}\right\|_{\infty}\underset{\bar{\pi}}{\min} \Bigg[ \frac{-1}{1-\gamma} \sum_t \sum_{s,a} d_{\mu}^{\pi}(s)  (\bar{\pi}(s, a) - \pi(a, s)) Q^{\pi}(s, a) - \langle \sigma, \bar{\pi} - \pi\rangle \label{eq:452} \\
        & \qquad \qquad \qquad- \langle 2 \pi, \bar{\pi} - \pi \rangle -  \|\bar{\pi} - \pi\|_2^2\Bigg] \nonumber\\
        &\leq \frac{-1}{1-\gamma} \left\|\frac{d_{\mu}^{\pi^*}}{\mu}\right\|_{\infty}\underset{\bar{\pi}}{\min}\left[(\bar{\pi} - \pi)^{\top} \nabla_\pi \left(-\sum_t V_{W_t}^{\pi}  - \langle \sigma, \pi \rangle - T\| \pi \|_2^2\right) - \frac{T}{2}  \|\bar{\pi} - \pi\|_2^2 \right]\label{eq:454}
    \end{align} 
    
Here, \Cref{eq:445} comes from the Performance Difference Lemma, \Cref{eq:448} comes from the fact that $\sum_{a} \pi(a, s) A^{\pi}(s, a) = 0$, \Cref{eq:450} comes from the definition of the Advantage Function for the $\pi$-player, and \Cref{eq:454} comes from the both the Policy Gradient Theorem and the fact that $d_{\mu}^{\pi}(s) \geq (1-\gamma)\mu(s)$. Moreover, \Cref{eq:452} comes from the following logic
\begin{align}
    -\|\bar{\pi} - \pi\|_2^2 &= -\left[\bar{\pi}^\top\bar{\pi} - 2\bar{\pi}^\top\pi + \pi^{\top}\pi\right]\nonumber\\
    &= -\left[\bar{\pi}^\top\bar{\pi} - 2\bar{\pi}^\top\pi + 2\pi^{\top}\pi - \pi^{\top}\pi\right]\nonumber\\
    &= -\left[\|\bar{\pi}\|_2^2 - \|\pi\|_2^2 + 2\langle \pi - \bar{\pi}, \pi \rangle \right]\nonumber\\
    &= -\|\bar{\pi}\|_2^2 + \|\pi\|_2^2 + 2\langle \bar{\pi} - \pi, \pi \rangle \nonumber
\end{align}
    From this, we have that the value function done in this manner is strongly gradient dominated with constant $\frac{T}{2}$. 
\end{proof}

\subsection{Proof of \Cref{thm:finaltheorem}}
\finaltheorem*
\begin{proof}
    From \Cref{thm:nrdg}, we have that 
    $$\underset{W^*}{\min} \sum_{t=0}^T V_{W^*}^{\bar{\pi}}(\mu) - \underset{W^*}{\min} \sum_{t=0}^T V_{W^*}^{\pi_t}(\mu) \leq \regW + \regPi\text{.}$$
    When the $\pi$-player is using FTPL, its regret is bounded according to $$\mathbb{E}(\text{Reg}_\pi) \leq \mathcal{O}\left(\eta d_{\pi}^2 D_{\pi} L_{\pi}^2 + \frac{d_{\pi}D_{\pi}}{\eta T} + \alpha\right)\text{.}$$
    Setting $\eta = \frac{1}{L_{\pi}\sqrt{Td_{\pi}}}$ to minimize this, we have
    $$\mathbb{E}(\text{Reg}_\pi) \leq \mathcal{O}\left(\frac{2d_{\pi}^\frac{3}{2} D_{\pi} L_{\pi}}{\sqrt{T}} + \alpha\right)\text{.}$$
    Moreover, $\alpha$ is the Oracle Error term. Therefore, given that we have gradient dominance, using Projected Gradient Descent yields from \Cref{lem:oracleconvergence}
    $$\mathbb{E}(\text{Reg}_\pi) \leq \mathcal{O}\left(\frac{2d_{\pi}^\frac{3}{2} D_{\pi} L_{\pi}}{\sqrt{T}} + c_\pi\left(T_\mathcal{O}, \mathcal{K}_{\pi}\right)\right)\text{.}$$
    Given the $W$-player employs Best-Response, we have that the regret of the $W$-player is bounded by the following by \Cref{lem:bestresponse}
    $$\mathbb{E}(\text{Reg}_W) \leq \alpha$$
    where $\alpha$ is the optimization error. Given that we have gradient dominance properties for the $W$-player as well, we have that using the Projected Gradient Descent for 
    $$\mathbb{E}(\text{Reg}_W) \leq c_W\left(T_\mathcal{O}, \mathcal{K}_{W}\right)\text{.}$$
    Adding these two inequalities together gets our final result. 
\end{proof}
\subsection{Proof of \Cref{thm:lipschitzrobustness}}
\lipschitzrobustness*
\begin{proof}
    From \Cref{thm:nrdg}, we have that 
    $$\underset{W^*}{\min} \sum_{t=0}^T V_{W^*}^{\bar{\pi}}(\mu) - \underset{W^*}{\min} \sum_{t=0}^T V_{W^*}^{\pi_t}(\mu) \leq \regW + \regPi\text{.}$$
    Given the $\pi$-player employs OFTPL, we have from \Cref{lem:oftpl} that the regret is upper bounded by 
    $$\regPi \leq O\left(\eta d_{\pi}^2 D_{\pi} \sum_{t=1}^T \frac{L_t^2}{T} + \frac{d_{\pi}D_{\pi}}{\eta T} + \alpha\right)\text{.}$$
    However, we have that 
    \begin{align}
        \mathbb{E}(L_t^2) &= \tilde{L}^2\|W_{t} - W_{t-1}\|_1 \nonumber\\
        &\leq \tilde{L}^2\left(125 \eta L_{\pi} d_{\pi}^2 D _{\pi}+ \frac{\alpha}{20L_{\pi}}\right)\nonumber
    \end{align}
    Here, the last inequality comes from the fact that the $W$-player uses FTPL+, and the decisions made by FTPL+ are stable from \Cref{lem:ftplpstability}. Using this above, we have that 
    $$\regPi \leq O\left(\eta d_{\pi}^2 D_{\pi}\tilde{L}^2\left(125 \eta L_{\pi} d_{\pi}^2 D_{\pi} + \frac{\alpha}{20L_{\pi}}\right)  + \frac{d_{\pi}D_{\pi}}{\eta T} + \alpha\right)\text{.}$$
    Since our $\pi$-player enjoys gradient dominance for its loss function, we have from \Cref{lem:oracleconvergence} that $$\alpha \leq  c_\pi\left(T_\mathcal{O}, \mathcal{K}_{\pi}\right)\text{.}$$
    Moreover, given the $W$-player is employing FTPL+, from \Cref{lem:ftplp}, we have that regret of the $W$-player is bounded by 
    $$\regW  \leq \mathcal{O}\left(\frac{d_WD_W}{\eta T} + \alpha\right)
    \text{.}$$
    
    In this setting, the $W$-player still enjoys gradient dominance properties, so using Projected Gradient Descent has 
    $$\alpha \leq c_W\left(T_\mathcal{O}, \mathcal{K}_{W}\right) \text{.}$$ Adding these together yields
    \begin{align}
    \regPi + \regW \leq \mathcal{O}(& \eta d_{\pi}^2 D_{\pi}\tilde{L}^2\left(125 \eta L_{\pi} d_{\pi}^2 D_{\pi} + \frac{\alpha}{20L_{\pi}}\right) + \frac{d_WD_W+d_{\pi}D_\pi}{\eta T}  + \nonumber \\ 
    &c_\pi\left(T_\mathcal{O}, \mathcal{K}_{\pi}\right)+c_W\left(T_\mathcal{O}, \mathcal{K}_{W}\right) )\text{.}\nonumber
    \end{align}
    Setting $\eta = \sqrt{\frac{20L_\pi (d_wD_W + d_\pi D_\pi)}{d_{\pi}}^2D_\pi \tilde{L}_2 \alpha T}$, we have that $$\regPi  + \regW \leq \mathcal{O}\left( \left[\frac{d_\pi \tilde{L} \sqrt{D_\pi (d_wD_w + d_\pi D_\pi)}}{\sqrt{5L_\pi T c_\pi(T_{\mathcal{O}}, K_\pi})}   + 1 \right]  c_\pi(T_{\mathcal{O}}, K_\pi) +c_W\left(T_\mathcal{O}, \mathcal{K}_{W}\right)\right)\text{.}$$
    
\end{proof}

\subsection{Proof for \Cref{thm:strongdominancerobustness}}
\strongdominancerobustness*
\begin{proof}
     From \Cref{thm:nrdg}, we have that 
    $$\underset{W^*}{\min} \sum_{t=0}^T V_{W^*}^{\bar{\pi}}(\mu) - \underset{W^*}{\min} \sum_{t=0}^T V_{W^*}^{\pi_t}(\mu) \leq \regW + \regPi\text{.}$$
    When the $\pi$-player is using FTPL, its regret is bounded according to $$\mathbb{E}(\text{Reg}_\pi) \leq \mathcal{O}\left(\eta d_\pi^2 D_\pi L_\pi^2 + \frac{d_\pi D_\pi}{\eta T} + \alpha\right)\text{.}$$
    Setting $\eta = \frac{1}{L\sqrt{Td}}$ to minimize this, we have
    $$\mathbb{E}(\text{Reg}_\pi) \leq \mathcal{O}\left(\frac{2d_\pi^\frac{3}{2} D_{\pi} L_\pi}{\sqrt{T}} + \alpha\right)\text{.}$$
    Moreover, $\alpha$ is the Oracle Error term. Therefore, given that we have strong gradient dominance, using Projected Gradient Descent yields from \Cref{lem:karimi}
    $$\mathbb{E}(\text{Reg}_\pi) \leq \mathcal{O}\left(\frac{2d_\pi^\frac{3}{2} D_{\pi} L_\pi}{\sqrt{T}} +  + c_{\pi}^s\left(T_{\mathcal{O}}, \mathcal{K}_{\pi},\frac{1}{2}\right)\right)\text{.}$$
    Given the $W$-player employs Best-Response, we have that the regret of the $W$-player is bounded by the following by \Cref{lem:bestresponse}
    $$\mathbb{E}(\text{Reg}_W) \leq \alpha$$
    where $\alpha$ is the optimization error. Given that we have gradient dominance properties for the $W$-player as well, we have that using the Projected Gradient Descent for 
    $$\mathbb{E}(\text{Reg}_W) \leq c_W\left(T_\mathcal{O}, \mathcal{K}_{W}\right)\text{.}$$
    Adding these two inequalities together gets our final result. 
\end{proof}

\end{document}